\documentclass{article}

\usepackage{physics}

\usepackage{microtype}
\usepackage{graphicx}
\usepackage{subcaption}
\usepackage{booktabs} 

\usepackage{hyperref}

\usepackage[preprint]{icml2026}

\usepackage{amsmath}
\usepackage{amssymb}
\usepackage{mathtools}
\usepackage{amsthm}
\usepackage[inline]{enumitem}

\usepackage{multirow}
\usepackage[table]{xcolor}

\usepackage[capitalize,noabbrev]{cleveref}

\usepackage[textsize=tiny]{todonotes}

\icmltitlerunning{Stochastic Interpolants in Hilbert Spaces}

\crefname{appendix}{Appendix}{Appendices}
\Crefname{appendix}{Appendix}{Appendices}
\usepackage{afterpage}

\usepackage{multirow}
\usepackage{siunitx} 
\usepackage{thmtools}
\usepackage[most]{tcolorbox}

\declaretheorem[name=Theorem]{theorem}

\declaretheorem[sibling=theorem, name=Lemma]{lemma}
\declaretheorem[sibling=theorem, name=Definition, style=definition]{definition}
\declaretheorem[name=Remark, style=remark, numberwithin=theorem]{remark}
\declaretheorem[sibling=theorem, name=Hypothesis]{hypothesis}
\crefname{hypothesis}{hypothesis}{hypotheses}
\Crefname{hypothesis}{Hypothesis}{Hypotheses}

\newcommand*{\tran}{^{\mkern-1.5mu\mathsf{T}}}
\newcommand{\onorm}[1]{{\left\vert\kern-0.25ex\left\vert\kern-0.25ex\left\vert #1
\right\vert\kern-0.25ex\right\vert\kern-0.25ex\right\vert}}

\usepackage{ifplatform}

\newif\ifintcolorbox
\intcolorboxfalse

\newlength{\boxtweak}
\newlength{\remarktweak}
\newtcolorbox{theorembox}{
  breakable,
  colback=AliceBlue,
  colframe=SteelBlue,
  boxrule=0.75pt,
  sharp corners,
  before skip balanced=6pt,
  after skip balanced=6pt,
  before upper={\vspace*{\boxtweak}\intcolorboxtrue},
  after upper={\intcolorboxfalse},
  top=4pt,
  bottom=4pt,
  left=6pt,
  right=6pt,
  parbox=false,
}

\newtcolorbox{definitionbox}{
  breakable,
  colback=Honeydew,
  colframe=SeaGreen,
  boxrule=0.75pt,
  sharp corners,
  before skip balanced=6pt,
  after skip balanced=6pt,
  before upper={\vspace*{\boxtweak}\intcolorboxtrue},
  after upper={\intcolorboxfalse},
  top=4pt,
  bottom=4pt,
  left=6pt,
  right=6pt,
  parbox=false,
}

\newtcolorbox{remarkbox}{
  breakable,
  colback=LavenderBlush,
  boxrule=0pt,
  opacityframe=0,
  sharp corners,
  before skip balanced=6pt,
  after skip balanced=6pt,
  before upper={\vspace*{\remarktweak}\intcolorboxtrue},
  after upper={\intcolorboxfalse},
  top=4pt,
  bottom=4pt,
  left=6pt,
  right=6pt,
  parbox=false,
}

\usepackage{afterpage}

\usepackage{standalone}
\standaloneconfig{mode=buildnew}
\usepackage{etoolbox}

\makeatletter

\def\@LN@depthbox{%
  \ifdim\@tempdima = -1000pt
  \else
  \dp\@tempboxa=\@tempdima
  \nointerlineskip \kern-\@tempdima
  \fi
  \box\@tempboxa
}
\begin{document}
\twocolumn[
  \icmltitle{Stochastic Interpolants in Hilbert Spaces}



  \icmlsetsymbol{equal}{*}

  \begin{icmlauthorlist}
    \icmlauthor{James B. Yu}{yyy}
    \icmlauthor{RuiKang OuYang}{yyy}
    \icmlauthor{Julien Horwood}{yyy}
    \icmlauthor{José Miguel Hernández-Lobato}{yyy}
  \end{icmlauthorlist}

  \icmlaffiliation{yyy}{Department of Engineering, University of Cambridge, Cambridge, United Kingdom}

  \icmlcorrespondingauthor{James B. Yu}{jby21@cam.ac.uk}
  \icmlcorrespondingauthor{RuiKang OuYang}{ro352@cam.ac.uk}

  \icmlkeywords{Machine Learning, Diffusion Models, Neural Operators, Infinite Dimensions, Hilbert Spaces ICML}

  \vskip 0.3in
]
\newcommand{\tony}[1]{\todo{\textcolor{MidnightBlue}{[\textbf{Tony}: #1]}}}
\newcommand{\james}[1]{\todo{\textcolor{MintCream}{[\textbf{James}: #1]}}}



\printAffiliationsAndNotice{}  

\begin{abstract}
    Although diffusion models have successfully extended to function-valued data, stochastic interpolants -- which offer a flexible way to bridge arbitrary distributions -- remain limited to finite-dimensional settings. This work bridges this gap by establishing a rigorous framework for stochastic interpolants in infinite-dimensional Hilbert spaces. We provide comprehensive theoretical foundations, including proofs of well-posedness and explicit error bounds. We demonstrate the effectiveness of the proposed framework for conditional generation, focusing particularly on complex PDE-based benchmarks. By enabling generative bridges between arbitrary functional distributions, our approach achieves state-of-the-art results, offering a powerful, general-purpose tool for scientific discovery.  
\end{abstract}

\section{Introduction}

The success of modern generative models is exemplified by diffusion models  (DMs; \citealp{ho2020denoisingdiffusionprobabilisticmodels,karras2022elucidating,song2021scorebasedgenerativemodelingstochastic}), which have achieved state-of-the-art performance in diverse domains. In many applications, such as image generation \citep{song2021scorebasedgenerativemodelingstochastic}, video synthesis \citep{ho2022imagen}, weather modelling \citep{pathak2022fourcastnet}, PDEs \citep{huang2024diffusionpde}, and Bayesian inverse problems \citep{cardoso2023monte,wu2023practical,yao2025guideddiffusionsamplingfunction}, the data are inherently function-valued. In these domains, the data are commonly discretised as a \(D\)-dimensional vector and finite-dimensional DMs are applied, without addressing the performance of such an approach as \(D \to \infty\): \citet{chen2023improved,de2022convergence} show that model performance may deteriorate as discretisation becomes finer and  \citet{stuart2010inverse} shows that a naive finite-dimensional approach may introduce undesirable pathologies, particularly for Bayesian inverse problems. To address this, recent work has shown promising success in formulating DMs directly in infinite dimensions and discretising at the last step, i.e. when implementing on a computer, with remarkable success in PDE-based Bayesian inverse problems \citep{yao2025guideddiffusionsamplingfunction}.

Stochastic interpolants (SIs; \citealp{albergo2023stochasticinterpolantsunifyingframework,albergo2023stochastic}) are a powerful and more flexible alternative to DMs, capable of bridging between two arbitrary and possibly coupled distributions on a strict finite time horizon with explicit control over the interpolation path. This flexibility has led to state-of-the-art performance across domains such as image generation \citep{ma2024sit}, materials discovery \citep{hollmer2025open} and fluid simulation \citep{genuist2025autoregressive}, outperforming DMs. Despite these compelling advantages, a rigorous generalisation of SIs to infinite-dimensional spaces remains a crucial and unaddressed research problem.

Our work closes this gap by establishing a framework for SIs directly in infinite dimensions. By enabling bridges between arbitrary functional distributions, our framework offers a powerful class of functional generative models with more flexibility and stability than existing methods. We evaluate our method on challenging PDE-based forward and inverse problems \citep{yao2025guideddiffusionsamplingfunction,huang2024diffusionpde}, achieving competitive or state-of-the-art results.

\paragraph{Contributions} Our primary contributions are as follows:
\begin{enumerate*}[label=(\roman*)]
  \item we formulate Stochastic Interpolants directly in infinite-dimensional settings, with a main focus on conditional generation through a conditional bridge;
  \item we establish sufficient conditions under which the framework is well-posed and satisfies critical theoretical guarantees; and
  \item we demonstrate our framework's effectiveness for solving PDE-based forward and inverse problems, achieving competitive or state-of-the-art results.
\end{enumerate*}

\section{Preliminaries}

\subsection{Stochastic Differential Equations and Stochastic Interpolants in finite-dimensional spaces}
\paragraph{SDEs in $\mathbb{R}^d$.} We consider a \textit{Stochastic Differential Equation} (SDE) in Euclidean space with termination time $t=1$, which
defines a stochastic process $(X_t)_{t\in[0,1]}$ satisfying
{\begin{align}
    \dd{X_t}=f(t, {X_t})dt + g(t)\dd{{W_t}},\quad\text{where }X_0\sim p_0, \label{eq:forward-sde-Rd}
\end{align}}%
where $(W_t)_{t\in[0,1]}$ is a standard Wiener process, $f$ is referred to as the \emph{drift}, and $g$ as the \emph{diffusion}.
We refer to \eqref{eq:forward-sde-Rd} as the \emph{forward SDE}. Under mild regularity on $f$ and $g$, its time-reversal, \textit{a.k.a.} the \textit{backward SDE}, can be written as
\begin{multline}
    \dd{X_t} = \left[f(t, {X_t}) - g(t)^2\nabla\log p_t({X_t})\right] dt \\+ g(t)d\tilde{W_t},\quad\text{where }X_1\sim p_1,\label{eq:backward-sde-Rd}
\end{multline}
where $(\tilde{W_t})_{t\in[0,1]}$ is a time-reversed Wiener process, $p_t$ is the law of $X_t$, and $\nabla\log p_t(x)$ \footnote{For simplicity, we write $\nabla_x \log p_t(x)$ as $\nabla \log p_t(x)$ without further comment.} is the marginal score.

\paragraph{Stochastic Interpolants.}
A \emph{Stochastic Interpolant} \citep[SIs][]{albergo2023stochasticinterpolantsunifyingframework} bridges two arbitrary distributions $p_0$ and $p_1$ on \(\mathbb{R}^{d}\). It is a stochastic process $(x_t)_{t\in[0,1]}$ of the form
\begin{equation}
    x_t=\alpha(t)x_0+\beta(t)x_1+\gamma(t)z,\label{eq:SI-sample-formula}
\end{equation}
where $\alpha$, $\beta$, and $\gamma$ are chosen such that $x_0$ and $x_1$ are recovered at $t=0$ and $t=1$ respectively, $z \perp (x_0, x_1)$ is standard Gaussian noise, and $(x_0, x_1)$ are random variables sampled from any coupling of $p_0$ and $p_1$. The SIs \((x_{t})_{t \in [0, 1]}\) defines a family of distributions \((p_{t})_{t \in [0, 1]}\). 

For any diffusion coefficient \(g(t) \geq 0\), one can construct a separate stochastic process \((X_{t})_{t \in [0, 1]}\) governed by the following forward- and backward-SDEs, whose marginal distribution at time \(t\) coincides with that of \(x_{t}\). These SDEs reconstruct the marginal flow \(p_{t}\) but generally define fundamentally distinct stochastic processes:
{
\fontsize{9.4pt}{10.7pt}\selectfont
\begin{align}
    \dd{X_t} &= \left(b(t,X_t) + \frac{g(t)^2}{2}\nabla\log p_t({X_t})\right)\dd t + g(t)\dd W_t,\label{eq:forward-sde-si-Rd}\\
    \dd{X_t} &= \left(b(t,X_t) - \frac{g(t)^2}{2}\nabla\log p_t({X_t})\right)\dd t + g(t)\dd\tilde{W}_t\label{eq:backward-sde-si-Rd},
\end{align}}%
where \cref{eq:forward-sde-si-Rd} starts from $X_0\sim p_0$, \cref{eq:backward-sde-si-Rd} starts from $X_1\sim p_1$, and 
{
\fontsize{9.4pt}{10.7pt}\selectfont
\begin{align}
    b(t, x)&=\underbrace{\mathbb{E}[\dot\alpha_t x_0+\dot\beta_t x_1|x_t=x]}_{:=\varphi(t, x)}+\dot\gamma(t)\underbrace{\mathbb{E}[z|x_t=x]}_{:=\eta(t, x)}.
\end{align}}%
Notice that the denoiser $\eta_t$ and the score $\nabla\log p_t$ are related as $\eta_t(x)=-\gamma(t)\nabla\log p_t(x)$. Therefore, to obtain neural networks to enable traversing between $p_0$ and $p_1$, one can train a velocity network $\varphi_\theta(x,t)$ and a denoiser network $\eta_\phi(x, t)$ by minimizing the losses
\begin{align}
    \mathcal{L}_\varphi(\theta)&=\mathbb{E}[w_\varphi(t)\|\dot\alpha(t)x_0+\dot\beta(t)x_1-v_\theta(x_t, t)\|^2],\label{eq:velocity-loss-in-Rd}\\
    \mathcal{L}_\eta(\phi)&=\mathbb{E}[w_\eta(t)\|z-\eta_\phi(x_t, t)\|^2],\label{eq:denoiser-loss-in-Rd}
\end{align}
where $w_v$ and $w_\eta$ are positive weighting functions, and the expectation is taken over $(x_0, x_1)$, and $z$ with $x_t$ defined by \cref{eq:SI-sample-formula} and time \(t\) sampled uniformly on \([0,1]\).

\subsection{Gaussian Measures and Stochastic Processes in Hilbert Spaces}
We briefly review fundamental preliminaries for generalising stochastic interpolants to infinite dimensions. For a more comprehensive treatment, we refer the reader to \citet{bogachev1998gaussian,kukush2020gaussian,da2014stochastic}.

\paragraph{Hilbert Spaces.} 
Throughout, we consider a real, separable Hilbert space \(H\), which is infinite-dimensional, equipped with inner product, and has countable orthonormal basis, allowing us to view functions as vectors. Hence, we use the terms \textit{vector} and \textit{function} interchangeably.

\paragraph{Gaussian Measures in Hilbert Spaces.}
A random variable \(X \in H\) is distributed according to a \textit{Gaussian measure}  if, for all \(f \in H\), the inner product \(\ev{f, x}_{H} \in \mathbb{R}\) is a Gaussian random variable. We denote the law of \(X\) by \(\operatorname{N}(m, C)\), where $m\in H$ is the mean of \(X\) and $C: H \to H$ is a \textit{covariance operator} defined as a bounded, self-adjoint, positive-semidefinite, linear operator, satisfying:
\begin{align}
  \ev{Cf, g}_{H} &= \operatorname{Cov}\qty[\ev{f, X}_{H}, \ev{g, X}_{H}] \\
  &=\mathop{\mathbb{E}}\qty[\ev{f - m, X}_{H} \ev{g - m, X}_{H}],
\end{align}
for all \(f, g \in H\). \(C\) is called \textit{trace class}, if \( \operatorname{Tr}(C) \coloneqq \sum_{n=1}^{\infty} \ev{C e_{n}, e_{n}}_{H} = \sum_{n=1}^{\infty} \lambda_{n} < \infty\) for some eigensystem \(\qty{e_{n}, \lambda_{n}}_{n=1}^{\infty}\) of \(C\).

This condition is critical in infinite dimensions: for a Gaussian to be supported on \(H\), the expected squared norm of its samples, \(\mathbb{E}[\|X\|^2_H]=\norm{m}^{2}_{H} + \operatorname{Tr}(C)\), must be finite.
A Gaussian that does not satisfy this is said to be \textit{non-trace-class}, and has samples almost-surely unbounded in norm and hence not in \(H\). To ensure samples are well-defined, we focus only on  Gaussians with trace-class covariance.

\paragraph{Cameron-Martin Spaces.}

The \textit{Cameron-Martin space}, \(H_{C}\), is the image of \(H\) under \(C^{\frac{1}{2}}\). This is an (infinite-dimensional) subspace of \(H\) and itself a Hilbert space when equipped with the inner product \(\ev{f, g}_{H_{C}} \coloneqq \ev{C^{-\frac{1}{2}}f, C^{-\frac{1}{2}}g}_{H}\). If \(C\) is trace class, the operator \(C^{-\frac{1}{2}}\)  is unbounded  on \(H\) and \(H_{C}\) is a strict and dense subspace of \(H\). Since the eigenvalues of \(C\) are typically lowest for high-frequency modes, this condition means that elements of \(H_{C}\) are fundamentally ``smoother'' as they are constrained to have little energy in their high-frequency components.

When \(C\) is trace-class, samples from \(N(0, C)\) are almost surely not in \(H_{C}\). Intuitively, samples from \(N(0,C) \) are too ``rough'' to count as part of the smaller subspace of smoother functions \(H_{C}\).

\paragraph{Stochastic Processes in $H$.} For a trace-class covariance operator \(C\), a \textit{\(C\)-Wiener process} \(W_{t}\) is an \(H\)-valued (its samples are in \(H\)) stochastic process with continuous trajectories and independent increments such that \(W_{0} = 0\) and the law of \(W_{t} - W_{s}\) is \(\operatorname{N}(0, (t - s) C)\) for \(t \geq s \geq 0\).

We consider \(H\)-valued SDEs on the time domain \([0, 1]\) of the following form:
\begin{equation}
  \dd{X_{t}} = f(t, X_{t}) \dd{t} + g(t) \dd{W_{t}}, \quad\text{where } X_{0} \sim \mu_{0}.\label{eqn:hsde}
\end{equation}
Drift \(f\) takes values in \(H\), \(g\) is scalar-valued, and \(\mu_{0}\) is a measure on \(H\).
There are two types of solutions to \Cref{eqn:hsde}. Intuitively, a \emph{weak solution} specifies the law of the process \(X_t\) and allows the underlying probability space and driving noise to vary. A \emph{strong solution} constructs \(X_t\) pathwise and is defined with respect to a fixed Wiener process \(W_{t}\) and initial condition \(X_{0}\). While the notion of weak solutions is sufficient for the purposes of generating samples, the uniqueness of strong solutions is more useful for our statement on Wasserstein bounds (\Cref{thm:w2}), where we employ a coupling argument. We refer the reader to \Cref{app:weak-and-strong} for definitions of weak and strong solutions, and to \citet{da2014stochastic,scheutzow2013stochastic} for an in-depth treatment.

\subsection{Challenges in Extending SIs to Infinite Dimensions} \label{sec:challenges}
To extend SIs to infinite dimensions, one should overcome the following challenges:
\paragraph{Choice of Gaussian Noise.} The Gaussian noise must have a trace-class covariance operator to ensure samples are well-defined, which rules out isotropic noise. 

\paragraph{No Lebesgue Measure.} Typically, in finite dimensions, densities are taken with respect to the Lebesgue measure. However, the Lebesgue measure does not exist in infinite dimensions. Crucially, this makes the score and hence the typical governing equations for the forward and time-reverse stochastic processes ill-defined. One might consider defining densities with respect to some reference Gaussian measure. However, due to the time-varying noise schedule, this approach faces a crucial obstacle stemming from the Feldman-Hajek theorem: on a Hilbert space, Gaussian measures are either equivalent or mutually singular, and in infinite dimensions a non-trivial rescaling of a trace-class covariance operator results in mutually singular measures \citep{bogachev1998gaussian}. Consequently, the laws of the interpolant at different times are not absolutely continuous with respect to any single reference Gaussian measure.


\paragraph{Well-Posedness of SDEs.} In finite dimensions, adding scaled noise to interpolated data has a regularising effect, ensuring the corresponding SDE is well-posed. This guarantee is lost in infinite dimensions, where the regularising effect of Gaussian noise on arbitrary measures is often insufficient. This can result in a drift term that is unbounded and/or non-Lipschitz, violating the conditions ensuring the uniqueness or even existence of solutions.


\section{A Framework of Stochastic Interpolants in Infinite Dimensions}
This section develops a framework in infinite-dimensional Hilbert spaces to establish a stochastic bridge between two arbitrary distributions which might be coupled. 

A natural question is whether Stochastic Interpolants in function spaces can be obtained as limits of finite-dimensional constructions under increasingly fine discretizations. However, as discussed in Section 2, such an approach is fundamentally problematic: finite-dimensional SIs rely on densities with respect to Lebesgue measure and isotropic Gaussian noise, neither of which admit a meaningful infinite-dimensional analogue. As the discretisation dimension grows, the resulting measures need not converge to a well-defined limit, and the associated score functions may become ill-posed or resolution-dependent \citep{franzese2025generative}. By formulating SIs directly in infinite-dimensional Hilbert spaces, we avoid taking limits of objects that are not stable under increasingly fine discretisation.

Additionally, while the finite-dimensional SIs proposed by \citet{albergo2023stochasticinterpolantsunifyingframework,albergo2023stochastic} can match source and target marginals, marginal correctness is insufficient when the two variables are statistically coupled. In particular, a marginal bridge does not guarantee that samples generated from a fixed source follow the correct conditional distribution of the target. This is problematic in inverse problems, where conditioning on the observed input is essential. 

Our framework extends \citet{albergo2023stochasticinterpolantsunifyingframework,albergo2023stochastic} in two distinct ways: 1) we prove the well-posedness of Stochastic Interpolants in infinite dimensions, and 2) we construct an explicit conditional bridge between arbitrarily coupled source and target distributions. The latter is particularly crucial for conditional generation, such as Bayesian inverse problems,  where conditioning variables are functionals.

Let \(H\) be a real, separable Hilbert space equipped with the inner product \(\ev{\cdot, \cdot}_{H}\). We denote probability measures by $\mu$. Whenever a measure admits a density with respect to a reference measure, we denote this density by $p$ and use the notations interchangeably. We consider the problem of bridging a source distribution $\mu_0$ and a target distribution $\mu_1$ on $H$.
Our framework is defined as follows:
\begin{definition}\label{dfn:stochint}%
Let $x_0$ be distributed according to a source measure $\mu_0$, and $x_1$ be distributed according to a target measure $\mu_1$.
A \textit{Hilbert Space Stochastic Interpolant} is an \(H\)-valued stochastic process \(\qty(x_{t})_{t \in [0, 1]}\) of the form
\begin{equation}
  x_{t} = \alpha(t) x_{0} + \beta(t) x_{1} + \gamma(t)z,\quad\text{where}
\end{equation}%
\begin{enumerate}[label=(\roman*)]
    \item the pair $(x_0,x_1)$ is sampled from a coupling measure $\pi$ on $H\times H$, whose marginals admit the source and target measures, i.e. \(\mu_{0}(\dd{x_{0}}) = \pi(\dd{x_{0}}\times H)\) and \(\mu_{1}(\dd{x_{1}}) = \pi(H \times \dd{x_{1}})\);
    \item the random variable \(z\) is independently drawn from a Gaussian measure \(\operatorname{N}(0, C)\), with \(C : H\to H\) a positive-definite trace-class covariance operator;
    \item The non-negative functions $\alpha, \beta \in C^1([0,1])$ and $\gamma \in C([0,1]) \cap C^1((0,1))$ satisfy the boundary conditions $\alpha(0)=\beta(1)=1$, $\alpha(1)=\beta(0)=\gamma(0)=\gamma(1)=0$, with $\gamma(t)>0$ for $t \in (0,1)$.
\end{enumerate}
\end{definition}
For conciseness, we denote the time derivative of the process as \(\dot{x}_{t}\coloneqq \dot{\alpha}(t) x_{0} + \dot{\beta}(t) x_{1} + \dot{\gamma}(t) z\). The following results focus on the forward process; the reverse process follows by symmetry (\Cref{sec:backwards}).

\subsection{Conditional Generation through Conditional Bridges}

To formulate the conditional bridge, we require specific notation for conditional measures. For \(\mu_{0}\)-almost every \(x_{0}\), we denote by \(\mu_{t \mid 0}( \cdot\mid x_{0})\) the distribution of \(x_{t}\) conditional on \(x_{0}\). An analogous definition applies to \(\mu_{t \mid 1}( \cdot\mid x_{1})\).

We now construct a stochastic process called the \textit{Conditional Bridge} (CB). This is in contrast with guidance-based score interpolantion (e.g. classifier-free guidance, \citealp{ho2022classifier}), as the latter is ill-defined in infinite dimensions due to measure-theoretic pathologies discussed in \Cref{sec:challenges}.
Conditional on an initial state \(X_{0} = x_{0}\) drawn from \(\mu_{0}\), this process targets the conditional distribution \(\mu_{1 \mid 0}(\cdot\mid x_{0})\). The dynamics are governed by the following \textit{Conditional Bridge SDE} (CB-SDE):
\begin{equation}
\dd{X_{t}} = f(t, x_0, X_t)\dd t+ \sqrt{2\varepsilon} \dd{W_{t}}, \quad X_{0} = x_{0}, \label{eqn:cbsde}
\end{equation}
where \(W_t\) is a $C$-Wiener process on \(H\) and \(\varepsilon > 0\) is a diffusion parameter. The drift coefficient \(f : [0, 1] \times H \times H \to H\) is given by:
\begin{multline}
f(t, x_0, x_t) = \varphi(t, x_{0}, x_t) \\+ \qty(\dot{\gamma}(t) - \frac{\varepsilon}{\gamma(t)}) \eta(t, x_{0}, x_t),\label{eqn:2comp}
\end{multline}
where the \textit{conditional-bridge velocity} \(\varphi\) and the \textit{conditional-bridge denoiser} \(\eta\) are defined as follows:
\begin{align}
\varphi(t, x_{0}, x_{t}) &= \mathop{\mathbb{E}}\qty[\dot{\alpha}(t) x_{0} + \dot{\beta}(t) x_{1} \mid x_{0}, x_{t}],\label{eq:conditional-bridge-velocity} \\
\eta(t, x_{0}, x_{t}) &= \mathop{\mathbb{E}}\qty[ z \mid x_0, x_{t}].\label{eq:conditional-bridge-denoiser}
\end{align}


Assuming the CB-SDE has a unique solution in law on a subinterval \([0, \overline{t}] \subseteq [0, 1]\) for \(\mu_{0}\)-almost every initial condition,  we let \(\rho_{t \mid 0}(\cdot, x_{0})\) be the law of \(X_{t}\) solved by \cref{eqn:cbsde} at time \(t \in [0, \overline{t}]\), conditional on \(X_{0} = x_{0}\).
Our first result shows that the CB-SDE (\ref{eqn:cbsde}) correctly transports \(x_{0} \sim \mu_{0}\) to a conditional target distribution \(\mu_{1 \mid 0}(\cdot \mid x_{0})\). 

\begin{restatable}{theorem}{restatethmcbsde} \label{thm:cbsde}
Suppose that for any \(\overline{t} \in (0, 1)\), the CB-SDE (\ref{eqn:cbsde}) admits a unique (in law) solution on \([0, \overline{t}]\) for \(\mu_{0}\)-every initial condition \(x_{0}\) and the associated infinite-dimensional Fokker-Planck equation is uniquely solvable. Then, for any \(t \in (0, 1)\), the law of \(X_{t}\) conditional on \(X_{0} = x_{0}\) coincides with the conditional interpolant distribution \(\mu_{t\mid0}(\cdot \mid x_{0})\).
\end{restatable}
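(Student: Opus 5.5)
The plan is to show that the conditional interpolant law \(\mu_{t\mid0}(\cdot\mid x_0)\) satisfies, in the weak sense, the same infinite-dimensional Fokker--Planck (Kolmogorov forward) equation as the law \(\rho_{t\mid0}(\cdot,x_0)\) of the CB-SDE solution. Both start from \(\delta_{x_0}\) at \(t=0\), so the assumed uniqueness of the Fokker--Planck equation identifies the two curves of measures on each \([0,\overline t]\). Since \(\overline t\in(0,1)\) is arbitrary, the identification holds for every \(t\in(0,1)\). Throughout we fix \(x_0\) in the \(\mu_0\)-full set where the conditional law \(\pi(\dd x_1\mid x_0)\) is defined and the SDE is well posed.

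\emph{Step 1 (the SDE side).} Work with cylindrical test functions \(\phi(x)=h(\ev{x,e_1}_H,\dots,\ev{x,e_n}_H)\), where \(h\in C_c^\infty(\mathbb{R}^n)\) and \((e_n)\) is the eigenbasis of \(C\). By It\^o's formula in \(H\), \(\rho_{t\mid0}\) satisfies
\begin{equation*}
\frac{\dd}{\dd t}\int\phi\,\dd\rho_{t\mid0}=\int\Big(\ev{f(t,x_0,x),D\phi(x)}_H+\varepsilon\operatorname{Tr}\big(C D^2\phi(x)\big)\Big)\rho_{t\mid0}(\dd x).
\end{equation*}
This weak form is the Fokker--Planck equation whose uniqueness we assume.

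\emph{Step 2 (the interpolant side).} Conditionally on \(x_0\), differentiate \(\mathbb{E}[\phi(x_t)\mid x_0]\) directly in \(t\in(0,1)\). Since \(\phi\) is smooth with bounded derivatives and \(\dot x_t\) is integrable on compact subintervals of \((0,1)\), dominated convergence gives
\[
\partial_t\mathbb{E}[\phi(x_t)\mid x_0]=\mathbb{E}[\ev{\dot x_t,D\phi(x_t)}_H\mid x_0].
\]
Conditioning additionally on \(x_t\) and using the tower property turns this into
\[
\mathbb{E}\big[\ev{\varphi(t,x_0,x_t)+\dot\gamma(t)\eta(t,x_0,x_t),D\phi(x_t)}_H\mid x_0\big].
\]
This is a transport (first-order) equation with velocity \(\varphi+\dot\gamma\eta\).

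\emph{Step 3 (Gaussian integration by parts; the key step).} Show that
\[
\gamma(t)\,\mathbb{E}\big[\ev{\eta(t,x_0,x_t),D\phi(x_t)}_H\mid x_0\big]=\mathbb{E}\big[\operatorname{Tr}(CD^2\phi(x_t))\mid x_0\big].
\]
To prove it, condition on \((x_0,x_1)\) and apply Gaussian integration by parts for \(z\sim\operatorname{N}(0,C)\):
\[
\mathbb{E}[\ev{z,D\phi(a+\gamma z)}_H]=\gamma\,\mathbb{E}[\operatorname{Tr}(CD^2\phi(a+\gamma z))].
\]
This is proved coordinatewise, using \(\ev{z,e_k}_H\sim\operatorname{N}(0,\lambda_k)\) independently across \(k\), and the cylindrical form of \(\phi\) keeps the sums finite. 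Then replace \(z\) by \(\eta=\mathbb{E}[z\mid x_0,x_t]\) via the tower property. This identity is what substitutes for the finite-dimensional relation \(\eta=-\gamma\nabla\log p_t\), which is unavailable here since there is no Lebesgue density. It lets us add and subtract \(\varepsilon\operatorname{Tr}(CD^2\phi)\):
\[
\mathbb{E}[\ev{\dot\gamma\eta,D\phi}_H]=\mathbb{E}\Big[\Big\langle\Big(\dot\gamma-\tfrac{\varepsilon}{\gamma}\Big)\eta,D\phi\Big\rangle_H\Big]+\varepsilon\,\mathbb{E}[\operatorname{Tr}(CD^2\phi)].
\]
This is exactly the weak equation of Step 1 with drift \(f\) from \eqref{eqn:2comp}. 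Division by \(\gamma(t)\) is legitimate only for \(t\in(0,1)\), which is why the statement excludes the endpoints.

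\emph{Step 4 (initial condition and conclusion).} As \(t\to0\), \(x_t\to x_0\) in \(L^1\) conditionally on \(x_0\), because \(\alpha,\beta,\gamma\) are continuous with \(\alpha(0)=1\) and \(\beta(0)=\gamma(0)=0\). Hence \(\mu_{t\mid0}(\cdot\mid x_0)\to\delta_{x_0}\) weakly, matching \(\rho_{0\mid0}=\delta_{x_0}\). Uniqueness of the Fokker--Planck equation on \([0,\overline t]\) then gives \(\rho_{t\mid0}=\mu_{t\mid0}\).

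The main obstacle is justifying Step 3 and the integrability in Steps 1--2 near \(t=0\). The coefficient \(\varepsilon/\gamma(t)\) blows up there, so the weak Fokker--Planck identity should be stated on \([s,\overline t]\) for \(s>0\) and the limit \(s\to0\) taken using the continuity from Step 4. The bound needed is \(\mathbb{E}\|\eta\|_H\le\mathbb{E}\|z\|_H\le\operatorname{Tr}(C)^{1/2}\), which holds by Jensen's inequality.
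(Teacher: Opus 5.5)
Your proposal is correct and follows the same route as the paper's proof. Both show that $\mu_{t\mid0}(\cdot\mid x_0)$ solves the same Fokker--Planck equation by three ingredients: the tower property, adding and subtracting $\frac{\varepsilon}{\gamma(t)}\eta$, and a coordinatewise Gaussian integration by parts in the eigenbasis of $C$. The paper uses exponential test functions $\phi(t)e^{i\langle x,h(t)\rangle_H}$ and the identity $\mathbb{E}[qe^{iq}]=iv\,\mathbb{E}[e^{iq}]$ where you use cylindrical functions and Stein's lemma. It also leaves the initial condition and the integrability near $t=0$ implicit, which you handle more carefully.

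One slip to fix: the displayed identity at the start of Step~3 has $\gamma(t)$ on the wrong side. It should read $\mathbb{E}[\langle\eta(t,x_0,x_t),D\phi(x_t)\rangle_H\mid x_0]=\gamma(t)\,\mathbb{E}[\operatorname{Tr}(CD^2\phi(x_t))\mid x_0]$. This is what your integration-by-parts formula and the tower property actually give, and it is what your add-and-subtract line uses.
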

\begin{proof}
See \Cref{prf:thm:cbsde}, where we show that \(\mu_{t \mid 0}(\cdot \mid x_{0})\) is also a solution to the Fokker-Planck equation associated with the CB-SDE when conditioning on \(X_{0} = x_{0}\).
\end{proof}

Uniqueness of the CB-SDE is established in \cref{sec:eu} under explicit sufficient conditions, while uniqueness of the associated infinite-dimensional Fokker-Planck equation is assumed, which is standard in the literature \citep{bogachev2010uniquenesssolutionsfokkerplanckequations}. Since the interpolant \(x_{t}\) converges almost surely to \(x_{1}\) as \(t \uparrow 1\), the conditional target distribution \(\mu_{1 \mid 0}(\cdot\mid x_{0})\) is recovered as the limit of \(\mu_{t \mid 0}(\cdot\mid x_{0})\). In practice, simulating for \(t\) sufficiently close to \(1\) yields accurate approximations of the target distribution.

Unlike the finite-dimensional stochastic interpolant framework \citep{albergo2023stochasticinterpolantsunifyingframework}, we keep the diffusion factor \(\varepsilon\) constant and not time-varying. This is in line with regularity conditions concerning the uniqueness of solutions to Fokker-Planck equations \citep{bogachev2009fokker,bogachev2010uniquenesssolutionsfokkerplanckequations}, which are conditions for \Cref{thm:cbsde}. However, \Cref{app:fixed-epsilon} explains why we do not consider this a limitation.

\subsection{Existence and Uniqueness of Strong Solutions}\label{sec:eu}
While \Cref{thm:cbsde} only requires existence and uniqueness of solutions to the CB-SDE in law, we focus on strong solutions in order to facilitate later analysis of Wasserstein distances between generated samples and the true target distribution (see \Cref{thm:w2}). In particular, strong solutions allow us to couple SDEs driven by the same Wiener process.

Establishing existence and uniqueness of strong solutions to the conditional bridge SDE is substantially more delicate in infinite dimensions than in finite-dimensional settings. Although the driving Wiener process \(W_{t}\) is Gaussian, its covariance operator \(C\) is trace-class and injects stochasticity only along directions determined by the Cameron-Martin geometry. Hence, the noise does not regularise the drift uniformly in the ambient Hilbert space \(H\), and classical well-posedness results based on Lipschitz continuity in the \(H\)-norm do not apply. Instead, the natural notion of regularity is Lipschitz continuity with respect to the stronger Cameron-Martin norm induced by \(C\). 

In this section, we identify two concrete and practically relevant conditions under which this anisotropic regularity holds, and use this \(H_{C}\)-Lipschitz regularity to establish existence and uniqueness of strong solutions on compact time intervals.

To this end, we consider two complementary assumptions on the data distribution. The first setting is motivated by Bayesian forward and inverse problems, where the data admit a density with respect to a reference Gaussian measure supported on the Cameron–Martin space.

\begin{hypothesis} \label{hyp:bayes}%
Let \(H_{C} \coloneqq C^{\frac{1}{2}}H\) be the Cameron-Martin space of \(C\). We suppose the following conditions hold.
\begin{enumerate}[label=(\roman*)]
  \item \label{hyp1.1} The law \(\pi\) of data \((x_0,x_1)\), is supported on the product space \(H_{C}^{2} \coloneqq H_{C} \times H_{C} \) and has zero mean and a density \(p : H^{2}_{C} \to \mathbb{R}_{\geq 0}\) w.r.t. a \textit{prior} Gaussian measure \(\operatorname{N}(0, Q)\) on \(H^{2}_{C}\), where \(Q\) is a positive-definite trace-class covariance operator on \(H_{C}^{2}\)
  \item \label{hyp1.3} The negative log-density \(\Phi \coloneqq - \log p\) is twice differentiable and strongly convex. 
\end{enumerate}
\end{hypothesis}
The second setting dispenses with the density assumption and instead requires bounded support of the data in the Cameron–Martin norm. This formulation is particularly well suited to scenarios in which the data are subject to geometric constraints, for instance when they lie on a low-dimensional manifold embedded in an infinite-dimensional function space \citep{pidstrigach2022score,de2022convergence}.

These two cases are stated in the following proposition:
\begin{restatable}{proposition}{restatelemmanifold}\label{lem:lipchitz-conditions}
The map \(x \mapsto f(t, x_{0}, x)\) (\cref{eqn:cbsde}) is Lipschitz continuous with respect to the Cameron-Martin norm, $\norm{\cdot}_{H_C}$. That is, for any \(t \in (0, 1)\) and \(x_{0}, x,y \in H\),
\begin{equation}
  \norm{f(t, x_{0}, x) - f(t, x_{0}, y)}_{H_{C}} \leq L(t) \norm{x - y}_{H_{C}},\label{eq:lipschitz-of-f}
\end{equation}
where the Lipschitz constant \(L(t)\) is finite on any compact subinterval of \((0, 1)\), provided that at least one of the following conditions holds:
\begin{enumerate}[label=(\roman*)]
    \item \Cref{hyp:bayes} is satisfied; or \label{setting:bayes}
    \item the law \(\mu_{1}\) of the target data \(x_{1}\) is supported on a bounded subset of \(H_{C}\) with respect to the \(H_{C}\)-norm.\label{setting:manifold}
\end{enumerate}
\end{restatable}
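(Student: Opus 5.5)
The plan is to first rewrite the drift as a single conditional expectation of \(x_1\) given \((x_0,x_t)\). Once \(x_0\) and \(x_t = x\) are fixed, we have \(z = (x - \alpha x_0 - \beta x_1)/\gamma\). Hence
\begin{equation*}
\eta(t,x_0,x) = \gamma(t)^{-1}\bigl(x - \alpha(t)x_0 - \beta(t) m(t,x_0,x)\bigr),
\qquad m(t,x_0,x) \coloneqq \mathbb{E}[x_1 \mid x_0, x_t = x].
\end{equation*}
Substituting this into \cref{eqn:2comp}, the drift becomes \(f = a(t)x + b(t)x_0 + c(t)\,m(t,x_0,x)\). The coefficients \(a,b,c\) are continuous on \((0,1)\) and built from \(\alpha,\beta,\gamma\), their derivatives, \(1/\gamma\) and \(\varepsilon\). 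Since \(\gamma>0\) on \((0,1)\), they are bounded on compact subintervals. The identity part is trivially \(H_C\)-Lipschitz with constant \(|a(t)|\), and the \(x_0\) term does not depend on \(x\). So everything reduces to showing that \(x\mapsto m(t,x_0,x)\) is \(H_C\)-Lipschitz, with a constant that is locally bounded in \(t\).

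The next step is an explicit form for \(m\). Conditional on \(x_0\), the variable \(x_t - \alpha x_0 - \beta x_1 = \gamma z\) has law \(\operatorname{N}(0,\gamma^2 C)\). For \(x_1\in H_C\), the shift \(\beta x_1\) lies in the Cameron–Martin space. Cameron–Martin then gives the Bayes formula
\begin{equation*}
m(t,x_0,x) = \frac{\int x_1 \exp\bigl(\tfrac{\beta}{\gamma^2}\langle x_1, x-\alpha x_0\rangle_{H_C} - \tfrac{\beta^2}{2\gamma^2}\|x_1\|_{H_C}^2\bigr)\,\mu_{1\mid0}(\dd x_1\mid x_0)}{\int \exp(\cdots)\,\mu_{1\mid 0}(\dd x_1\mid x_0)}.
\end{equation*}
The pairing \(\langle x_1, x\rangle_{H_C}\) is understood as the measurable linear extension \(\langle C^{-1/2}x_1, C^{-1/2}x\rangle\). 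It is genuinely defined whenever \(x\in H_C\), which is the relevant case since the Lipschitz bound is only meaningful there. Differentiating in direction \(h\) gives the Fréchet derivative \(Dm\,h = \tfrac{\beta}{\gamma^2}\operatorname{Cov}_{t}(x_1, \langle x_1,h\rangle_{H_C})\), a covariance under the tilted posterior \(\nu_{t,x_0,x}\). By the mean value inequality, \(\|m(x)-m(y)\|_{H_C}\le \tfrac{\beta}{\gamma^2}\sup_{\nu}\|\operatorname{Cov}_\nu\|_{H_C\to H_C}\|x-y\|_{H_C}\). The task is therefore to bound the posterior covariance operator in the \(H_C\) geometry, uniformly in \(x\).

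For case \ref{setting:manifold} the bound is immediate. If \(\|x_1\|_{H_C}\le R\) on the support, every tilted posterior is supported in the same ball. Then \(|\langle \operatorname{Cov}_\nu u, v\rangle_{H_C}| \le \mathbb{E}_\nu[|\langle x_1-\bar x_1,u\rangle_{H_C}||\langle x_1-\bar x_1,v\rangle_{H_C}|]\le 4R^2\|u\|\|v\|\), which yields \(L(t)\le |a(t)| + 4R^2|c(t)|\beta(t)/\gamma(t)^2\). Disintegrating with respect to \(x_0\) is harmless here, since the bound is uniform in \(x_0\).

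For case \ref{setting:bayes}, I would use strong log-concavity. The joint law has the form \(e^{-\Phi}\operatorname{N}(0,Q)\) with \(\Phi\) strongly convex. Conditioning on \(x_0\) and tilting by the Cameron–Martin likelihood \(\exp(\tfrac{\beta}{\gamma^2}\langle x_1,\cdot\rangle_{H_C}-\tfrac{\beta^2}{2\gamma^2}\|x_1\|^2_{H_C})\) adds the convex term \(\tfrac{\beta^2}{2\gamma^2}\|x_1\|_{H_C}^2\) and a linear term. The resulting posterior is strongly log-concave relative to a Gaussian, with convexity modulus \(\kappa + \beta^2/\gamma^2\) in the \(H_C\) metric, where \(\kappa\) is the strong convexity constant of \(\Phi\). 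An infinite-dimensional Brascamp–Lieb or Poincaré inequality (for example via finite-dimensional projections and a limiting argument) then bounds the covariance by the inverse Hessian: \(\|\operatorname{Cov}_\nu\|_{H_C\to H_C}\le (\kappa+\beta^2/\gamma^2)^{-1}\), uniformly in \(x\) and \(x_0\).

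I expect this Brascamp–Lieb step in infinite dimensions to be the main obstacle, and within it, matching the strong-convexity norm with the \(H_C\) norm. The Gaussian reference \(Q\) itself only contributes additional convexity, which helps. The plan is to prove the bound on finite-dimensional Galerkin projections, where it is classical, and pass to the limit using the uniform constant.
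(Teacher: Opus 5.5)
Your proposal is correct and follows the paper's proof essentially step for step. You use the same rewriting of \(f\) through \(\mathbb{E}[x_1\mid x_0,x_t=x]\) and the same identification of its Fréchet derivative as \(\tfrac{\beta(t)}{\gamma^2(t)}\) times a posterior covariance. In case (ii) you bound that covariance via bounded support, and in case (i) you apply Brascamp--Lieb on Galerkin projections with Hessian \(\succcurlyeq (k+\beta^2/\gamma^2)I\) and then pass to the limit, exactly as the paper does.

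The differences are minor and mostly favourable:
\begin{itemize}
\item Summing the two Lipschitz contributions is the correct combination; the paper takes their maximum.
\item Your \(4R^2\) versus the paper's \(R^2\) does not matter.
\item Your measurable-extension form of the Cameron--Martin likelihood is more careful than the paper's \(\|\alpha x_0+\beta x_1-x\|_{H_C}^2\).
\end{itemize}
One correction: you should not restrict to \(x\in H_C\). The existence proof evaluates the drift at points \(x\notin H_C\) with \(x-y\in H_C\). Your formula accommodates this, because you only differentiate along \(h\in H_C\).
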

\begin{proof}
We treat each of \ref{setting:bayes} and \ref{setting:manifold} in \Cref{prf:prp:bayes,prf:prp:manifold} respectively.
\end{proof}

\begin{remark}\label{rm:hc}
Both settings \ref{setting:bayes} and \ref{setting:manifold} rely on the assumption that the target data \(x_{1}\) lies in the Cameron-Martin space \(H_{C}\). In SIs, different candidate targets \(x_{1}', x_{1}''\) induce Gaussian likelihoods for \(x_{t}\) of the form \(\operatorname{N}(\alpha(t) x_{0} + \beta(t) x_{1}, \gamma^{2}(t) C)\). By the Cameron-Martin and Feldman-Hajek theorems \citep{bogachev1998gaussian,stuart2010inverse}, these Gaussian measures are equivalent if and only if \(x_{1}' - x_{1}'' \in H_{C}\); otherwise they are mutually singular. The restriction \(x_{1} \in H_{C}\) thus guarantees that the posterior distribution of \(x_{1}\) admits a well-defined density w.r.t. a common reference measure. This property is essential for establishing Lipschitz continuity of the drift defined via conditional expectations
.
\end{remark}

\begin{remark}
The Lipschitz constant \(L(t)\) diverges as \(t \to 1\) for any choice of \(\gamma\), reflecting the singular drift of diffusion bridge processes at the terminal time \citep{li2016generalised,pidstrigach2023infinite}. This behaviour is intrinsic to finite-time diffusion bridges and is not specific to our construction. Accordingly, we establish existence and uniqueness of strong solutions on any compact subinterval of \([0,1)\).
\end{remark}

Building on  the  Lipschitz-continuity properties for the drift coefficient, established in \cref{lem:lipchitz-conditions}, we can now address the existence and uniqueness of solutions to the CB-SDE (\ref{eqn:cbsde}). We first address the existence.
\begin{restatable}[Existence]{theorem}{restatethmexist} \label{thm:exist}
If there exists a \(\overline{t} \in (0, 1]\) such that%
\begin{enumerate}[label=(\roman*)]
    \item for each \(t \in (0, \overline{t})\) and \(\mu_{0}\)-almost every \(x_{0}\), the mapping \(x \mapsto f(t, x_{0}, x)\) is \(H_C\)-Lipschitz continuous; and
    \item the Lipschitz constant \(L(t)\) is continuous on \((0, \overline{t}]\) and its supremum, \(\sup_{t \in (0, \overline{t}]} L(t) \), is finite.
\end{enumerate} 
Then there exists a strong solution to the CB-SDE (\ref{eqn:cbsde}) on the time interval \([0, \overline{t}]\).
\end{restatable}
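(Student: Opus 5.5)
The plan is to reduce the CB-SDE (\ref{eqn:cbsde}) to a random ordinary differential equation posed in the Cameron--Martin space, and then solve that equation pathwise by Picard iteration. The reason for this reduction is that the Lipschitz hypothesis (i) only controls increments of \(f\) in directions lying in \(H_{C}\). The Wiener process \(W_{t}\) itself is almost surely not in \(H_{C}\), so we cannot run a classical \(H\)-valued fixed-point argument directly. Instead we split off the noise. Fix a probability space carrying a \(C\)-Wiener process \(W\) and an independent \(x_{0}\sim\mu_{0}\), and set \(Z_{t} \coloneqq x_{0} + \sqrt{2\varepsilon}\, W_{t}\). We then seek the solution in the form \(X_{t} = Z_{t} + Y_{t}\), where \(Y\) is an \(H_{C}\)-valued process with \(Y_{0}=0\). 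Substituting into (\ref{eqn:cbsde}), \(X\) is a strong solution if and only if, for almost every \(\omega\),
\begin{equation*}
  Y_{t} = \int_{0}^{t} f\bigl(s, x_{0}, Z_{s} + Y_{s}\bigr)\, \dd{s}, \qquad t \in [0, \overline{t}].
\end{equation*}
This is a deterministic integral equation once the noise path \(s \mapsto Z_{s}(\omega)\) is frozen.

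Next I would write \(f(s,x_{0},Z_{s}+Y_{s}) = f(s,x_{0},Z_{s}) + \bigl[f(s,x_{0},Z_{s}+Y_{s}) - f(s,x_{0},Z_{s})\bigr]\). By (i), the bracket is bounded in \(\norm{\cdot}_{H_{C}}\) by \(L(s)\norm{Y_{s}}_{H_{C}}\), since the difference of the two arguments is \(Y_{s}\in H_{C}\).

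Define the Picard map \(\mathcal{T}\) on \(E \coloneqq C([0,\overline{t}];H_{C})\) by \((\mathcal{T}Y)_{t} = \int_{0}^{t} f(s,x_{0},Z_{s}+Y_{s})\,\dd{s}\). By (ii), \(L^{*} \coloneqq \sup_{(0,\overline{t}]} L < \infty\), so
\begin{equation*}
  \norm{(\mathcal{T}Y)_{t} - (\mathcal{T}Y')_{t}}_{H_{C}} \le L^{*}\int_{0}^{t}\norm{Y_{s}-Y'_{s}}_{H_{C}}\,\dd{s}.
\end{equation*}
Either the weighted norm \(\sup_{t} e^{-2L^{*}t}\norm{Y_{t}}_{H_{C}}\) or the iterate bound \((L^{*}t)^{n}/n!\) makes \(\mathcal{T}\) (a power of it) a contraction. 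This yields a unique fixed point \(Y(\omega)\) for each admissible \(\omega\), and hence a unique solution pathwise among solutions of the form \(Z+Y\) with \(Y\in E\).

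Three further points remain.
\begin{enumerate*}[label=(\alph*)]
  \item \(\mathcal{T}\) maps \(E\) into \(E\). For this we need the base term \(s\mapsto f(s,x_{0},Z_{s})\) to be Bochner integrable in \(H_{C}\) on \([0,\overline{t}]\) almost surely.
  \item \(Y\) is adapted and jointly measurable. This follows because each Picard iterate depends on \(W\) only through \((W_{s})_{s\le t}\), is measurable, and converges uniformly in \(t\) almost surely; in particular \(Y_{t}\) is \(\sigma(x_{0},W_{s}:s\le t)\)-measurable, which is what makes \(X\) a \emph{strong} solution.
  \item Continuity of \(X\) in \(H\) follows from continuity of \(W\) and the continuous embedding \(H_{C}\hookrightarrow H\).
\end{enumerate*}

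I expect (a) to be the main obstacle. The first term of \(f\), the velocity \(\varphi\), is a conditional expectation of \(\dot\alpha x_{0}+\dot\beta x_{1}\); under either setting of \Cref{lem:lipchitz-conditions} it is naturally \(H_{C}\)-valued, but only if \(x_{0}\in H_{C}\) as well. The second term, the denoiser \(\eta = \mathop{\mathbb{E}}[z\mid x_{0},x_{t}]\), is a priori only \(H\)-valued, since \(z\notin H_{C}\) almost surely. My first attempt would be to rewrite the denoiser via the interpolant identity \(\gamma(t)\eta = x_{t} - \alpha(t)x_{0} - \beta(t)\mathop{\mathbb{E}}[x_{1}\mid x_{0},x_{t}]\). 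The non-\(H_{C}\) part of the drift is then the explicit linear term \(\bigl(\dot\gamma/\gamma - \varepsilon/\gamma^{2}\bigr)x_{t}\), and it can be absorbed into the reduction by replacing \(Z_{t}\) with the corresponding Ornstein--Uhlenbeck-type process \(Z_{t} = \Psi(t,0)x_{0} + \sqrt{2\varepsilon}\int_{0}^{t}\Psi(t,s)\,\dd{W_{s}}\), where \(\Psi\) is the scalar propagator of that linear term. The remaining drift is then \(H_{C}\)-valued, and its \(H_{C}\)-norm is controlled by the bounded support or strong-convexity assumptions.

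Near \(t=0\) this linear coefficient behaves like \(\dot\gamma/\gamma\) and may be singular. There I would rely on hypothesis (ii) (a finite supremum of \(L\) on \((0,\overline{t}]\)) together with \(\gamma(t)\to 0\), to check that the stochastic convolution stays well defined. Failing that, I would start the fixed point at a small positive time \(\delta\) and let \(\delta\downarrow 0\) using the uniform Gronwall bound.
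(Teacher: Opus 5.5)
Your skeleton is the same as the paper's. You subtract the noise, freeze $\omega$, and obtain $Y$ as a Banach fixed point in $C([0,\overline{t}];H_C)$. The paper gets the contraction by partitioning $[0,\overline{t}]$ so that $(\tau_k-\tau_{k-1})\sup L<1$ and restarting on each piece; your weighted norm does it in one sweep, and the difference is immaterial.

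The substantive point is your obstacle (a), and you are right to worry about it. The paper's proof simply declares $\Psi_{k,\omega}\colon B_k\to B_k$, and for the CB-SDE drift this is false. By \eqref{eqn:driftreexpressed},
$f(t,x_0,x)=\kappa(t)x+(\dot\alpha-\alpha\kappa)(t)\,x_0+(\dot\beta-\beta\kappa)(t)\,m(t,x)$,
with $\kappa=\dot\gamma/\gamma-\varepsilon/\gamma^2$ and $m(t,x)=\mathbb{E}[x_1\mid x_0,x_t=x]$.
\begin{itemize}
\item Modulo $H_C$-valued terms, the base term $\int_0^t f(s,x_0,x_0+\sqrt{2\varepsilon}W_s)\,\mathrm{d}s$ equals $\sqrt{2\varepsilon}\int_0^t\kappa(s)W_s\,\mathrm{d}s$ plus a multiple of $x_0$.
\item Given $x_0$, this is Gaussian with covariance $\sigma^2C$, where $\sigma>0$ unless $\kappa$ vanishes on $[0,t]$. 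It therefore lies outside $H_C$ almost surely.
\item For the paper's own choice $\gamma=\sqrt{bt(1-t)}$, $\varepsilon=b/2$, one has $\kappa(t)=-1/(1-t)\neq 0$.
\item Already for the pure OU drift $f=\kappa x$, the true solution has $X_t-x_0-\sqrt{2\varepsilon}W_t\notin H_C$. So the solution is not even in the space where the paper looks for it.
\end{itemize}

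Your repair is the right one. Take the OU-type base process $Z_t=\alpha(t)x_0+\sqrt{2\varepsilon}\int_0^t e^{\int_s^t\kappa}\,\mathrm{d}W_s$, which solves the whole affine part of the equation, including the $x_0$-term. Then solve $Y_t=\int_0^t\bigl[\kappa(s)Y_s+(\dot\beta-\beta\kappa)(s)\,m(s,Z_s+Y_s)\bigr]\mathrm{d}s$ for $H_C$-valued $Y$. The integrand equals $f(s,x_0,Z_s+Y_s)$ minus a $Y$-independent term, so differences in $Y$ are still controlled by exactly $L$, and the base term is now $H_C$-valued.

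What is still missing is that this repair cannot run on hypotheses (i)--(ii) alone, and your proposal does not say what replaces them. An $H_C$-Lipschitz bound says nothing about $f(s,x_0,Z_s)$ versus $f(s',x_0,Z_{s'})$, because $Z_s-Z_{s'}\notin H_C$. For example, a drift that is constant on each coset $x+H_C$ has $L\equiv 0$, yet is otherwise unconstrained along the noise path. So $H_C$-valuedness, measurability and integrability of the base term must come from the structure of $f$, and you should state these ingredients explicitly:
\begin{itemize}
\item[(1)] Use the decomposition above with the $x_0$-term absorbed into $Z$. Your sketch absorbs only the $x_t$-term, which is not enough when $x_0\notin H_C$; this is allowed in the bounded-support setting of \Cref{lem:lipchitz-conditions}.
\item[(2)] Use boundedness of $\kappa$ on $(0,\overline{t}]$. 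This is what (ii) actually provides, because the constants $L(t)$ of \Cref{lem:lipchitz-conditions} are maxima that contain $|\kappa(t)|$. It makes $e^{\int_s^t\kappa}$ bounded and $Z$ continuous in $H$, so your $\delta\downarrow 0$ fallback is unnecessary.
\item[(3)] Assume joint measurability of $(s,x)\mapsto m(s,x)$ into $H_C$, together with an integrable bound on $\|m(s,Z_s)\|_{H_C}$. The bound is immediate under bounded support, where it is at most $R$. These are what (a), and the measurability of your Picard iterates in (b), really rest on.
\end{itemize}
With these in place, your argument does give a strong solution. From (i)--(ii) as stated, neither your proposal nor the paper's proof does.
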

\begin{proof}[Proof]
See \Cref{prf:thm:exist}.
\end{proof}


Proving strong uniqueness in infinite dimensions is delicate due to the anisotropic action of the noise.  We provide one sufficient condition for strong uniqueness that is compatible with common diagonal covariance priors used in function-space modelling \citep{stuart2010inverse,da2014stochastic,dashti2017bayesian}.


\begin{restatable}[Uniqueness]{theorem}{restatethmuniq}\label{thm:uniq}
Assume the Lipschitz conditions of \Cref{thm:exist}.  If the target distribution \(\mu_1\) factorises along an eigenbasis of the covariance operator \(C\),  i.e. the coefficients \(\langle x_1, e_n\rangle\) are mutually independent, then the CB-SDE \eqref{eqn:cbsde} admits a unique strong solution on \([0,\overline t]\).
\end{restatable}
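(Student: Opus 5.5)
Since \Cref{thm:exist} already gives a strong solution on \([0,\overline t]\), the plan is to prove pathwise uniqueness and conclude by the Yamada--Watanabe theorem (in its Hilbert-space form, cf.\ \citet{da2014stochastic}) that the strong solution is unique. The factorisation hypothesis lets us reduce the infinite-dimensional problem to a countable family of decoupled scalar problems. Write \(x^{(n)} = \ev{x, e_n}_H\) for the coordinates in the eigenbasis \(\{e_n,\lambda_n\}\) of \(C\). Then \(W_t = \sum_n \sqrt{\lambda_n}\,\beta^{(n)}_t e_n\) with independent scalar Brownian motions \(\beta^{(n)}\). The interpolant is \(x_t^{(n)} = \alpha x_0^{(n)} + \beta x_1^{(n)} + \gamma z^{(n)}\) with independent \(z^{(n)}\sim \operatorname{N}(0,\lambda_n)\).

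\textbf{Step 1: the drift is diagonal.} We first check that, given \(x_0\), the drift acts coordinatewise: \(f^{(n)}(t,x_0,x) = f_n(t,x_0,x^{(n)})\). If the coordinates of \(x_1\) are mutually independent, and independent across \(n\) conditionally on \(x_0\), then the posterior of \(x_1\) given \((x_0,x_t)\) factorises over \(n\). The Gaussian likelihood \(\operatorname{N}(\alpha x_0+\beta x_1,\gamma^2 C)\) is itself diagonal in \(\{e_n\}\). Hence \(\mathbb{E}[x_1^{(n)}\mid x_0,x_t]\) depends only on \((x_0,x_t^{(n)})\). Since \(\gamma z = x_t - \alpha x_0 - \beta x_1\), the same holds for \(\mathbb{E}[z^{(n)}\mid x_0,x_t]\), and so for both \(\varphi\) and \(\eta\) in \eqref{eqn:2comp}. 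The most delicate point is making this rigorous: the factorisation must be compatible with the coupling \(\pi\), so I would read the hypothesis as factorisation of \(\mu_{1\mid0}(\cdot\mid x_0)\) along the eigenbasis. One then verifies the conditional-expectation identity by a cylinder-set argument, using Cameron--Martin densities of the finite-dimensional marginals and passing to the limit by martingale convergence.

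\textbf{Step 2: scalar Lipschitz bounds.} Restricting the \(H_C\)-Lipschitz bound of \Cref{thm:exist} to vectors differing only in the \(n\)-th coordinate gives
\[
|f_n(t,x_0,a)-f_n(t,x_0,b)|/\sqrt{\lambda_n}\le L(t)\,|a-b|/\sqrt{\lambda_n},
\]
so each \(f_n(t,x_0,\cdot)\) is \(L(t)\)-Lipschitz on \(\mathbb{R}\), uniformly in \(n\) and \(t\in(0,\overline t]\). This uses diagonality, because the \(H_C\)-norm of a single-coordinate vector is just \(|a|/\sqrt{\lambda_n}\).

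\textbf{Step 3: pathwise uniqueness.} Let \(X,Y\) be two solutions on the same space with the same \(W\) and \(X_0=Y_0=x_0\). Each coordinate satisfies the scalar SDE
\[
\dd X^{(n)}_t = f_n(t,x_0,X^{(n)}_t)\,\dd t+\sqrt{2\varepsilon\lambda_n}\,\dd\beta^{(n)}_t .
\]
Subtracting the equations for \(X\) and \(Y\) cancels the noise. Gronwall's inequality applied to \(|X^{(n)}_t-Y^{(n)}_t|\le \int_0^t L(s)|X^{(n)}_s-Y^{(n)}_s|\,\dd s\), with \(\sup L<\infty\), gives \(X^{(n)}=Y^{(n)}\) for all \(t\) almost surely. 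Taking a countable union of null sets over \(n\) yields \(X=Y\) in \(H\). Near \(t=0\) the Lipschitz constant may be undefined, but hypothesis (ii) of \Cref{thm:exist} bounds \(L\) on \((0,\overline t]\), and the integral starting at \(0\) is harmless.

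The main obstacle is Step 1. Without diagonality, the \(H\)-difference of solutions cannot be controlled by the \(H_C\)-Lipschitz constant, because \(X-Y\) need not lie in \(H_C\). The coordinate decoupling is exactly what circumvents this. If the rigorous factorisation proves awkward, a fallback is to truncate to the first \(N\) modes, prove uniqueness there, and note that the uniqueness argument is uniform in \(N\).
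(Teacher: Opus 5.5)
This is essentially the paper's proof: it also takes two solutions driven by the same \(W_t\) from the same \(x_0\), notes that only the projections \(P_N(X_t-X_t')\) onto the first \(N\) eigenvectors of \(C\) are guaranteed to lie in \(H_C\), applies Gr\"onwall to \(\|P_N(X_t-X_t')\|_{H_C}\) once the noise cancels, and lets \(N\to\infty\); your Step 3 does the same with single modes instead of blocks of modes. Your Step 1 makes explicit what the paper leaves implicit: its key inequality \(\|P_N(f(t,x_0,X_t)-f(t,x_0,X_t'))\|_{H_C}\le L(t)\|P_N(X_t-X_t')\|_{H_C}\) needs \(P_N f(t,x_0,x)\) to depend on \(x\) only through \(P_N x\), marginal independence under \(\mu_1\) does not guarantee this when \(\pi\) couples the modes through \(x_0\), so reading the hypothesis conditionally on \(x_0\) is correct, and your truncation fallback is the paper's route, so it does not avoid Step 1 either; the Yamada--Watanabe step is harmless but unnecessary, since the paper's ``unique strong solution'' is exactly pathwise uniqueness on the given setup.
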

\begin{proof}
See \Cref{prf:thm:uniq}.
\end{proof}


\section{Learning Conditional Bridges in Infinite Dimensions}
With well-posedness established, we now describe how to parameterize and learn the CB-SDE drift. Similarly to the finite-dimensional cases \citep[Section 2.4]{albergo2023stochasticinterpolantsunifyingframework}, we propose learning the \textit{velocity} \(\varphi\) and \textit{denoiser} \(\eta\) as two components forming the drift \(f(t, x_{0}, x)\) in \cref{eqn:2comp}, by optimizing losses generalized from  $\mathbb{R}^d$ (\cref{eq:velocity-loss-in-Rd,eq:denoiser-loss-in-Rd}) to $H$:
{
\fontsize{9.4pt}{10.7pt}\selectfont
\begin{align}
    \mathcal{L}_\varphi(\theta)&=\mathbb{E}[w_\varphi(t)\|\dot\alpha(t)x_0+\dot\beta(t)x_1-\varphi_\theta(x_t, t)\|^2_H],\label{eq:velocity-loss-in-hilbert}\\
    \mathcal{L}_\eta(\phi)&=\mathbb{E}[w_\eta(t)\|z-\eta_\phi(x_t, t)\|^2_H],\label{eq:denoiser-loss-in-hilbert}
\end{align}}%
where, the expectation is taken over $t$, $(x_0,x_1)$, $z$, and $w_\varphi$ and $w_\eta$ are positive weighting functions.

\Cref{prp:loss} in \Cref{prf:prp:loss} establishes that minimising the above losses (\cref{eq:velocity-loss-in-hilbert,eq:denoiser-loss-in-hilbert}) is equivalent to regressing the conditional-bridge velocity (\cref{eq:conditional-bridge-velocity}) and the conditional-bridge denoiser (\cref{eq:conditional-bridge-denoiser}), as they differ only by a finite constant. Although this equivalence is standard in finite dimensions, its extension to infinite-dimensional spaces is non-trivial, as it requires establishing well-defined \(H\)-valued conditional expectations and projection properties in the absence of a Lebesgue reference measure and under trace-class (anisotropic) Gaussian noise.

\subsection{Regularising Time Change}\label{sec:tc}
For arbitrary \(\varepsilon > 0\), the coefficient \(c(t) \coloneqq \dot{\gamma}(t) - {\varepsilon}/{\gamma(t)}\) on the denoiser \(\eta\) \cref{eqn:2comp} could diverge as \(t\) approaches the endpoints \(0\) and \(1\). This means that if \(\varphi\) and \(\eta\) are approximated with finite error, there is no finite guarantee on the approximation error of the overall drift, due to the  diverging coefficient \(c(t)\). To mitigate this, we introduce a change-of-time which cancels out the singularity introduced by \(c(t)\) and ensures that the coefficient on the denoiser is finite on the entire interval \([0, 1]\). The integrability of \(1/{\gamma(t)}\) on \([0, 1]\) is a crucial  condition allowing for such a construction. We state this in the following result.

\begin{restatable}{lemma}{restatelemtc}\label{lem:tc}
Let the coefficient \(c(t) \coloneqq \dot{\gamma}(t) - {\varepsilon}/{\gamma(t)}\). Suppose the improper integral \(\int_{0}^{1} {1}/{\gamma(t)} \dd{t}\) is finite and the product \(\dot{\gamma}(t) \gamma(t) \) has a (unique) continuous extension on \([0, 1]\). Then, there exists a strictly increasing, bijective, continuously differentiable time change \(\theta(t) : [0,1] \leftrightarrow [0, 1]\) such that the time-transformed coefficient
\begin{equation}
  \hat{c}(t) \coloneqq c(\theta(t))\dot{\theta}(t) = \qty(\dot{\gamma}(\theta(t)) - \frac{\varepsilon}{\gamma(\theta(t))})\dot{\theta}(t), \label{eqn:chat}
\end{equation}
defined for \(t \in (0, 1)\), has a continuous extension on the compact interval \([0, 1]\).
\end{restatable}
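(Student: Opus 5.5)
The plan is to choose the time change so that \(\dot{\theta}(t)\) is proportional to \(\gamma(\theta(t))\). This multiplies the singular term \(\varepsilon/\gamma\) by \(\gamma\), cancelling it, and turns the \(\dot\gamma\) term into the product \(\dot\gamma\gamma\), which extends continuously by hypothesis. Concretely, let \(Z \coloneqq \int_0^1 \gamma(u)^{-1}\dd{u}\), which is finite and positive by assumption. Define
\begin{equation*}
  s(t) \coloneqq \frac{1}{Z}\int_0^t \frac{\dd{u}}{\gamma(u)}, \qquad t\in[0,1],
\end{equation*}
and take \(\theta \coloneqq s^{-1}\).

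The first step is to check that \(s\) is a bijection \([0,1]\to[0,1]\). Since \(1/\gamma\) is positive and continuous on \((0,1)\) and improperly integrable, \(s\) is continuous on \([0,1]\) with \(s(0)=0\) and \(s(1)=1\). It is strictly increasing because its integrand is strictly positive on \((0,1)\). It is \(C^1\) on \((0,1)\) with \(\dot s(t) = 1/(Z\gamma(t))>0\), using \(\gamma\in C((0,1))\) and \(\gamma>0\) there. Hence \(\theta=s^{-1}\) is a continuous, strictly increasing bijection of \([0,1]\). By the inverse function theorem it is \(C^1\) on \((0,1)\), with
\begin{equation*}
  \dot\theta(t) = \frac{1}{\dot s(\theta(t))} = Z\,\gamma(\theta(t)).
\end{equation*}

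The second step is to handle differentiability of \(\theta\) at the endpoints, which I expect to be the only delicate point. The right-hand side \(Z\gamma(\theta(t))\) is a composition of continuous functions on \([0,1]\), since \(\gamma\in C([0,1])\). It tends to \(Z\gamma(0)=0\) as \(t\downarrow 0\) and to \(Z\gamma(1)=0\) as \(t\uparrow 1\). Because \(\theta\) is continuous on \([0,1]\) and \(C^1\) on the interior, the mean value theorem shows that the one-sided difference quotients at \(0\) and \(1\) converge to these limits. Thus \(\theta\) is differentiable at the endpoints with \(\dot\theta(0)=\dot\theta(1)=0\), and \(\theta\in C^1([0,1])\). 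The vanishing derivative at the endpoints is harmless, since the statement only asks that \(\theta\) be \(C^1\), strictly increasing and bijective, not that its inverse be \(C^1\).

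Finally, I substitute into \cref{eqn:chat}. For \(t\in(0,1)\) we have \(\theta(t)\in(0,1)\), and
\begin{equation*}
  \hat c(t) = \qty(\dot\gamma(\theta(t)) - \frac{\varepsilon}{\gamma(\theta(t))}) Z\gamma(\theta(t)) = Z\qty(\dot\gamma(\theta(t))\gamma(\theta(t)) - \varepsilon).
\end{equation*}
Let \(h\) denote the continuous extension of \(\dot\gamma\gamma\) to \([0,1]\). Then \(t\mapsto Z(h(\theta(t))-\varepsilon)\) is continuous on \([0,1]\), being a composition of continuous maps, and it agrees with \(\hat c\) on \((0,1)\). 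This gives the required continuous extension, with endpoint values \(Z(h(0)-\varepsilon)\) and \(Z(h(1)-\varepsilon)\).
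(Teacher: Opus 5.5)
Your proposal is correct and is essentially the paper's proof. The paper takes \(s(t) = \frac{1}{C}\int_0^t \varepsilon/\gamma(\sigma)\, d\sigma\) with \(C = \int_0^1 \varepsilon/\gamma\), which is exactly your \(s\) since \(C = \varepsilon Z\); it then sets \(\theta = s^{-1}\) and obtains \(\hat c(t) = \frac{C}{\varepsilon}\dot\gamma(\theta(t))\gamma(\theta(t)) - C = Z\bigl(\dot\gamma(\theta(t))\gamma(\theta(t)) - \varepsilon\bigr)\).

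Your mean-value-theorem check that \(\theta\) is differentiable at the endpoints, with \(\dot\theta(0) = \dot\theta(1) = 0\), adds rigour the paper lacks. The paper only establishes \(C^1\) on \((0,1)\), and loosely calls \(s\) continuously differentiable on \([0,1]\), which fails because \(\dot s\) blows up at the endpoints.
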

\begin{proof}
See \Cref{prf:lem:tc}.
\end{proof}
The time-changed stochastic process \(\hat{X}_{t} \coloneqq X_{\theta(t)}\) satisfies the following SDE, which we call the \emph{Time-changed Conditional Bridge SDE} (TC-CB-SDE):
\begin{equation}
\dd{\hat{X}_{t}} \coloneqq f(\theta(t), X_{0}, \hat{X}_{t}) \dot{\theta}(t)\dd t + \sqrt{2\varepsilon \dot{\theta}(t)} \dd{ \hat{W}_{t}}, \label{eqn:tccbsde}
\end{equation}
where \(\hat{W}_{t}\)  is a \(C\)-Wiener process. Since \(\theta\) is strictly increasing and bijective on \([0, 1]\), the TC-CB-SDE has a unique strong solution on \([0, \theta^{-1}(\overline{t})]\) as long as \(X_{t}\) has a unique solution on \([0, \overline{t}]\). Intuitively, the reparameterisation slows down time near the original singularities, causing the time-changed process to spend more ``simulation time'' at the endpoints and hence regularise the drift. The benefits of the TC-CB-SDE are two-fold. First, it improves numerical stability by preventing training errors from being amplified by the singular drift. Second, it is crucial for deriving a finite Wasserstein error bound, which we present next.




\subsection{Error bounds with Wasserstein-2 Distance}
Having established conditions under which our framework is well-posed, we now quantify how approximation errors in the learned velocity and denoiser fields affect the distribution of generated samples. In particular, we show that the Wasserstein--2 distance between the true conditional law $\mu_{t\mid 0}(\cdot \mid x_0)$ and its approximation $\tilde{\mu}_{t\mid 0}(\cdot \mid x_0)$ is controlled directly by the mean-squared errors of the learned velocity and denoiser. Consequently, as these fields are learned increasingly accurately, the induced generative distribution converges to the target distribution.

The Wasserstein-2 distance ($\mathcal{W}_2$) measures the distance between two measures and is defined as
\begin{equation}
\mathcal{W}_2^2(\pi_0,\pi_1)=\inf_{\pi_{\times}} \int_{H^{2}} \norm{x - y}^{2}_{H} \pi_{\times}(\dd{(x, y)}),
\end{equation}
where the infimum is taken over the space of joint measures \(\pi_{\times}\), which marginalise on \(\pi_{0}\) and \(\pi_{1}\), on \(H^{2}\).

Our bound exploits the regularity in the drift coefficient introduced by the change-in-time, which ensures that a finite approximation error in \(\widetilde{\varphi}\) and \(\widetilde{\eta}\) translates to a finite approximation error in the overall drift.

\begin{restatable}{theorem}{restatetheoremw}\label{thm:w2}
Let \(\widetilde{\varphi}\) and \(\widetilde{\eta}\) be the approximations of \(\varphi\) and \(\eta\) respectively, and $\widetilde{X}_t$ the solution of the corresponding CB-SDE starting from a given $X_0=x_0$. Suppose that
\begin{enumerate}[label=(\roman*)]
    \item \(\gamma(t)\) and \(c(t)\) satisfy the conditions in \Cref{lem:tc};
    \item for all $t \in [0, 1], x_{0} \in H$, \(\widetilde{\varphi}(t, x_{0}, x)\) and \(\widetilde{\eta}(t, x_{0}, x)\) are Lipschitz continuous w.r.t. $x$ in \(H\)-norm with time-dependent Lipschitz constant \({L}(t)\);
    \item The CB-SDE has a unique strong solution \(X_{t}\) on \([0, \overline{t}] \subseteq [0, 1]\) with sufficient conditions stated in \Cref{lem:lipchitz-conditions}.
\end{enumerate}
Then the $\mathcal{W}_2$ distance between $\mu_{t|0}(\cdot \mid x_0)$ and the corresponding measure $\widetilde{\mu}_{t|0}(\cdot \mid x_0)$, for any $t\in[0,\bar{t}]$, is upper bounded as follows:
\begin{multline}
    \mathcal{W}_2^2(\widetilde{\mu}_{t|0}(\cdot \mid x_{0}), \mu_{t|0}(\cdot \mid x_{0}))
  \leq\\ e^{C(t)} \int_{0}^{t} (A(s) +c^{2}(s)B(s))\dot\theta(\theta^{-1}(s))\dd s,\label{eqn:w2}
\end{multline}
where
\begin{align}
    C(t) &= \max_{s\in[0,t]} 2L(s)\dot\theta(s)(1+|c(\theta(s))|)<\infty\\
    A(s) &= \mathbb{E}\left[\norm{\tilde\varphi(s, x_0, x_s)-\varphi(s, x_0, x_s)}^2_H\right]\\
    B(s) &= \mathbb{E}\left[\norm{\tilde\eta(s, x_0, x_s)-\eta(s, x_0, x_s)}^2_H\right]
\end{align}
\end{restatable}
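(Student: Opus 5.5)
We argue by synchronous coupling in the time-changed clock of \Cref{lem:tc}. Fix \(x_0\) and let \(\hat X_t = X_{\theta(t)}\) and \(\hat{\widetilde X}_t = \widetilde X_{\theta(t)}\) solve the TC-CB-SDE (\ref{eqn:tccbsde}) with the true drift \(f\) and the approximate drift \(\widetilde f = \widetilde\varphi + c\,\widetilde\eta\). Both are driven by the same \(C\)-Wiener process \(\hat W_t\) and start from the same \(x_0\). Hypothesis (iii) provides a strong solution for the true equation. For the approximate equation, the \(H\)-Lipschitz assumption (ii) together with the bounded time-changed coefficient \(\hat c\) gives a strong solution by standard results. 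Since the two processes share the same multiplicative noise \(\sqrt{2\varepsilon\dot\theta(t)}\,\dd\hat W_t\), the noise cancels in the difference \(D_t \coloneqq \hat{\widetilde X}_t - \hat X_t\). Hence \(D_t\) is an \(H\)-valued process of bounded variation with \(D_0 = 0\) and
\begin{equation*}
\dot D_t = \dot\theta(t)\bigl(\widetilde f(\theta(t),x_0,\hat{\widetilde X}_t) - f(\theta(t),x_0,\hat X_t)\bigr).
\end{equation*}
The law of \((\hat{\widetilde X}_t,\hat X_t)\) is an admissible coupling, so \(\mathcal W_2^2 \le \mathbb E\|D_{\theta^{-1}(t)}\|_H^2\). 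It therefore suffices to bound \(\mathbb E\|D_u\|_H^2\).

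We split the drift difference by adding and subtracting \(\widetilde f\) evaluated at the true process:
\begin{equation*}
\widetilde f(\cdot,\hat{\widetilde X}) - f(\cdot,\hat X) = \bigl[\widetilde f(\cdot,\hat{\widetilde X}) - \widetilde f(\cdot,\hat X)\bigr] + \bigl[(\widetilde\varphi - \varphi) + c\,(\widetilde\eta - \eta)\bigr](\cdot,\hat X).
\end{equation*}
By (ii), the first bracket is bounded in \(H\)-norm by \(L(\theta(s))(1+|c(\theta(s))|)\|D_s\|_H\). We then differentiate \(\|D_s\|_H^2\) and apply Cauchy–Schwarz and Young's inequality \(2ab \le a^2 + b^2\) to the cross term with the second bracket. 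This gives
\begin{equation*}
\frac{\dd}{\dd s}\|D_s\|^2_H \le \dot\theta(s)\bigl(2L(\theta(s))(1+|c(\theta(s))|)+1\bigr)\|D_s\|^2_H + 2\dot\theta(s)\bigl(\|\Delta\varphi\|^2_H + c^2\|\Delta\eta\|^2_H\bigr),
\end{equation*}
where \(\Delta\varphi\) and \(\Delta\eta\) are evaluated at \((\theta(s),x_0,\hat X_s)\). The extra constants from Young's inequality are absorbed into the exponent, possibly by weighting the Young split with \(\dot\theta\). Taking expectations uses that \(\hat X_s = X_{\theta(s)}\) has law \(\mu_{\theta(s)\mid 0}(\cdot\mid x_0)\) by \Cref{thm:cbsde}. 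Consequently \(\mathbb E\|\Delta\varphi\|^2 = A(\theta(s))\) and \(\mathbb E\|\Delta\eta\|^2 = B(\theta(s))\), read conditionally on \(x_0\).

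Grönwall's inequality on \([0,\theta^{-1}(t)]\) bounds the exponential factor by \(e^{C(t)}\). The factor \(\dot\theta(s)\,L\,(1+|c|)\) is bounded because \(\hat c = c(\theta)\dot\theta\) extends continuously to \([0,1]\) by \Cref{lem:tc}, and because \(\dot\theta\) is continuous; this is exactly why \(C(t) < \infty\). In the forcing term we substitute \(r = \theta(s)\), so that \(\dd s = \dd r/\dot\theta(\theta^{-1}(r))\). One factor of \(\dot\theta\) cancels, and writing the remaining weight back in the original clock produces \(\int_0^t (A + c^2 B)(r)\,\dot\theta(\theta^{-1}(r))\,\dd r\). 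We must be careful with how the powers of \(\dot\theta\) combine; I expect the weight to come out as one power and am willing to adjust the Young split so that it matches \eqref{eqn:w2}.

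The main obstacle is controlling \(c^2 B\) near the endpoints, where \(c\) blows up. Boundedness of the Grönwall exponent relies entirely on the time change. The forcing term, by contrast, keeps the raw \(c^2(r)\), and its finiteness needs \(c(r)^2\dot\theta(\theta^{-1}(r))\) to be integrable. To handle this, I will write \(c^2(\theta(s))\dot\theta(s)^2 = \hat c(s)^2\) before changing variables, and track the powers of \(\dot\theta\) carefully so that the integrand stays bounded. A secondary technical point is justifying the pathwise differentiation of \(\|D_s\|^2\) in \(H\). This is legitimate because \(D\) solves a random ODE with no stochastic integral, so no Itô correction arises. We also need the interchange of expectation and integral, which follows from Tonelli's theorem since all integrands are nonnegative.
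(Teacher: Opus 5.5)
Your route is the paper's: synchronous coupling of the two time-changed equations, cancellation of the common noise, the split into a propagation term (controlled by the \(H\)-Lipschitz constant, with rate \(2\dot\theta L(1+|c|)\) kept bounded through \(\hat c\)) and a training term (Cauchy--Schwarz and Young), Grönwall, and the substitution \(r=\theta(s)\). The one step that does not work as written is your Young split. You weight it by \(\dot\theta\), i.e.\ \(2\dot\theta\|D_s\|_H\|\Delta f\|_H\le\dot\theta\|D_s\|_H^2+\dot\theta\|\Delta f\|_H^2\). This leaves a single power of \(\dot\theta\) on the forcing, and \(\mathrm{d}r=\dot\theta(s)\,\mathrm{d}s\) consumes that power entirely. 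After Grönwall, your displayed inequality therefore gives the forcing term \(2\int_0^t (A+c^2B)(r)\,\mathrm{d}r\), with no weight \(\dot\theta(\theta^{-1}(r))\). With this split the whole computation, exponent included, is exactly the original-clock estimate, so the time change does nothing. The forcing then diverges in general: for \(\gamma=\sqrt{bt(1-t)}\) one has \(c^2(r)\sim(1-r)^{-1}\) near \(r=1\). Your last paragraph's plan to use \(c^2(\theta(s))\dot\theta(s)^2=\hat c(s)^2\) is inconsistent with your own display, which never produces \(\dot\theta^2\).

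The fix, which is what the paper does, is the unweighted split
\[ 2\bigl\langle D_s,\dot\theta(s)\,\Delta f\bigr\rangle_H\le\|D_s\|_H^2+\dot\theta(s)^2\|\Delta f\|_H^2 . \]
The Grönwall rate becomes \(2\dot\theta L(1+|c|)+1\), and the extra \(+1\) costs only \(\theta^{-1}(t)\le 1\) in the exponent. The forcing becomes \(2\dot\theta^2A(\theta)+2\hat c^{\,2}B(\theta)\), which is bounded in the new clock. After \(r=\theta(s)\), one factor of \(\dot\theta\) is absorbed and the other survives as the weight \(\dot\theta(\theta^{-1}(r))\), so \(c^2B\) enters only through the integrable combination \(c^2(r)\dot\theta(\theta^{-1}(r))\,\mathrm{d}r=\hat c(s)^2\,\mathrm{d}s\). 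Equivalently, in the original clock the time change is just the Young weight \(2ab\le w^{-1}a^2+wb^2\) with \(w(r)=\dot\theta(\theta^{-1}(r))\), whose cost \(\int_0^t w^{-1}\,\mathrm{d}r=\theta^{-1}(t)\) is bounded.

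On constants: your factor \(2\) from \(\|a+b\|^2\le 2\|a\|^2+2\|b\|^2\) is correct, and the paper's training-error step silently drops it. The paper also ends its proof with \(C(t)\) redefined to include a \(+1\). So this Grönwall argument reproduces the stated bound only up to absolute constants (a factor \(2\) in front or an additive constant in the exponent), not with literally coefficient \(1\) and \(e^{C(t)}\). Aim to match it up to such constants rather than tuning the Young split to hit them exactly.
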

\begin{proof} The full proof is presented in \Cref{prf:thm:w2}. 
\end{proof}

This bound requires Lipschitz continuity of the \textit{learned} models, a condition satisfied by our neural operator architectures \citep{li2020fourier,rahman2022u}. Lipschitz continuity guarantees the \textit{approximated} SDE is well-posed on the entire \([0, 1]\) time domain, as its time-changed drift is uniformly Lipschitz. We can therefore simulate over the full interval, even though the true drift is singular at \(t=1\).

\section{Related Work}
\paragraph{DMs in Infinite Dimensions.} Recent work in generalising DMs to infinite dimensions has shown strong performance in functional settings \citep{franzese2025generative}: \citet{lim2023score} establish viability of DMs with general forward SDEs by deriving a time-reversal formula in infinite dimensions, while \citep{pidstrigach2023infinite} demonstrate that re-writing score as a conditional expectation using a generalisation of Tweedie's formula \citep{efron2011tweedie,yao2025guideddiffusionsamplingfunction} avoids needing a time-uniform reference measure in infinite dimensions. \citet{yao2025guideddiffusionsamplingfunction} employ a similar approach, proving Tweedie's formula more generally in infinite dimensions, and demonstrate state-of-the-art results on Bayesian forward and inverse problems.

\paragraph{Stochastic bridges.} Stochastic bridges have also been considered in infinite dimensions: \citep{yang2024infinite,baker2024conditioning} derive a time-reversal formula to reverse a diffusion bridge, but only bridge between predefined start and end points. \citet{park2024stochastic} bridge between arbitrary source and target distributions via stochastic optimal control, but consider only time-independent stochastic dynamics and require an on-path training objective which is computationally demanding and difficult to scale to larger datasets. 

\paragraph{SIs with Coupled Data.}
\citet{albergo2023stochastic} show SIs trained in a setting where source and target data are coupled can greatly simplify sampling trajectories. This exploits the flexibility of SIs as a stochastic bridge between arbitrary distributions, incorporating conditional information via paired data rather than the continuous embeddings used in DMs. However, their mathematical framework does not explicitly guarantee a \textit{conditional bridge} in which samples from the target distribution can be explicitly taken conditional on a starting sample from a coupled source distribution.

\paragraph{Forward and Inverse Problems.} DMs provide strong priors for Bayesian inverse problems in multiple domains \citep{cardoso2023monte,wu2023practical}, with an extension to infinite dimensions considered by \citet{baldassari2023conditional,baldassari2024taming}. \citet{yao2025guideddiffusionsamplingfunction} train an unconditional model and use a guidance mechanism during inference for forward and inverse PDE-based problems, demonstrating state-of-the-art performance compared with finite-dimensional DMs \citep{yao2025guideddiffusionsamplingfunction,huang2024diffusionpde}. We choose this as a primary baseline for comparison due to the challenging nature of this task, training a conditional model, which avoids needing to tune sensitive guidance parameters for inference.

\section{Experiments}
In this section, we evaluate our framework on functional datasets. We first conduct ablations on a dataset of 1D functions, and then present our main results on 2D PDE-based inference tasks. \Cref{sec:2instantiation} details the concrete instantiation of our framework employed in these experiments.

Although our framework is formulated in function space. Norms are approximated in practice by mean-squared error over fixed, uniformly spaced sensor locations, which is standard in operator learning and PDE inference \citep{yao2025guideddiffusionsamplingfunction,pidstrigach2025infinitedimensionaldiffusionmodels}. Full algorithmic details are given in \Cref{alg:training,alg:sampling}.

\subsection{1D PDE-based Inference}
\begin{figure*}[t]
  \centering
  \begin{subfigure}[t]{0.49\textwidth}
    \centering
    \includegraphics[width=\linewidth]{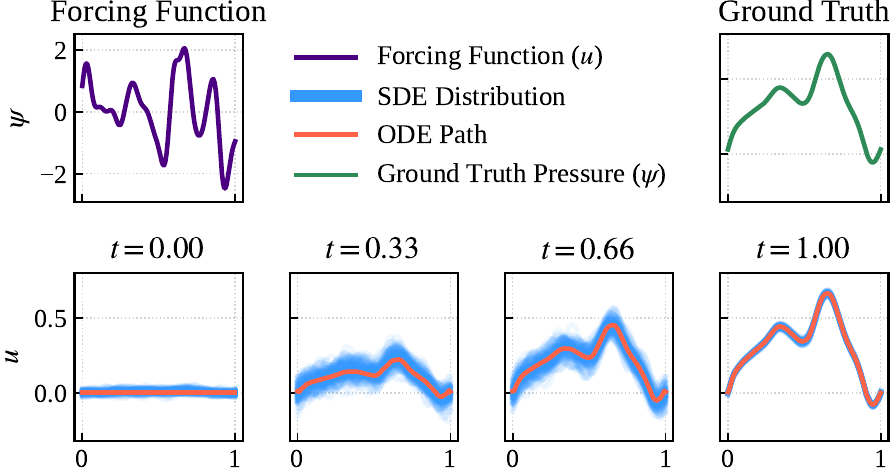}
    \caption{Forward}\label{fig:evolution:a}
  \end{subfigure}
  \hfill 
  \begin{subfigure}[t]{0.49\textwidth}
    \centering
    \includegraphics[width=\linewidth]{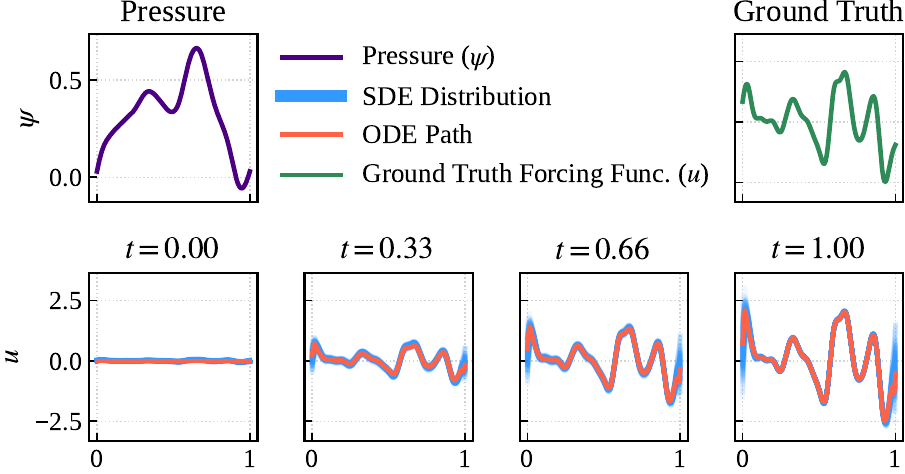}
    \caption{Inverse}\label{fig:evolution:b}
  \end{subfigure}
  \caption{Evolution of predictions for forward (a) and inverse (b) tasks on a randomly chosen example in the test set. For SDE inference, we generate 150 predictions to visualise the distribution of samples.} 
  \label{fig:evolution}
\end{figure*}
We employ a dataset of functions on a 1D unit line to serve as a computationally cheap experiment for the purposes of ablations. Specifically, we use this to demonstrate that simulating the time-changed analogues of the interpolant processes is critical for numerical stability. 
\paragraph{Dataset.}
We follow \citet{ingebrand2025basis} and consider a non-linear variant of Darcy's equation in 1D. We use 9000/1000 train/evaluation samples of paired functions, discretised on 128 evenly-spaced points on \([0, 1]\).

\paragraph{Change-of-Time Ablation.}
\Cref{tbl:cot} shows that during inference, using samples generated with change-of-time schedules satisfying \Cref{lem:tc,lem:thetaconditions} leads to a tremendous improvement in prediction accuracy. This suggests that introducing a change-of-time is critical in an infinite-dimensional setting to alleviate numerical issues arising from singularities at the endpoints of the time domain. Throughout, we measure performance by relative \(L^{2}\) error. 

Performance is best for schedules such as \(\dot{\theta}(t) \propto t(1-t)\) that \textit{minimally} slow down, in the sense that their derivatives decay linearly at critical endpoints (\Cref{lem:thetaconditions} requires the derivative to decay \textit{at least} linearly). Inferior performance on schedules such as \(\dot{\theta}(t) \propto \exp((t(1-t))^{-1})\) whose derivatives decay faster than any polynomial suggests excessive slowing down of time near the boundaries is not beneficial and that a change-of-time should act merely as a device for numerical stability.

\begin{table}[t]
    \centering
    \sisetup{detect-weight, mode=text}
    \renewcommand{\arraystretch}{1.25}
    \caption{Comparison of different time-change schedules. Performance is measured in percentage (\%) relative \(L^{2}\)-error. Best values for ODE and SDE are marked in \textbf{bold} respectively}
    \resizebox{\linewidth}{!}{
        \begin{tabular}{llS[table-format=4.1] S[table-format=4.1]}
            \toprule
            & \textbf{Schedule} & \textbf{Forward (\%)} & \textbf{Inverse (\%)} \\
            \midrule
            \textbf{ODE} (\(\varepsilon = 0\)) 
            & None (\(\dot{\theta}=1\)) & 9433.4 & 2500.5 \\
            & \(\dot{\theta}(t) \propto t(1-t)\) & \bfseries 3.8 & \bfseries  17.4 \\
            & \(\dot{\theta}(t) \propto \exp((t(1-t))^{-1})\) & 57.4 & 44.3 \\
            \midrule
            \textbf{SDE} (\(\varepsilon=b/2\)) 
            & None (\(\dot{\theta}=1\)) & 1395.8 & 266.3 \\
            & \(\dot{\theta}(t) \propto 1-t\) & \bfseries  3.3 & \bfseries  22.1 \\
            & \(\dot{\theta}(t) \propto \exp((1-t)^{-1})\) & 4.3 & 23.7 \\
            & \(\dot{\theta}(t) \propto t(1-t)\) & 3.9 & 23.6 \\
            & \(\dot{\theta}(t) \propto \exp((t(1-t))^{-1})\) & 48.4 & 39.7 \\
            \bottomrule
        \end{tabular}
    }
    \vspace{-1em}
    \label{tbl:cot}
\end{table}


\subsection{Main results: 2D PDE-based Inference}
\paragraph{Datasets, architecture, and baselines.}
We follow \citet{li2020fourier} and evaluate on the Darcy Flow and Navier-Stokes equations on the 2D unit square.
We employ the U-shaped neural operator (U-NO; \citealp{rahman2022u}) for \({\varphi}_\theta\) and \({\eta}_\phi\).
We benchmark our approach against the original finite-dimensional SIs \citep{albergo2023stochasticinterpolantsunifyingframework} and two state-of-the-art diffusion models for solving PDEs: (i) FunDPS \citep{yao2025guideddiffusionsamplingfunction}, which considers functions in infinite-dimensional spaces, and; (ii) DiffusionPDE \citep{huang2024diffusionpde}, which uses a  finite-dimensional approach. We use a 49000/1000/1000 train/dev/evaluation split of paired functions discretised on \(64\!\times\!64\) evenly-spaced points on the unit square. To ensure a direct comparison, we use pre-processed data provided by \cite{huang2024diffusionpde}. Experimental details are provided in \cref{sec:exp-details}.
\paragraph{Experimental results.} 
\Cref{tbl:results} presents our results. The conditional bridge shows competitive results using both the ODE and SDE solvers. For the Navier-Stokes problem, both our infinite-dimensional methods outperform FunDPS and DiffusionPDE, with the ODE achieving the state-of-the-art.
 We attribute our underperformance on Darcy flow  to the representation of the permeability inputs \(a\) as a binary field, which introduces sharp discontinuities and high-frequency components that are poorly captured by  spectral convolutions  in the U-NO due to Gibbs phenomena \citep{de2022cost}. The U-Nets \citep{ronneberger2015u} used in finite dimensions are better suited to representing these discontinuities, likely contributing to their superior performance.

\begin{table}[t]
\centering
\sisetup{
  detect-weight,
  mode=text,
  parse-numbers = false
}
\renewcommand{\arraystretch}{1.15}
\setlength{\tabcolsep}{3.5pt}
\caption{Performance on PDE problems, measured in percentage (\%) relative \(L^{2}\)-error. Best results are marked in \textbf{bold}.}
\resizebox{\linewidth}{!}{
\begin{tabular}{
    lccccc
}
    \toprule
    \textbf{Methods}
    & \textbf{NFE}
    & \multicolumn{2}{c}{\textbf{Darcy Flow (\%)}}
    & \multicolumn{2}{c}{\textbf{Navier--Stokes (\%)}} \\
    \cmidrule(lr){3-4} \cmidrule(lr){5-6}
    & & \text{Forward} & \text{Inverse} & \text{Forward} & \text{Inverse} \\
    \midrule
    FunDPS \cite{yao2025guideddiffusionsamplingfunction} & 100 & 1.9 & 3.6 & 2.8 & 6.1 \\
    DiffusionPDE \cite{huang2024diffusionpde} & 2000 & 2.2 & 2.0 & 6.1 & 8.6 \\
    Finite-dim.\ SI + ODE & 100
        & \multicolumn{1}{c}{\bfseries 1.1}
        & \multicolumn{1}{c}{\bfseries 1.1}
        & 2.4 & 8.3 \\
    Finite-dim.\ SI + SDE & 100 & 1.5 & 4.9 & 1.7 & 7.1 \\
    \midrule
    Infinite-dim.\ SI + ODE (ours) & 100
        & 1.9 & 2.7
        & \multicolumn{1}{c}{\bfseries 1.0}
        & \multicolumn{1}{c}{\bfseries 4.6} \\
    Infinite-dim.\ SI + SDE (ours) & 100 & 2.3 & 3.7 & 1.4 & 6.1 \\
    \bottomrule
\end{tabular}
}
\label{tbl:results}
\end{table}

\section{Conclusion}
We proposed the first rigorous framework for stochastic interpolants in infinite-dimensional Hilbert spaces, resolving fundamental measure-theoretic and well-posedness challenges and demonstrating strong empirical performance on functional generative tasks.
While
our theoretical results rely on strong regularity conditions on the data distribution, primarily used to obtain rigorous guarantees and unlikely to hold completely in practice, relaxing these assumptions while retaining well-posedness and error control remains an important open problem. 
Furthermore
, a natural extension of our work is to consider datasets of functions with arbitrary and possibly sparse sensor locations.




\newpage
\section*{Acknowledgement}
JBY acknowledges the Cambridge Commonwealth, European and International Trust (CT) and Pembroke College for the Cambridge Trust \&  Pembroke College Studentship, which provided substantial support towards tuition fees during his MPhil in Machine Learning and Machine Intelligence, during which the research underlying this paper was conducted. RKOY acknowledges the UK Engineering and Physical Sciences Research Council (EPSRC) grant EP/L016516/1 for the University of Cambridge Centre for Doctoral Training, the Cambridge Centre for Analysis.
JMHL acknowledges support from EPSRC funding under grant EP/Y028805/1. JMHL also acknowledges support from a Turing AI Fellowship under grant EP/V023756/1.
This project acknowledges the resources provided by the Cambridge Service for Data-Driven Discovery (CSD3) operated by the University of Cambridge Research Computing Service (\href{www.csd3.cam.ac.uk}{www.csd3.cam.ac.uk}), provided by Dell EMC and Intel using Tier-2 funding from the Engineering and Physical Sciences Research Council (capital grant EP/T022159/1), and DiRAC funding from the Science and Technology Facilities Council (\href{www.dirac.ac.uk}{www.dirac.ac.uk}).

\section*{Impact Statement}
This paper presents work whose goal is to advance the field of Machine
Learning. There are many potential societal consequences of our work, none
which we feel must be specifically highlighted here.
\bibliography{paper}
\bibliographystyle{icml2026}

\newpage
\appendix
\onecolumn
\crefalias{section}{appendix}
\crefalias{subsection}{appendix}

\vbox{%
    \hsize\textwidth
    \linewidth\hsize
    \vskip 0.1in
    \hrule height 4pt
  \vskip 0.25in
  \vskip -\parskip%
    \centering
    {\LARGE\bf {Stochastic Interpolants in Hilbert Spaces\\ Appendix} \par}
     \vskip 0.29in
  \vskip -\parskip
  \hrule height 1pt
  \vskip 0.09in%
  }

\section*{Outline of Appendix}
\begin{itemize}
    \item Appendix \ref{app:weak-and-strong} introduces weak and strong solutions to SDEs;
    \item Appendix \ref{sec:backwards} introduces the backward SDE for the Conditional-Bridge SDE;
    \item Appendix \ref{app:fixed-epsilon} discusses the time-independent diffusion term, $\epsilon$, in our proposed framework;
    \item Appendix \ref{sec:proofs} provides proofs for lemmas, propositions, and theorems;
    \item Appendix \ref{sec:dp} provides the instantiation of the propsoed framework;
    \item Appendix \ref{sec:algos} provides pseudo-codes for both training and sampling procedures;
    \item Appendix \ref{sec:exp-details} provides experimental details, including datasets and hyperparameters;
    \item Appendix \ref{app:figures} provides additional figures on 2D PDE problems.
\end{itemize}

\section{Weak and strong solutions to SDEs} \label{app:weak-and-strong}
In this section, we define weak and strong solutions to the SDE in \cref{eqn:hsde}. We refer the reader to \citet{da2014stochastic,scheutzow2013stochastic} for an in-depth treatment. A \textit{weak solution} to \Cref{eqn:hsde} is a triple formed of
\begin{enumerate}
  \item filtered probability space \((\Omega, \mathcal{F}, \mathbb{F}, \mu)\) with normal filtration on \([0, 1]\);
  \item a continuous \(\mathbb{F}\)-adapted stochastic process \(X_{t}\) with \(X_{0} \sim \mu_{0}\); and
  \item an \(\mathbb{F}\)-adapted \(C\)-Wiener process \(W_{t}\),
\end{enumerate}
such that \(\mu\)-almost surely,
\begin{equation}
  X_{t} = X_{0} + \int_{0}^{t} f(s, X_{s}) \dd{s} + \int_{0}^{t} g(s, X_{s}) \dd{W_{s}}
\end{equation}
for all \(t \in [0, 1]\). We say an SDE has \textit{unique weak solutions} if all weak solutions have the same law for all \(t \in [0,1 ]\).

For a \textit{given} filtered probability space \((\Omega, \mathcal{F}, \mathbb{F}, \mu)\), a \textit{given} \(C\)-Wiener process and a \textit{given} initial condition \(x_{0} \sim \mu_{0} \perp W_{t}\), we say \(X_{t}\) is a \textit{strong solution} to \Cref{eqn:hsde} if:
\begin{enumerate}
  \item \(((\Omega, \mathcal{F}, \mathbb{F}, \mu), X_{t}, W_{t})\) are a weak solution;
  \item \(X_{0} = x_{0}\), \(\mu\)-almost surely; and
  \item \(X_{t}\) is adapted to the smallest normal filtration generated by \((x_{0}, W_{t})\).
\end{enumerate}

An SDE has a \textit{unique strong solution} if all strong solutions are equal \(\mu\)-almost surely. Uniqueness of strong solutions allows us to couple two stochastic process driven by a common pre-specified \(C\)-Wiener process, which we use to prove error bounds in \Cref{thm:w2}.

\section{Bridging from Target to Source} \label{sec:backwards}
Our preceding analysis has focused on establishing a stochastic bridge from the source \(x_{0} \sim \mu_{0}\) to the conditional target distribution \(\mu_{1 \mid 0}(\dd{x_{1}}, x_{0})\). To bridge from a target point \(x_{1} \sim \mu_{1}\) to the conditional source distribution \(\mu_{0 \mid 1}(\dd{x_{0}}, x_{1})\), we may consider a \textit{reverse interpolant}
\begin{equation}
x_{t}^{\text{rev}} \coloneqq \alpha(1-t) x_{0} + \beta(1-t) x_{1} + \gamma(1-t).
\end{equation}
The analogous SDE bridging from \(x_{1}\) to \(\mu_{0 \mid 1}(\dd{x_{0}}, x_{1})\), which we call the \textit{reverse conditional bridge SDE} (RCB-SDE), is
\begin{equation}
\dd{X_{t}}^{\text{rev}} = f^{\text{rev}}(t, x_{1}, X_{t}^{\text{rev}}) \dd{t} + \sqrt{2\varepsilon}\dd{W_{t}}, \quad X_{0}^{\text{rev}} = x_{1},
\end{equation}
where
\begin{align}
f^{\text{rev}}(t, x_{1}, x) &\coloneqq \varphi^{\text{rev}}(t, x_{0}, x) - \qty( \dot{\gamma}(1-t) + \frac{\varepsilon}{\gamma(1-t)} ) \eta^{\text{rev}}(t, x_{0}, x) \\
\varphi^{\text{rev}}(t, x_{1}, x) &\coloneqq -\mathop{\mathbb{E}}\qty[\dot{\alpha}(1-t) x_{0} + \dot{\beta}(1-t) x_{1} \mid x_{1} = x_{1}, x^{\text{rev}}_{t} = x], \\
\eta^{\text{rev}}(t, x_{1}, x) &\coloneqq \mathop{\mathbb{E}}\qty[z \mid x_{1} = x_{1}, x_{t} = x].
\end{align}
Unlike in \citet{albergo2023stochasticinterpolantsunifyingframework}, we state this as an SDE solved \textit{forward} in time starting from initial condition \(X_{0}^{\text{rev}} = x_{1}\). This gives a consistent indexing of time, and assuming \(\gamma(t) = \gamma(1-t)\), the same time change \(\theta(t)\) can be applied. All preceding results apply analogously by re-stating the conditions on \(x_{1}\) as conditions on \(x_{0}\). Similarly, the \textit{reverse marginal bridge SDE} (RMB-SDE) can be recovered by removing conditioning on \(x_{1}\) in the definitions above. 

Since conditional expectations are taken conditional on \((x_{1}, x^{\text{rev}}_{t})\), we must in general train additional networks to learn both the CB-SDE and the RCB-SDE. In contrast, if only the \textit{marginal} bridges matter, the same trained networks can be used for both forward and reverse tasks by the deterministic relationship \(x_{t} = x_{1-t}^{\text{rev}}\). Our results (\Cref{tbl:cot}) show that simulating a marginal bridge for conditional tasks causes only a modest loss in performance, and hence the marginal bridge may be still be useful in conditional settings.



\section{On Time-Constant \(\varepsilon\) and Time Change} \label{app:fixed-epsilon}
    In our framework, we restrict \(\varepsilon\) to be constant, which is a regularity condition present in most well-posedness results for infinite-dimensional Fokker-Planck equations, which assume a time-independent coefficient on \(W_{t}\) \citep{bogachev2009fokker,bogachev2010uniquenesssolutionsfokkerplanckequations}. In finite-dimensional SIs, \citet{albergo2023stochasticinterpolantsunifyingframework} allow \(\varepsilon(t)\) to vary in time, but we do not view the constant-\(\varepsilon\) setting as restrictive. It can be shown that for a fixed \(\varepsilon\), the time-changed process \(\hat{X}_{\theta(t)}\) is equal in law to a \textit{new} SI with different schedules \(\alpha(t), \beta(t), \gamma(t)\) and a time-varying \(\varepsilon(t)\). A change-of-time hence causes the magnitude of injected noise to vary over time, but only through a simultaneous time reparameterization of the interpolation between source and target, rather than allowing an independent choice of the noise schedule. While this does not fully decouple the diffusion from drift schedules, we believe that this restriction is helpful for defining stochastic interpolants with stable sampling dynamics. In particular, our framework simplifies the design space of stochastic interpolants in a way that helps to decouple \textit{interpolating trajectories} (by choosing the scale of \(\varepsilon\)) from numerical \textit{sampling stability} (by choosing the change-of-time \(\theta\)).
\section{Proofs}\label{sec:proofs}
\subsection{Proof of \Cref{thm:cbsde}}
\label{prf:thm:cbsde}
\restatethmcbsde*

\begin{proof}
Suppose the CB-SDE has a solution on a (possibly strict) subinterval \([0, \overline{t}] \subseteq [0, 1]\) where the law of \(X_{t}\) for \(t \in [0, \overline{t}]\), conditional on an initial condition \(x_{0} \sim \mu_{0}\) is unique and given by \(\rho_{t \mid 0}(\cdot, x_{0})\). Standard results \citep[Chapter 14.2.2]{da2014stochastic} show that for \(\mu_{0}\)-almost every \(x_{0} \in H\) and \(\dd{t}\)-almost every \(t \in [0, \overline{t}]\),  \(\rho_{t \mid 0}(\cdot \mid x_{0})\) satisfies the following \textit{Fokker-Planck} equation:
\begin{equation}
  \dv{t} \int_{H} u(t, x) \rho_{t \mid 0}(\dd{x} \mid x_{0}) = \int_{H} \mathcal{L}(x_{0}) u(t, x) \rho_{t \mid 0}(\dd{x} \mid x_{0}), \label{eqn:fp}
\end{equation}
for all test functions \(u(t, x)\) in the space \(E\) formed by the linear span of the real and imaginary components of functions of the following form:
\begin{equation}
  E \coloneqq \qty{u : [0, \overline{t}] \times H \to \mathbb{R} \mid u(t, x) = \phi(t) e^{i \ev{x, h(t)}_{H}},  \text{ for some }\phi \in C^{1}([0, \overline{t}]), h \in C^{1}([0, \overline{t}]; H)}, \label{eqn:testfns}
\end{equation}
and where  \(\mathcal{L}(x_{0})\) is a \textit{Kolmogorov operator} given by:
\[
  \mathcal{L}(x_{0}) u(t, x) \coloneqq  \operatorname{Tr}\qty(\varepsilon C D^{2}_{x} u(t, x)) + D_{t}u(t, x) + \ev{f(t, x_{0}, x), D_{x} u(t, x)}_{H}.
\]

We use \(D_{t}\) to denote the derivative in time, and \(D_{x}, D^{2}_{x}\) the first and second-order Fréchet derivatives in Hilbert space. 



This formulation via test functions is standard for SDEs in infinite dimensions where a uniform reference measure may not exist. Since this Fokker-Plank equation is uniquely solvable, it is sufficient to verify that the interpolant's conditional distribution, \(\mu_{t \mid 0}(\cdot \mid x_{0})\), also satisfies this Fokker-Planck equation for all test functions \(u \in E\). Throughout this proof, we fix \(x_{0}\) arbitrarily drawn from \(\mu_{0}\).

  It is sufficient to consider any test function of the form \(u(t, x) = \operatorname{Re}\qty[\phi(t) e^{i \ev{x, h(t)}_{H}}]\) or \(\operatorname{Im}\qty[\phi(t) e^{i \ev{x, h(t)}_{H}}]\), where \(\phi\) and \(h\) satisfy the properties given in \Cref{eqn:testfns}. 

  Fix \(t \in [0,1]\) and consider the characteristic function of the random variable \(u(t, x_{t})\), conditional on \(x_{0}\). For any \(k \in \mathbb{R}\), we define
  \begin{align}
    \chi(t, k \mid x_{0}) &\coloneqq \mathop{\mathbb{E}}\qty[e^{i k u(t, x_{t})} \mid x_{0}]
  \end{align}
  Taking derivatives with respect to \(t\) and \(k\) and evaluating at \(k=0\) recovers the time derivative of the expectation of \(u(t, x_{t})\):
  \begin{equation}
    \eval{\frac{1}{i} \pdv[2]{}{t}{k} \chi(t, k \mid x_{0} )}_{k=0} = \dv{t} \mathop{\mathbb{E}}\qty[u(t, x_{t})] = \mathop{\mathbb{E}}\qty[D_{t} u(t, x_{t}) + \ev{ \dot{x}_{t}, D_{x}u(t, x_{t})}_{H}].
  \end{equation}

  Since the inner product \(\ev{ \dot{x}_{t}, D_{x} u(t, x_{t})}_{H}\) is linear in its first argument, we may apply the law of iterated expectations and replace \(\dot{x}_{t}\) with \( \mathop{\mathbb{E}}\qty[ \dot{x}_{t} \mid x_{t}]\):
  \[
    \dv{t} \mathop{\mathbb{E}}\qty[u(t, x_{t}) \mid x_{0}] = \mathop{\mathbb{E}}\qty[D_{t} u(t, x_{t}) + \ev{  \mathop{\mathbb{E}}\qty[\dot{x}_{t} \mid x_{t}], D_{x} u(t, x_{t})}_{H} \mid x_{0} ]
  \]

  We add and subtract \(\frac{\varepsilon}{\gamma(t)} \eta(t, x_{0}, x_{t})\), where \(\eta(t, x_{0}, x_{t}) = \mathop{\mathbb{E}}\qty[ z \mid x_{0}, x_{t}]\) as defined in \Cref{eq:conditional-bridge-denoiser}, and spell out \(\dot{x}_{t} = \dot{\alpha}(t)x_{0} + \dot{\beta}(t)x_{1}+ \dot{\gamma}(t) z\):
  \begin{align}
    \dv{t} \mathop{\mathbb{E}}\qty[u(t, x_{t}) \mid x_{0} ] &= \mathop{\mathbb{E}}\Bigg[D_{t} u(t, x_{t}) + \Bigg\langle \frac{\varepsilon}{\gamma(t)}\eta(t, x_{t}) + \mathop{\mathbb{E}}\qty[\dot{\alpha}(t) x_{0} + \dot{\beta}(t) x_{1} + \dot{\gamma}(t) z \mid x_{0}] \notag  \\
    &\mathrel{\phantom{=}}\qquad\qquad\qquad\qquad\qquad\qquad\qquad\qquad - \,\frac{\varepsilon}{\gamma(t)} \eta(t, x_{t}), D_{x} u(t, x_{t})\Bigg\rangle_{H} \,\Bigg\vert\, x_{0} \Bigg] \notag \\
    &= \frac{\varepsilon}{\gamma(t)}\mathop{\mathbb{E}}\qty[\ev{z, D_{x}u(t, x_{t})}_{H} \mid x_{0}] + \mathop{\mathbb{E}}\qty[ D_{t}u(t, x_{t}) + \ev{f(t, x_{t}), D_{x} u(t, x_{t})}_{H} \mid x_{0}]. \label{eqn:simpl}
  \end{align}

  For the following, we assume that \(u(t, x) = \operatorname{Re}[\phi(t) e^{i \ev{x, h(t)}_{H}}]\), but identical reasoning applies if \(u(t, x) = \operatorname{Im}\qty[\phi(t) e^{i \ev{x, h(t)}_{H}}]\).

  Let us focus on the first term in \Cref{eqn:simpl}. We have:
  \par\noindent
  \resizebox{\linewidth}{!}{
    \begin{minipage}{\linewidth}
      \begin{align}
        \frac{\varepsilon}{\gamma(t)} \mathop{\mathbb{E}}[\ev{z, D_{x} u(t, x_{t})}_{H} \mid x_{0}] &= \operatorname{Re}\qty[i \frac{\varepsilon}{\gamma(t)} \mathop{\mathbb{E}}\qty[\phi(t) e^{i \ev{x_{t}, h(t)}_{H}} \ev{z, h(t)}_{H} \mid x_{0}]] \notag \\
        &= \operatorname{Re}\qty[i \frac{\varepsilon}{\gamma^{2}(t)} \mathop{\mathbb{E}}\qty[\phi(t) e^{i \ev{\alpha(t) x_{0} + \beta(t) x_{1}, h(t)}_{H}}  \mid x_{0}] \mathop{\mathbb{E}}\qty[\ev{\gamma(t) z, h(t)}_{H}e^{i \ev{\gamma(t) z, h(t)}_{H}} ]], \label{eqn:nearlythere}
      \end{align}
  \end{minipage}}
  \par\noindent
  where the second line follows since \(z \perp (x_{0}, x_{1})\).

  Let \(\qty{\lambda_{n}, e_{n}}_{n=1}^{\infty}\) be an eigensystem for \(C\) (i.e. \(C e_{n} = \lambda e_{n}\) for each \(n\)) and define the scalar-valued functions \(h_{n}(t) \coloneqq \ev{h(t), e_{n}}_{H}\). The projections \(z_{n} = \ev{z, e_{n}}\) for each \(n\) are mutually independent 1-dimensional Gaussians with zero mean and variances equal to \(\lambda_{n}\).  By Parseval's theorem, we have the identity \(\ev{\gamma(t) z, h(t)} = \sum_{n=1}^{\infty} \gamma(t) h_{n}(t) z_{n} \). We may therefore write
  \[
    \mathop{\mathbb{E}}\qty[\ev{\gamma(t) z, h(t)}_{H} e^{i \ev{\gamma(t)z, h(t)}_{H}}] = \sum_{n=1}^{\infty} \mathop{\mathbb{E}}\qty[\gamma(t) h_{n}(t) z_{n} e^{i \gamma(t) h_{n}(t) z_{n}}] \prod\limits_{m \neq n} \mathop{\mathbb{E}}\qty[e^{i \gamma(t) h_{m}(t) z_{m}}]
  \]
  Using the identity \(\mathop{\mathbb{E}}\qty[q e^{i q}] = i v \mathop{\mathbb{E}}\qty[e^{i q}]\) for a 1-dimensional Gaussian \(q \sim \operatorname{N}(0, v)\), we have
  \begin{align*}
    \mathop{\mathbb{E}}\qty[\ev{\gamma(t) z, h(t)}_{H} e^{i \ev{\gamma(t)z, h(t)}_{H}}] = \sum_{n=1}^{\infty} i \gamma^{2}(t)h_{n}^{2}(t)\lambda_{n} \mathop{\mathbb{E}}\qty[e^{i \ev{\gamma(t) z, h(t)}_{H}}]
  \end{align*}
  Substituting into \Cref{eqn:nearlythere}, we have
  \begin{align*}
    \frac{\varepsilon}{\gamma(t)} \mathop{\mathbb{E}}\qty[\ev{z, D_{x} u(t, x_{t})}_{H} \mid x_{0}] &= \mathop{\mathbb{E}}\qty[\sum_{n=1}^{\infty} - \varepsilon \lambda_{n} h_{n}^{2}(t) u(t, x_{t}) \,\Bigg\vert\, x_{0}] = \mathop{\mathbb{E}}\qty[\operatorname{Tr}\qty(\varepsilon C D^{2}_{x}u(t, x_{t})) \mid x_{0}].
  \end{align*}
  Finally, substituting this expression into \Cref{eqn:simpl} and re-writing conditional expectations via integrals, we have
  \[
    \dv{t} \int_{H} u(t, x) \mu_{t \mid 0}(\dd{x} \mid x_{0}) = \int_{H} \operatorname{Tr}\qty(\varepsilon C D^{2}_{x} u(t, x)) + D_{t}u(t, x) + \ev{f(t, x), D_{x}u(t, x)}_{H} \mu_{t \mid 0}(\dd{x} \mid x_{0}).
  \]
  Since  \(t\) and \(x_{0}\) were arbitrary,  \(\mu_{t \mid 0}(\cdot, \mid x_{0}\) satisfies the Fokker-Planck equation (\ref{eqn:fp}) for any \(t \in [0, 1]\) and \(\mu_{0}\) almost every \(x_{0}\). This concludes the proof.

\end{proof}

\subsection{Proof of \Cref{lem:lipchitz-conditions}\ref{setting:bayes}}\label{prf:prp:bayes}
\paragraph{Proposition \ref{lem:lipchitz-conditions}\ref{setting:bayes}.}
{
\itshape
    Suppose \Cref{hyp:bayes} holds. Then the map \(x \mapsto f(t, x_{0}, x)\) is Lipschitz continuous with respect to the \(H_{C}\)-norm. Specifically, for each \(t \in (0, 1)\), \(x_{0} \in H_{C}\) and \(x \in H\), the following inequality holds:
    \[
      \norm{f(t, x_{0}, x) - f(t, x_{0}, y)}_{H_{C}} \leq L(t) \norm{x - y}_{H_{C}},
    \]
    where the Lipschitz constant \(L(t)\) is:
    \[
      L(t) =  \mathop{\underset{}{\max}}\qty{\abs{\frac{\dot{\gamma}(t)}{\gamma(t)} - \frac{\varepsilon}{\gamma^{2}(t)}}, \abs{\dot{\beta}(t) - \beta(t)\qty(\frac{\dot{\gamma}(t)}{\gamma(t)} - \frac{\varepsilon}{\gamma^{2}(t)})} \frac{\beta(t)}{\beta^{2}(t) + k \gamma^{2}(t)}}.
    \]
}

\begin{proof}
  First, we notice that the drift term can be re-written:
  \begin{align}
    f(t, x_{0}, x) &= \mathop{\mathbb{E}}\qty[\dot{\alpha}(t) x_{0} + \dot{\beta}(t) x_{1} + \qty(\dot{\gamma}(t) - \frac{\varepsilon}{\gamma(t)}) z \,\bigg|\, x_{0}, x_{t} = x] \notag\\
    &= \qty(\dot{\alpha}(t) - \alpha(t)\qty(\frac{\dot{\gamma}(t)}{\gamma(t)} - \frac{\varepsilon}{\gamma^{2}(t)})) x_{0} \notag\\
    &\mathrel{\phantom{=}}\, + \qty(\dot{\beta}(t) - \beta(t)\qty(\frac{\dot{\gamma}(t)}{\gamma(t)} - \frac{\varepsilon}{\gamma^{2}(t)})) \mathop{\mathbb{E}}\qty[x_{1} \mid x_{0}, x_{t} = x]\notag \\
    &\mathrel{\phantom{=}}\, + \qty(\frac{\dot{\gamma}(t)}{\gamma(t)} - \frac{\varepsilon}{\gamma^{2}(t)}) x_{t}.\label{eqn:driftreexpressed}
  \end{align}
  Hence, if we can show that the mapping \(x \mapsto \mathop{\mathbb{E}}\qty[x_{1} \mid x_{0}, x_{t} = x]\) is Lipschitz continuous in \(H_{C}\)-norm, this translates to Lipschitz continuity in the overall mapping \(x \mapsto f(t, x_{0}, x)\).

  \paragraph{Step 1}
  Let \(\operatorname{N}(m_{1 \mid 0}(x_{0}), Q_{1 \mid 0})\) be the (prior) conditional law of \(x_{1}\), conditional on \(x_{0}\), which is a well-defined Gaussian measure on \(H_{C}\)  (see, e.g., \citealp[][Chapter 3.10]{bogachev1998gaussian}) whose mean \(m_{1 \mid 0}(x_{0})\) is a linear function of \(x_{0}\) and covariance operator \(Q_{1 \mid 0}\) does not depend on \(x_{0}\). This will serve as our reference measure. Let \(\mu_{1 \mid 0, t}(\cdot \mid x_{0}, x)\) be the (posterior) conditional law of \(x_{1}\), conditional on \(x_{0}\) and \(x_{t} = x\).  Then for \(\mu_{0}\)-almost every \(x_{0} \in H_{C}\), the law \(\mu_{1 \mid 0, t}(\cdot \mid x_{0}, x)\) is absolutely continuous with respect to the reference measure with the following density
  \begin{align*}
    \dv{\mu_{1 \mid 0, t}(\cdot \mid x_{0}, x)}{\operatorname{N}(m_{1 \mid 0}(x_{0}), Q_{1 \mid 0})}{}(x_{1}) &= \frac{1}{Z_{1 \mid 0,t}(x_{0}, x)}\exp(- V_{1 \mid 0, t}(x_{1}, x_{0}, x)),\\
    \text{ where } V_{1 \mid 0, t}(x_{1}, x_{0}, x) &\coloneqq \frac{1}{2\gamma^{2}(t)} \norm{\alpha(t) x_{0}+ \beta(t) x_{1} - x}_{H_{C}}^{2} + \Phi(x_{0}, x_{1}),
  \end{align*}
  and \(Z_{1 \mid 0,t}(x_{0}, x) \coloneqq \int_{H_{C}} \exp(- V_{1 \mid 0, t}(x_{1}, x_{0}, x)) \operatorname{N}(\dd{x_{1}}; m_{1 \mid 0}(x_{0}), Q_{1 \mid 0})\) is a normalising constant.
  \paragraph{Step 2} Let \(\qty{e_{n}}_{n=1}^{\infty}\) be an orthonormal basis for \(H_{C}\) and for each \(N \geq 1\), let \(H_{N}\) be the linear span of \(\qty{e_{1}, \ldots, e_{N}}\). We define \(\Pi_{N} : H_{C} \to H_{N}\) as the self-adjoint orthogonal projection operator onto the finite-dimensional subspace \(H_{N}\) of \(H_{C}\) and let \(x_{1,N} \coloneqq \Pi_{N} x_{1}\). Furthermore, we define a reference measure on \(H_N\) by projecting \(\operatorname{N}(m_{1 \mid 0}(x_{0}), Q_{1 \mid 0})\) onto this subspace to create a Gaussian measure with mean \(m_{1 \mid 0, N}(x_{0}) \coloneqq \Pi_{N} m_{1 \mid 0}(x_{0})\) and variance \(Q_{N} \coloneqq \Pi_{N} Q_{1 \mid 0} \Pi _{N}\).
  
  Using this, we create a sequence of approximating posterior measures \(\mu_{1 \mid 0, t, N}\) by restricting the potential to \(H_{N}\): for each \(x_{1,N} \in H_{N}\):
  \begin{align*}
    \dv{\mu_{1 \mid 0, t, N}(\cdot\mid x_{0}, x)}{\operatorname{N}(m_{1 \mid 0, N}(x_{0}), Q_{N})}{}(x_{1,N}) &\coloneqq \frac{1}{Z_{1 \mid 0, t, N}(x_{0}, x)} \exp(-V_{1 \mid 0, t, N}(x_{1,N}, x_{0}, x)), \\
    \text{ where } V_{1 \mid 0, t, N}(x_{1, N}, x_{0}, x) &\coloneqq  \frac{1}{2\gamma^{2}(t)} \norm{\alpha(t) \Pi_{N} x_{0} + \beta(t) x_{1, N} - x}^{2}_{H_{C}} + \Phi(x_{0}, x_{1, N}),
  \end{align*}
  where \(Z_{1 \mid 0, t, N}(x_{0}, x) \coloneqq \int_{H_{N}} \exp(-V_{1 \mid 0, t, N})(x_{1, N}, x_{0}, x)\operatorname{N}(\dd{x_{1}}; m_{1 \mid 0, N}(x_{0}), Q_{N}) \) is a normalising constant.

  Given these definitions, we study the following approximation of the posterior mean:
  \begin{equation}
    m_{1 \mid 0, t, N}(x_{0}, x) \coloneqq \mathop{\mathbb{E}_{\mu_{1 \mid 0, t, N}(\cdot, x_{0}, x)}}\qty[x_{1, N}] = \int_{H_{N}} x_{1, N}\mu_{1 \mid 0, t, N}(\dd{x_{1, N}} \mid x_{0}, x).\label{eqn:apm}
  \end{equation}
  We find a Lipschitz constant for the map \(x \mapsto m_{1 \mid 0, t, N}(x_{0}, x)\) that is independent of \(N\) and \(x_{0}\), by considering the Fréchet derivative of \(m_{1 \mid 0, t, N}(x_{0}, x)\) with respect to \(x\), applied in a direction \(h \in H_{C}\). This is a covariance (see Lemma \ref{lem:frechetf}):
  \par\noindent
  \resizebox{\linewidth}{!}{
    \begin{minipage}{\linewidth}
      \begin{align}
        D_{x} m_{1 \mid 0, t, N}(x_{0}, x)[h] &=\frac{\beta(t)}{\gamma^{2}(t)} \mathop{\mathbb{E}_{\mu_{1 \mid 0, t, N}(\cdot, x_{0}, x)}}\qty[(x_{1, N} - m_{1 \mid 0, t, N}(x_{0}, x)) \ev{x_{1, N} - m_{1 \mid 0, t, N}(x_{0}, x), h}_{H_{C}}] \notag \\
        &=\frac{\beta(t)}{\gamma^{2}(t)} \mathop{\mathbb{E}_{\mu_{1 \mid 0, t, N}(\cdot, x_{0}, x)}}\qty[(x_{1, N} - m_{1 \mid 0, t, N}(x_{0}, x)) \ev{x_{1, N} - m_{1 \mid 0, t, N}(x_{0}, x), \Pi_{N} h}_{H_{N}}] \label{eqn:dxmt0},
      \end{align}
  \end{minipage}}
  \par\noindent
  where the second equality follows from the first since the components of \(x_{1, N}\allowbreak{}-\allowbreak{}m_{1 \mid 0, \allowbreak{}t, \allowbreak{}N}(x_{0}, \allowbreak{}x)\) along the basis vectors \(\qty{e_{n}}_{n=N+1}^{\infty}\) are all zero.

  By the Riesz representation theorem \citep{bachman2000functional}, the \(N\)-dimensional subspace \(H_{N}\) is isomorphic with \(\mathbb{R}^{N}\), so all vectors on \(H_{N}\) can be identified with an \(N\)-dimensional column vector in \(\mathbb{R}^{N}\). We therefore re-write the derivative using an \(N\)-dimensional covariance matrix \(C_{N}\) acting on  \(\Pi_{N} h\):
  \begin{align}
    D_{x} m_{1 \mid 0, t, N}(x_{0}, x)[h] &= \frac{\beta(t)}{\gamma^{2}(t)} C_{N} \Pi_{N} h, \label{eqn:dxmt}\\
    \text{ where } C_{N} &= \mathop{\mathbb{E}_{\mu_{1 \mid 0, t, N}(\cdot, x_{0}, x)}}\qty((x_{1, N} - m_{1 \mid 0, t, N}(x_{0}, x))(x_{1, N} - m_{1 \mid 0, t, N}(x_{0}, x))^{\tran}). \notag
  \end{align}
  For the rest of the proof, we identify \(C_{N}\) with a self-adjoint covariance operator on \(H_{N}\).

  \paragraph{Step 3}
  We use the Brascamp-Lieb inequality \citep{brascamp1976extensions} to  bound the operator norm of \(C_{N}\). We proceed by expressing  \(\mu_{1 \mid 0, t, N}(\cdot \mid x_{0}, x)\) via a density relative to the Lebesgue measure on \(H_{N}\) (identified with \(\mathbb{R}^{N}\)). The density of the reference measure \(\operatorname{N}(m_{1 \mid 0, N}(x_{0}), Q_{N})\) with respect to the Lebesgue measure, evaluated at \(x_{1, N} \in H_{N}\), is proportional to
  \[\exp(-\frac{1}{2} \ev{Q_{N}^{-1} (x_{1, N} - m_{1 \mid 0, N}(x_{0})), x_{1, N} - m_{1 \mid 0, N}(x_{0})}_{H_{N}}),
  \]
  where the inverse \(Q_{N}^{-1}\) is well-defined because \(Q_{N} : H_{N} \to H_{N}\) is positive-definite and bounded. Hence, 
  \begin{align*}
    &\mu_{1 \mid 0, t, N}(\dd{x_{1, N}} \mid x_{0}, x) \\
    &\quad\propto \exp(-V_{1 \mid 0, t, N}(x_{1, N}, x_{0}, x) - \frac{1}{2} \ev{Q_{N}^{-1} (x_{1, N} - m_{1 \mid 0, N}(x_{0})), x_{1, N} - m_{1 \mid 0, N}(x_{0})}_{H_{N}}) \dd{x_{1, N}}.
  \end{align*}
  Let
  \[W_{1 \mid 0, t, N}(x_{1, N}, x_{0}, x) \coloneqq V_{1 \mid 0, t, N}(x_{1, N}, x_{0}, x) + \frac{1}{2} \ev{Q_{N}^{-1} (x_{1, N} - m_{1 \mid 0, N}(x_{0})), x_{1, N} - m_{1 \mid 0, N}(x_{0})}_{H_{N}}\]  be the total potential with respect to the Lebesgue measure on \(H_{N}\). Since this is twice-differentiable and strictly convex, the conditions for the Brascamp-Lieb inequality are satisfied \citep[Theorem 4.1]{brascamp1976extensions}: for any continuously differentiable function \(f : H_{N} \to \mathbb{R}\), we have
  \begin{align*}
    &\mathop{\mathbb{E}_{\mu_{1 \mid 0, t, N}(\cdot, x_{0}, x)}}\qty[\qty(f(x_{1, N}) - \bar{f})^{2}]\\
    &\quad\leq \mathop{\mathbb{E}_{\mu_{1 \mid 0, t, N}(\cdot, x_{0}, x)}}\qty[\ev{ \qty(D^{2}_{x_{1, N}}W_{1 \mid 0, t, N}(x_{1, N}, x_{0}, x))^{-1} D f(x_{1, N}), D f(x_{1, N}) }_{H_{N}}],
  \end{align*}
  where \(\overline{f}\) is the expectation of \(f(x_{1, N})\) under the measure \(\mu_{1 \mid 0, t, N}(\cdot \mid x_{0}, x)\) and \(D^{2}_{x_{1, N}}\) is  Hessian operator with respect to \(x_{1, N}\) on \(H_{N}\). In the case where \(f(x_{1, N}) = \ev{x_{1, N}, u}_{H_{N}}\) for any \(u \in H_{N}\), we have \(Df(x_{1, N}) = u\), and
  \begin{align}
    &\mathop{\mathbb{E}_{\mu_{1 \mid 0, t, N}(\cdot, x_{0}, x)}}\qty[\qty(f(x_{1, N}) - \bar{f})^{2}] = \ev{C_{N}u, u} \notag\\
    &\qquad\qquad\leq  \mathop{\mathbb{E}_{\mu_{1 \mid 0, t, N}(\cdot, x_{0}, x)}}\qty[\ev{ \qty(D^{2}_{x_{1, N}}W_{1 \mid 0, t, N}(x_{1, N}, x_{0}, x))^{-1} u, u }_{H_{N}}]. \label{eqn:brascamp}
  \end{align}
  \paragraph{Step 4}
  We aim to place a Löwner order on the inverse Hessian \(\qty(D^{2}_{x_{1, N}}W_{1 \mid 0, t, N}(x_{1, N}, x_{0}, x))^{-1}\) irrespective of \(x_{1, N}\), which will  allow us to form a Löwner order on \(C_{N}\).

  Taking the second-order Fréchet derivatives of \(W_{1 \mid 0, t,N}(x_{1, N}, x_{0}, x)\) with respect to \(x_{1, N}\) in the directions \(u, v \in H_{N}\), we have
  \begin{align*}
    D^{2}_{x_{1, N}}W_{1 \mid 0, t, N}(x_{N}, x_{0}, x)[u, v] = \ev{\qty(\frac{\beta^{2}(t)}{\gamma^{2}(t)} I_{N} + \Pi_{N}\grad_{x_{1}}^{2}\Phi(x_{0}, x_{1, N})\Pi_{N} + Q_{N}^{-1})u, v}_{H_{N}},
  \end{align*}
  where \(\grad_{x_{1}}^{2} \Phi(x_{0}, x_{1})\) is the partial Hessian of the potential \(\Phi\) with respect to the second coordinate. This allows us to identify the Hessian with a self-adjoint Hessian operator from \(H_{N}\) to \(H_{N}\):
  \begin{equation}
    D^{2}_{x_{1, N}}W_{1 \mid 0, t, N}(x_{N}, x_{0}, x)[u, v] = \frac{\beta^{2}(t)}{\gamma^{2}(t)} I_{N} + \Pi_{N}\grad_{x_{1}}^{2}\Phi(x_{0}, x_{1, N})\Pi_{N} + Q_{N}^{-1}\label{eqn:hess}
  \end{equation}
  Since \(\Phi\) is \(k\)-strongly convex, it is also \(k\)-strongly convex in the second coordinate and hence the projection of its partial Hessian satisfies the following Löwner order:
  \[
    \Pi_{N} \grad^{2}_{x_{1}} \Phi(x_{0}, x_{1, N}) \succcurlyeq k I_{N},
  \]
  which allows us to place a Löwner order on \Cref{eqn:hess}:
  \[
    D^{2}_{x_{1, N}}W_{1 \mid 0, t, N}(x_{N}, x_{0}, x)[u, v] \succcurlyeq \qty(\frac{\beta^{2}(t)}{\gamma^{2}(t)} + k) I_{N} + Q_{N}^{-1}
  \]
  Since the right-hand side of this quantity is positive-definite, this Löwner order is reversed when taking inverses:
  \[
    \qty(D^{2}_{x_{1, N}}W_{1 \mid 0, t, N}(x_{N}, x_{0}, x)[u, v])^{-1} \preccurlyeq \qty(\qty(\frac{\beta^{2}(t)}{\gamma^{2}(t)} + k) I_{N} + Q_{N}^{-1})^{-1}.
  \]
  This relationship holds uniformly for all \(x_{1, N} \in H_{N}\). Substituting into \Cref{eqn:brascamp}, we have
  \begin{align*}
    \ev{C_{N}u, u} &\leq \ev{\qty(\qty(\frac{\beta^{2}(t)}{\gamma^{2}(t)} + k) I_{N} + Q_{N}^{-1})^{-1}u, u}_{H_{N}}, \text{ for all } u \in H_{N}\\
    \iff C_{N} &\preccurlyeq \qty(\qty(\frac{\beta^{2}(t)}{\gamma^{2}(t)} + k) I_{N} + Q_{N}^{-1})^{-1}.
  \end{align*}
  \paragraph{Step 5} Having established a Löwner order on \(C_{N}\), we now use this to place a bound on the operator norm of \(C_{N}\). Since \(C_{N}\) is positive semi-definite, the Löwner order translates directly into an ordering on operator norms:
  \[
    \onorm{C_{N}} \leq \onorm{\qty(\qty(\frac{\beta^{2}(t)}{\gamma^{2}(t)} + k) I_{N} + Q_{N}^{-1})^{-1}}.
  \] 
  The spectrum of \(\qty(\qty(\frac{\beta^{2}(t)}{\gamma^{2}(t)} + k) I_{N} + Q_{N}^{-1})^{-1}\) is given by the function \(\sigma(\lambda) = \frac{\lambda \gamma^{2}(t)}{\lambda (\beta^{2}(t) + k \gamma^{2}(t)) + \gamma^{2}(t)}\) evaluated over the spectrum of \(Q_{N}\). This function is increasing for \(\lambda \geq 0\), attaining its supremum at \(\frac{\gamma^{2}(t)}{\beta^{2}(t) + k \gamma^{2}(t)}\). Hence,
  \[
    \onorm{C_{N}} \leq \frac{\gamma^{2}(t)}{\beta^{2}(t) + k \gamma^{2}(t)}.
  \] Substituting this  into \Cref{eqn:dxmt},
  \begin{align*}
    \norm{D_{x} m_{1 \mid 0, t, N}(x_{0}, x)[h]}_{H_{C}} &\leq \frac{\beta(t)}{\gamma^{2}(t)} \onorm{C_{N}} \onorm{\Pi_{N}} \norm{h}_{H_{C}} \leq \frac{\beta(t)}{\beta^{2}(t) + k \gamma^{2}(t)} \norm{h}_{H_{C}}.
  \end{align*}
  It follows from the mean-value inequality \citep[][Theorem 2.1.19]{berger1977nonlinearity}, that for any \(x, y \in H\),
  \begin{align}
    \norm{m_{1 \mid 0, t, N}(x_{0}, x) - m_{1 \mid 0, t, N}(x_{0}, y)}_{H_{C}} &= \norm{m_{1 \mid 0, t, N}(x_{0}, x) - m_{1 \mid 0, t, N}(x_{0}, y)}_{H_{N}} \notag \\
    &\leq \frac{\beta(t)}{\beta^{2}(t) + k \gamma^{2}(t)} \norm{x - y}_{H_{C}}. \label{eqn:ineq}
  \end{align}%
  Passing \(N \to \infty\), the sequence of approximate posterior means \(m_{1 \mid 0, t, N}(x_{0}, x)\) converges to the true posterior mean \(m_{1 \mid 0, N}(x_{0}, x)\) (see Lemma \ref{lem:posteriormeanconvergence}). Since each approximation satisfies the inequality (\ref{eqn:ineq}) that is uniform in \(N\) and the norm is a continuous mapping, the true posterior mean \(m_{1 \mid 0, t}(x_{0}, x)\) also inherits the inequality.
  \[
    \norm{m_{1 \mid 0, t}(x_{0}, x) - m_{1 \mid 0, t}(x_{0}, y)}_{H_{C}} \leq \frac{\beta(t)}{\beta^{2}(t) + k \gamma^{2}(t)} \norm{x - y}_{H_{C}}.
  \]

  \paragraph{Step 6} We now substitute this relationship into the expression for the drift coefficient in \Cref{eqn:driftreexpressed}: a Lipschitz constant for the overall drift is the maximum of the Lipschitz constants for each term involving \(x_{t}\):
  \[
    \norm{f(t, x_{0}, x) - f(t, x_{0}, y)}_{H_{C}} \leq L(t) \norm{x - y}_{H_{C}},
  \]
  where
  \[
    L(t) =  \mathop{\underset{}{\max}}\qty{\abs{\frac{\dot{\gamma}(t)}{\gamma(t)} - \frac{\varepsilon}{\gamma^{2}(t)}}, \abs{\dot{\beta}(t) - \beta\qty(\frac{\dot{\gamma}(t)}{\gamma(t)} - \frac{\varepsilon}{\gamma^{2}(t)})} \frac{\beta(t)}{\beta^{2}(t) + k \gamma^{2}(t)}}.
  \]
  This concludes the proof.%

We now state two intermediate results that were used in the above proof.

\begin{lemma}\label{lem:frechetf}
  Let \(m_{1 \mid 0, t, N}(x_{0}, x)\) be an approximate posterior mean as defined in \Cref{eqn:apm}, with \(t \in(0,1)\) and \(N \geq 0\). Then the Fréchet derivative of the mapping \(x \mapsto m_{1 \mid 0, t, N}(x_{0}, x)\) in \(H_{C}\)-norm, in a direction \(h \in H_{C}\) is given by
  \[\resizebox{\linewidth}{!}{\(\displaystyle
        D_{x} m_{1 \mid 0, t, N}(x_{0}, x)[h] =\frac{\beta(t)}{\gamma^{2}(t)} \mathop{\mathbb{E}_{\mu_{1 \mid 0, t, N}(\cdot \mid x_{0}, x)}}\qty[(x_{1, N} - m_{1 \mid 0, t, N}(x_{0}, x)) \ev{x_{1, N} - m_{1 \mid 0, t, N}(x_{0}, x), h}_{H_{C}}].
  \)}\]

  \begin{proof}
    We begin by taking the Fréchet derivative of \(m_{1 \mid 0, t, N}(x_{0}, x)\) at \(x\) in a direction \(h \in H_{C}\). Applying the quotient rule \citep[Chapter 2.1]{berger1977nonlinearity} and simplifying, we have
    \begin{align}
      D_{x} m_{1 \mid 0, t, N}(x_{0}, x)[h] &= D_{x}\qty{\frac{\int_{H_{N}} x_{1, N} \exp(-V_{1 \mid 0, t, N}(x_{1, N},x_{0}, x)) \mathbb{P}_{1 \mid 0, N}(\dd{x_{1, N}}, x_{0})}{ Z_{1 \mid 0, t, N}(x_{0}, x)}}[h] \notag \\
      &= \frac{1}{Z_{1 \mid 0, t, N}(x_{0}, x)} D_{x}U_{1 \mid 0, t, N}(x_{0}, x)[h] - m_{1 \mid 0, t, N}(x_{0}, x) \frac{D_{x} Z_{1 \mid 0, t, N}(x_{0}, x)[h]}{Z_{1 \mid 0, t, N}(x_{0}, x)}, \label{eqn:lem10fin}
    \end{align}
    where we define \(U_{1 \mid 0, t, N}(x _{0}, x) \coloneqq \int_{H_{N}} x_{1, N} \exp(-V_{1 \mid 0, t, N}(x_{1, N},x_{0}, x)) \mathbb{P}_{1 \mid 0, N}(\dd{x_{1, N}}, x_{0})\) to simplify notation. Evaluating the Fréchet derivatives, we have
    \begin{align*}
      D_{x}U_{1 \mid 0, t, N}(x_{0}, x)[h] &= \frac{1}{\gamma^{2}(t)}\int_{H_{N}} x_{1, N} \ev{\alpha(t) \Pi_{N} x_{0} + \beta(t) x_{1, N} - x, h}_{H_{C}}\\[-0.5em]
      &\qquad \qquad \cdot \exp(-V_{1 \mid 0, t, N}(x_{1, N}, x_{0}, x)) \mathbb{P}_{1 \mid 0, N}(\dd{x_{1, N}}, x_{0}), \\
      D_{x} Z_{1 \mid 0, t, N}(x_{0}, x)[h] &= \frac{1}{\gamma^{2}(t)}\int_{H_{N}} \ev{\alpha(t) \Pi_{N} x_{0} + \beta(t) x_{1, N} - x, h}_{H_{C}} \\[-0.5em]
      &\qquad \qquad \cdot \exp(-V_{1 \mid 0, t, N}(x_{1, N}, x_{0}, x)) \mathbb{P}_{1 \mid 0, N}(\dd{x_{1, N}}, x_{0}).
    \end{align*}
    Substituting these into \Cref{eqn:lem10fin} and recognising that the fractions come together to form the approximate posterior density, we have:
    \[\resizebox{\linewidth}{!}{\(\displaystyle
          D_{x} m_{1 \mid 0, t, N}(x_{0}, x)[h] = \frac{1}{\gamma^{2}(t)}\mathop{\mathbb{E}_{\mu_{1 \mid 0, t, N}(\cdot \mid x_{0}, x)}}\qty[(x_{1, N} - m_{1 \mid 0, t, N}(x_{0}, x))\ev{\alpha(t) \Pi_{N} x_{0} + \beta(t) x_{1, N} - x, h}_{H_{C}}].
    \)}\]
    Adding and subtracting zero,
    \[\resizebox{\linewidth}{!}{\(\displaystyle
          0 = \frac{1}{\gamma^{2}(t)} \mathop{\mathbb{E}_{\mu_{1 \mid 0, t ,N}(\cdot \mid x_{0}, x)}}\qty[(x_{1, N} - m_{1 \mid 0, t, N}(x_{0}, x)) \ev{- \alpha(t) \Pi_{N} x_{0}  + \beta(t) m_{1 \mid 0, t, N}(x_{0}, x) + x, h}_{H_{C}}],
    \)}\]
    we arrive at
    \[\resizebox{\linewidth}{!}{\(\displaystyle
          D_{x} m_{1 \mid 0, t, N}(x_{0}, x)[h] = \frac{\beta(t)}{\gamma^{2}(t)}\mathop{\mathbb{E}_{\mu_{1 \mid 0, t, N}(\cdot \mid x_{0}, x)}}\qty[(x_{1, N} - m_{1 \mid 0, t, N}(x_{0}, x))\ev{x_{1, N} - m_{1 \mid 0, t, N}(x_{0}, x), h}_{H_{C}}].
    \)}\]
    This concludes the proof.
  \end{proof}
\end{lemma}

\begin{lemma}\label{lem:posteriormeanconvergence}
  For every \(x_{0}, x \in H\) and \(t \in (0, 1)\), the sequence of approximate posterior means \(\qty{m_{1 \mid 0, t, N}(x_{0}, x)}_{N=1}^{\infty}\)   as defined in \Cref{eqn:apm} converges to the true posterior mean \(m_{1 \mid 0, t}(x_{0}, x)\).

  \begin{proof}
    First, let us re-express the definition of \(m_{1 \mid 0, t, N}(x_{0}, x)\) by lifting the integrals into a common infinite-dimensional space:
    \begin{align}
      m_{1 \mid 0, t, N}( x_{0}, x) &= \int_{H_{C}} \Pi_{N} x_{1} \frac{1}{Z_{1 \mid 0, t, N}(x_{0}, x)} \exp(-V_{1 \mid 0, t}(\Pi_{N} x_{1}, \Pi_{N} x_{0}, x)) \mathbb{P}_{1 \mid 0}(\dd{x_{1}}, x_{0}),  \label{eqn:lift}\\
      \text{ where } Z_{1 \mid 0, t, N}(x_{0}, x) &= \int_{H_{C}} V_{1 \mid 0, t}(\Pi_{N} x_{1}, \Pi_{N} x_{0}, x) \mathbb{P}_{1 \mid 0}(\dd{x_{1}}, x_{0}).\notag
    \end{align}

    We define the sequence of functions \[f_{N}(x_{1}) \coloneqq \Pi_{N} x_{1} \frac{1}{Z_{1 \mid 0, t, N}(x_{0}, x)}\exp(- V_{1 \mid 0, t}(\Pi_{N} x_{1}, \Pi_{N} x_{0}, x)),\] and \[f(x_{1}) \coloneqq x \frac{1}{Z_{1 \mid 0, t}(x_{0}, x)}\exp(-V_{1 \mid 0, t}(x_{1}, x_{0}, x)),\] for fixed \(x_{0}\) and  \(x\). To show convergence, we appeal to the Vitali convergence theorem \citep{walnut2011vitali}, which  states that if the sequence of functions \(f_{N}\) is pointwise-convergent to \(f\) and uniformly integrable, then the integral of the functions also converges to the integral of \(f\). We proceed in two steps.

    \paragraph{Step 1: Pointwise Convergence} The numerator \(\Pi_{N} x_{1} \exp(-V_{1 \mid 0, t}(\Pi_{N} x_{1}, \Pi_{N} x_{0}, x))\) is clearly pointwise convergent to \(x_{1} \exp(-V_{1 \mid 0, t}(x_{1}, x_{0}, x))\) since for any fixed \(x_{1} \in H_{C}\), the projection \(\Pi_{N} x_{1}\) converges to \(x_{1}\) in \(H_{C}\)-norm, and \(V_{1 \mid 0, t, x}\) is continuous in all of its inputs. Hence, it remains to show convergence of the sequence of normalising constants \(Z_{1 \mid 0, t, N}(x_{0}, x)\).

    To this end, we apply the dominated convergence theorem to show that
    \[
      \lim\limits_{N \to \infty} \int_{H_{C}} \exp(-V_{1 \mid 0}(\Pi_{N} x_{1}, \Pi_{N} x_{0}, x)) \mathbb{P}_{1 \mid 0}(x_{0}) = \lim\limits_{N \to \infty} \int_{H_{C}} \exp(-V_{1 \mid 0}(x_{1}, x_{0}, x)) \mathbb{P}_{1 \mid 0}(\dd{x_{1}}, x_{0}).
    \]
    Since \(\Phi\) is strongly convex, the integrand is uniformly bounded. By the dominated convergence theorem, \(\lim\limits_{N \to \infty} Z_{1 \mid 0, t, N}(x_{0}, x) = Z_{1 \mid 0, t}(x_{0}, x)\).

    Finally, since the normalising constant is nonzero for any \(N\) and converges to a non-zero value, the functions \(f_{N}(x_{1})\) are pointwise convergent to \(f(x_{1})\).

    \paragraph{Step 2: Uniform Integrability}
    A sufficient condition for uniform integrability is that there exists a uniform bound on the expected squared norm of sequence of the functions \(f_{N}\) \citep[][Theorem 3.5]{billingsley2013convergence}:
    \begin{equation}
      \int_{H_{C}} \norm{\Pi_{N} x_{1}}_{H_{C}}^{2} \frac{1}{Z_{1 \mid 0, t, N}^{2}(x_{0}, x)} \exp(-2 V_{1 \mid 0, t}(\Pi_{N} x_{1}, \Pi_{N} x_{0}, x)) \mathbb{P}_{1 \mid0}(\dd{x_{1}}, x_{0}). \label{eqn:dc2}
    \end{equation}
    Since the squared normalising factors \(Z^{2}_{1 \mid 0, t, N}(x_{0}, x)\) are positive for all \(N\) and converge to a positive value and  \(\Phi\) is strongly convex, the dominated convergence theorem  applies again, and it follows that the sequence of integrals in \Cref{eqn:dc2} is convergent and therefore bounded. Hence, the sequence of functions \(f_{N}\) is uniformly integrable.

    Since the sequence of functions \(f_{N}\) is pointwise convergent and uniformly integrable, it follows that their integrals, by definition equal to the approximate posterior means \(m_{1 \mid 0, t, N}(x_{0}, x)\), converge to the true posterior mean \(m_{1 \mid 0, t}(x_{0}, x)\).
  \end{proof}
\end{lemma}

  %
  %
  %
  %
  %
\end{proof}

\subsection{Proof of \Cref{lem:lipchitz-conditions}\ref{setting:manifold}} \label{prf:prp:manifold}

\paragraph{Proposition \ref{lem:lipchitz-conditions}\ref{setting:manifold}.}
{\itshape
    Suppose the law \(\mu_{1}\) of the target data \(x_{1}\) is supported on a bounded subset of \(H_{C}\), that is, there exists a scalar \(R < \infty\) where \(\norm{x_{1}}_{H_{C}} < R\), \(\mu_{1}\)-almost surely. Then the map \(x \mapsto f(t, x_{0}, x)\) is Lipschitz continuous with respect to the \(H_{C}\)-norm. Specifically, for each \(t \in (0, 1)\) and \(x_{0}, x \in H\), the following inequality holds:
    \[
      \norm{f(t, x_{0}, x) - f(t, x_{0}, y)}_{H_{C}} \leq L(t) \norm{x - y}_{H_{C}},
    \]
    where the Lipschitz constant \(L(t)\) is:
    \[
      L(t) =  \mathop{\underset{}{\max}}\qty{\abs{\frac{\dot{\gamma}(t)}{\gamma(t)} - \frac{\varepsilon}{\gamma^{2}(t)}}, \abs{\dot{\beta}(t) - \beta(t)\qty(\frac{\dot{\gamma}(t)}{\gamma(t)} - \frac{\varepsilon}{\gamma^{2}(t)})} \frac{R^{2} \beta(t)}{\gamma^{2}(t)}}.
    \]
}
\begin{proof}
  We follow a similar overarching argument to \Cref{prf:prp:bayes}. Again, the expression \Cref{eqn:driftreexpressed} means it is sufficient to consider Lipschitz-continuity of \(x\mapsto \mathop{\mathbb{E}}\qty[x_{1} \mid x_{0}, x_{t} = x]\). Bounded support in \(H_{C}\)-norm allows us to greatly simplify our arguments, avoiding a Galerkin-type projection argument and directly providing our proof in infinite dimensions.


  As in \Cref{prf:prp:bayes}, we let \(\mu_{1 \mid 0, t}(\cdot \mid x_{0}, x)\) denote the posterior law of \(x_{1}\), conditional on \(x_{0}\) and \(x_{t} = x\). This time however, for each \(t \in (0, 1)\) we let the reference measure be \(\operatorname{N}(\alpha(t) x_{0}, \gamma^{2}(t) C)\). Note that the Cameron-Martin space of \(\gamma^{2}(t)C\) is identical to that of \(C\), equipped with an inner product scaled by \(\frac{1}{\gamma^{2}(t)}\). Since \(\beta(t) x_{1}\) is almost-surely in \(H_{C}\), and hence also the Cameron-Martin space of \(\gamma^{2}(t)C\), \(H_{\gamma^{2}(t)C}\), the measure \(\mu_{1 \mid 0, t}(\cdot \mid x_{0}, x)\) is absolutely continuous with respect to the reference measure:
  \begin{align*}
    \dv{\mu_{1 \mid 0, t}(\cdot \mid x_{0}, x)}{\operatorname{N}(\alpha(t) x_{0}, \gamma^{2}(t) C)}{}(x_{1}) &= \frac{1}{Z_{1 \mid 0, t}(x_{0}, x)}\exp(-V_{1 \mid 0, t}(x_{1}, x_{0}, x)), \\
    \text{ where } V_{1 \mid 0, t}(x_{1}, x_{0}, x) &= \frac{1}{\gamma^{2}(t)} \norm{\alpha(t) x_{0} + \beta(t) x_{1} - x}_{H_{C}}^{2},
  \end{align*}
  and \(Z_{1 \mid 0, t}(x_{0}, x) \coloneqq \int_{H_{C}} \exp(-V_{1 \mid 0, t}(x_{1}, x_{0}, x)) \operatorname{N}(\dd{x_{1}}; \alpha(t) x_{0}, \gamma^{2}(t) C)\) is a normalising constant. We define \(m_{t}(x_{0}, x)\) as the posterior mean:
  \[
    m_{t}(x_{0}, x) \coloneqq \mathop{\mathbb{E}_{\mu_{1 \mid 0, t}(\cdot \mid x_{0}, x)}}\qty[x_{1}] = \int_{H_{C}} x_{1} \mu_{1 \mid 0, t}(\dd{x_{1}} \mid x_{0}, x).
  \]
  Following an approach analogous to that given in the proof to \Cref{lem:frechetf}, we take the Fréchet derivative in the direction \(h \in H_{C}\) and again arrive at a covariance:
  \[
    D_{x} m_{t}(x_{0}, x)[h] = \frac{\beta(t)}{\gamma^{2}(t)} \mathop{\mathbb{E}_{\mu_{1 \mid 0, t}(\cdot \mid x_{0}, x)}}\qty[ (x_{1} - m_{t}(x_{0}, x))\ev{x_{1} - m_{t}(x_{0}, x), h}_{H_{C}}]
  \]
  Taking the norm in \(H_{C}\) and applying the Cauchy-Schwarz inequality, we have
  \[
    \norm{D_{x} m_{t}(x_{0}, x)[h]}_{H_{C}} \leq \frac{\beta(t)}{\gamma^{2}(t)} \mathop{\mathbb{E}_{\mu_{1 \mid 0, t}(\cdot \mid x_{0}, x)}}\qty[\norm{x_{1} - m_{t}(x_{0}, x)}_{H_{C}}^{2}] \norm{h}_{H_{C}}
  \]
  Using the fact that \(0 \leq \mathop{\mathbb{E}}\qty[\norm{x_{1} - m_{t}(x_{0}, x)}_{H_{C}}^{2}] = \mathop{\mathbb{E}}\qty[\norm{x_{1}}_{H_{C}}^{2}] - \norm{m_{t}(x_{0}, x)}^{2}\) and \(\norm{x_{1}^{2}}_{H_{C}} \leq R^{2}\) almost surely, we conclude
  \[
    \norm{D_{x} m_{t}(x_{0}, x)[h]}_{H_{C}} \leq \frac{R^{2} \beta(t)}{\gamma^{2}(t)} \norm{h}_{H_{C}}.
  \]
  Finally, we apply the mean-value inequality \citep[][Theorem 2.1.19]{berger1977nonlinearity} and conclude that \(m_{t}(x_{0}, x)\) is Lipschitz in \(H_{C}\)-norm with Lipschitz constant at most \(\frac{R^{2}\beta(t)}{\gamma^{2}(t)}\):
  \[
    \norm{m_{t}(x_{0}, x) - m_{t}(x_{0}, y)}_{H_{C}} \leq \frac{R^{2}\beta(t)}{\gamma^{2}(t)} \norm{x - y}_{H_{C}}.
  \]
  Substituting this into \Cref{eqn:driftreexpressed} gives the Lipschitz constant for the overall mapping \(x \mapsto f(t, x_{0}, x)\). This concludes the proof.
\end{proof}

  \begin{remark}\label{rem:hc}
    Both settings \ref{setting:bayes} and \ref{setting:manifold} involve the essential assumption that  data \(x_{1}\) is supported on the Cameron-Martin space \(H_{C}\). This ensures that the measures corresponding to the process under different data realizations are equivalent, not mutually singular (by the Cameron-Martin and Feldman-Hajek theorems; \citealp{bogachev1998gaussian,stuart2010inverse}).

    Intuitively, the restriction of \(x_{1}\) to \(H_{C}\) is a smoothness assumption that confines realisations of \(x_{1}\) to a class of functions that are fundamentally smoother than typical realisations of the noise \(\gamma^{2}(t) z\). This ensures that the Gaussian measures corresponding to translations of scaled-noise \(\gamma^{2}(t)\) by different candidates \(x_{1}', x_{1}''\) are always equivalent, allowing for an expression of the posterior measure as a well-defined density with respect to some reference measure. Otherwise, there would exist no reference measure with respect to which the posterior measure has a density, complicating the analysis of Lipschitz continuity.
  \end{remark}

\subsection{Proof of \Cref{thm:exist}} \label{prf:thm:exist}

\restatethmexist*

\begin{proof}
  We begin by addressing the behaviour of the drift and its associated Lipschitz constant, \(L(t)\), at the initial time \(t = 0\). The drift coefficient \(f(t, x_{0}, x)\) is defined via conditional expectations of the stochastic interpolant \(x_{t}\), conditioned on \(x_{0}\) and \(x_{t} = x\).

  At the specific instant \(t = 0\), the conditioning events are only consistent if \(x = x_{0}\). Consequently, at time \(0\), the drift \(f(0, x_{0}, x)\) is only well-defined where \(x_{0} = x\). This is satisfied by the initial condition \(X_{0} = x_{0}\) for the CB-SDE (\ref{eqn:cbsde}). However, the Lipschitz condition is a statement about the behaviour of the drift under perturbations, i.e., comparing \(f(t, x_{0}, x)\) and \(f(t, x_{0}, y)\) for \(x \neq y\). Since the drift is not defined for such perturbations at \(t = 0\), the Lipschitz condition is only meaningful for \(t > 0\). Therefore, for the purpose the arguments below, we  extend the function \(L(t)\) to be continuous on the entire closed interval \([0, \overline{t}]\). Without loss of generality, we define \(L(0) := \lim_{t \to 0^{+}} L(t)\). This is justified because the value of the drift at a single point in time does not affect the SDE's solution.
  \paragraph{Step 1: Partitioning of Time Domain}

  With the above remark, we proceed with the proof assuming that \(L(t)\) is continuous and therefore bounded on the compact interval \([0, \overline{t}]\). Hence, it is possible to create a finite partition \(0 = \tau_{0} < \tau_{1} < \tau_{2} < \cdots < \tau_{k} < \cdots < \tau_{K} = {\overline{t}}\) of \([0, {\overline{t}}]\) with \(K < \infty\) such that
  \[
    q_{k} \coloneqq (\tau_{k} - \tau_{k-1}) \sup_{t \in [\tau_{k-1}, \tau_{k}]} {L}(t) < 1, \quad \text{ for all } k = 1, \ldots, K.
  \]

  \paragraph{Step 2: Existence of Strong Solutions}

  For each \(k = 1, \ldots, K\), consider the Banach space \(B_{k}\) of all continuous, \(H_{C}\)-valued functions on \([\tau_{k-1}, \tau_{k}]\) equipped with the following norm:
  \[\norm{Y}_{B_{k}} \coloneqq \sup_{t \in [\tau_{k-1}, \tau_{k}]} \norm{Y(t)}_{H_{C}}.\]
  To argue existence of a strong solution to the CB-SDE on \([0, {\overline{t}}]\), we will apply Banach's fixed point theorem inductively and piecewise on the intervals \(\qty[\tau_{k-1}, \tau_{k}]\) and pathwise for all events \(\omega\) in the sample space \(\Omega\), to build a solution \({X}_{t}\) on \([0, \overline{t}]\).

  Fix any event \(\omega \in \Omega\), so that \(x_{0}(\omega)\) and \({W}_{t}(\omega)\) are respectively the outcomes of the random variable \(x_{0}\) and the Wiener process at time \(t\), and define \({X}_{0}(\omega) \coloneqq x_{0}(\omega)\). Furthermore, let
  \[
    \widetilde{W}_{k, t} \coloneqq \int_{\tau_{k-1}}^{t} \sqrt{2\varepsilon }\dd{ {W}_{s}}.
  \] We proceed by induction: for each \(k = 1, \ldots, K\), having defined \({X}_{\tau_{k-1}}(\omega)\), we define the mapping \(\Psi_{k, \omega} : B_{k} \to B_{k}\) as follows. For any \(Y \in B_{k}\),
  \begin{equation}
    (\Psi_{k, \omega} Y)(t) = \int_{\tau_{k-1}}^{t} {f}\qty(s, x_{0}(\omega), {X}_{\tau_{k-1}}(\omega) + \widetilde{W}_{k, s}(\omega) + Y(s)) \dd{s}, \quad \text{ for all } t \in [\tau_{k-1}, \tau_{k}]. \label{eqn:banach-iteration}
  \end{equation}

  For any \(Y, Y' \in B_{k}\), we have
  \par\noindent
  \resizebox{\linewidth}{!}{
    \begin{minipage}{\linewidth}
      \begin{align*}
        &\norm{\Psi_{k, \omega} Y - \Psi_{k, \omega} Y'}_{B_{k}} = \sup_{t \in [\tau_{k-1}, \tau]} \norm{ (\Psi_{k, \omega} Y - \Psi_{k, \omega} Y')(t) }_{H_{C}} \\
        &\qquad\leq \int_{\tau_{k-1}}^{\tau_{k}} \norm{ {f}\qty(s, x_{0}(\omega), {X}_{\tau_{k-1}}(\omega) + \widetilde{W}_{k, s}(\omega) + Y(s)) - {f}\qty(s, x_{0}(\omega), {X}_{\tau_{k-1}}(\omega) + \widetilde{W}_{k, s}(\omega) + Y'(s))}_{H_{C}} \dd{s}\\
        &\qquad\leq (\tau_{k} - \tau_{k-1})\sup_{t \in [\tau_{k-1}, \tau]} \qty[{L}(t) \norm{Y(t) - Y'(t)}_{H_{C}}] \\
        &\qquad\leq (\tau_{k} - \tau_{k-1})\sup_{t \in [\tau_{k-1}, \tau]} {L}(t) \sup_{t \in [\tau_{k-1}, \tau]} \norm{Y(t) - Y'(t)}_{H_{C}} \\
        &\qquad= q_{k} \norm{Y - Y'}_{B_{k}},\\
      \end{align*}
  \end{minipage}}
  \par\noindent
  where \(q_{k} < 1\) by construction of the interval. By Banach's fixed point theorem, it follows that there exists a unique \(Y^{*} \in B_{k}\) such that \(\Psi_{k, \omega} Y^{*} = Y^{*}\).

  For every \(t \in [\tau_{k-1}, \tau_{k}]\), we let \({X}_{t}(\omega) \coloneqq {X}_{\tau_{k-1}}(\omega) +  \widetilde{W}_{k, t}(\omega) + Y^{*}(t)\) for all \(t \in [\tau_{k-1}, \tau_{k}]\). Substituting this definition into the fixed point identity \(\Psi_{k, \omega} Y^{*} = Y^{*}\), we have
  \begin{align*}
    {X}_{t}(\omega) - {X}_{\tau_{k-1}}(\omega) - \widetilde{W}_{k, t}(\omega) &= \int_{\tau_{k-1}}^{t} {f}(s, x_{0}(\omega), {X}_{s}(\omega)) \dd{s} \\
    \implies {X}_{t}(\omega) &= {X}_{\tau_{k-1}}(\omega) +  \int_{\tau_{k-1}}^{\tau_{k}} {f}(s, x_{0}(\omega) {X}_{s}(\omega)) \dd{s} + \int_{\tau_{k-1}}^{t} \sqrt{2\varepsilon } \dd{ {W}(\omega)},
  \end{align*}
  which is the integral form of the CB-SDE (\ref{eqn:cbsde}), expressed pathwise with the chosen probability event \(\omega \in \Omega\) and defined on the interval \(t \in [\tau_{k-1}, \tau_{k}]\).

  Since \(\omega\) was chosen arbitrarily, we may repeat this process for every \(\omega \in \Omega\) to build a stochastic process \({X}_{t}\) on the interval \(t \in [\tau_{k-1}, \tau_{k}]\). Now that we have a definition of \({X}_{\tau_{k}}(\omega)\), we may repeat the inductive step for \(k \leftarrow k+1\). This builds a stochastic process \({X}_{t}\) on the entire desired interval \(t \in [ 0, {\overline{t}}]\).

  It remains to check that \({X}_{t}\) is \({\mathbb{F}}\)-adapted on \([0, {\overline{t}}]\). Again, employing induction, we may observe that \(X_{0} = x_{0}\) is by definition \(\mathcal{F}_{0}\)-measurable. Then, for each \(k = 1, \ldots, K\), we are given that \(X_{\tau_{k-1}}\) is \(\mathcal{F}_{ \tau_{k-1}}\)-measurable. We can view every contraction-mapping iteration as if it were applied for all \(\omega \in \Omega\) simultaneously. Suppose the initial guesses \(Y_{\omega} \in B_{k}\) are such that \(Y_{\omega}(t)\) is \(\mathcal{F}_{t}\)-measurable as a function of \(\omega\), for all \(t \in [\tau_{k-1}, \tau_{k}]\). Each application of the contraction mapping, \((\Psi_{k,\omega}(Y_{\omega}))(t)\), is also \(\mathcal{F}_{t}\)-measurable as a function of \(\omega\), since the integrand in \Cref{eqn:banach-iteration} is the composition of a continuous function with an \(\mathcal{F}_{t}\)-measurable function. Hence, every time we perform a Banach iteration, the outcome at time \(t \in [\tau_{k-1}, \tau]\) is \(\mathcal{F}_{t}\)-measurable. Since \(\sigma\)-fields are closed under countable pointwise limits, it follows that \(Y^{*}_{\omega}(t)\) and thus \({X}_{t}(\omega)\) are \(\mathcal{F}_{t}\)-measurable for all \(t \in [\tau_{k-1}, \tau_{k}]\). Repeating the induction for all steps up to \(k = K\) ensures that \({X}_{t}(\omega)\) is \(\mathcal{F}^{*}_{t}\)-measurable for all \(t \in [0, {\overline{t}}]\) and hence \({X}_{t}\) is \({\mathbb{F}}\)-adapted on \([0, {\overline{t}}]\). This concludes the proof.
\end{proof}

\subsection{Proof of \Cref{thm:uniq}} \label{prf:thm:uniq}
\paragraph{Theorem \ref{thm:uniq}.} {
\itshape

    Assume the same Lipschitz conditions of \Cref{thm:exist}.

    Let \(\qty{e_{n}}_{n=1}^{\infty}\) be an orthonormal basis of eigenvectors for the covariance operator \(C\), and let \(H_{N}\) be the subspace of \(H_{C}\) spanned by \(\qty{e_{1}, \ldots, e_{N}}\). We denote by \(P_{N}\) the orthogonal projection operator from \(H\) into \(H_{N}\).

    Suppose the distribution \(\mu_{1}\) of target data \(x_{1}\) is such that the projections \(\ev{x_{1}, e_{n}}\) are mutually independent random variables for different indices \(n\). Then, the solution to the CB-SDE (\ref{eqn:cbsde}) is unique.
}

\begin{proof}
  As in the proof in \Cref{prf:thm:exist} for the existence of strong solutions, we assume without loss of generality that \(L(t)\) is continuous on \([0, \overline{t}]\). Let \({X}_{t}\) and \({X}_{t}'\) be two strong solutions for the same initial condition, \({X}_{0} = {X}_{0}' = x_{0}\) and driven by the same Wiener process \({W}_{t}\) on \([0, {\overline{t}}]\). \textit{A priori}, it is not guaranteed that \(\norm{ {X}_{t} - {X}_{t}'}_{H_{C}} < \infty\) since \({X}_{t} - {X}_{t}'\) may not be in \(H_{C}\). However, for each \(N \geq 1\), it is guaranteed that the projected difference \(P_{N}( {X}_{t} - {X}_{t}') \in H_{C}\) since the range of \(P_{N}\) is by definition a subspace of \(H_{C}\) due to \(C\) being a positive-definite operator. It therefore holds that
  \begin{align*}
    \dv{t}P_{N}\qty({X}_{t} - {X}_{t}') &= P_{N}\qty({f}(t, x_{0}, {X}_{t}) - {f}(t, x_{0}, {X}_{t}')) \\
    \implies \dv{t} \norm{ P_{N} \qty({X}_{t} - {X}_{t}')}_{H_{C}} &\leq \norm{P_{N}\qty({f}(t, x_{0}, {X}_{t}) - {f}(t, x_{0}, {X}_{t}'))}_{H_{C}} \\
    &\leq {L}(t)\norm{P_{N}\qty( {X}_{t} - {X}_{t}')}_{H_{C}}.
  \end{align*}

  Since \({L}(t)\) is real-valued and continuous on \([0, {\overline{t}}]\), we may now apply Grönwall's inequality \citep[][Theorem 1.2.2]{ames1997inequalities} to the quantity \(\norm{P_{N}\qty({X}_{t} - {X}_{t}')}_{H_{C}}\) as a function of \(t\):
  \[
    \norm{P_{N}\qty( {X}_{t} - {X}_{t}')}_{H_{C}} \leq \norm{ P_{N}\qty( {X}_{0} - {X}_{0}')}_{H_{C}} \exp(\int_{0}^{ {\overline{t}}} {L}(t) \dd{t} ).
  \]
  Since by definition \({X}_{0} = {X}_{0}' = x_{0}\), so \({X}_{0} - {X}_{0}' = 0\), it follows that
  \[
    \norm{P_{N}\qty({X}_{t} - {X}_{t}')}_{H_{C}} = 0,
  \]
  for all \(t \in [0, {\overline{t}}]\). Since this equality is true for every \(N \geq 1\), we pass \(N \to \infty\). It follows that \(\norm{ {X}_{t} - {X}_{t}'}_{H_{C}} = 0\) and therefore
  \[
    {X}_{t} = {X}_{t}', \text{ for all } t \in [0, \overline{t}].
  \]
  This concludes the proof.
\end{proof}

\subsection{Proof of \Cref{lem:tc}}\label{prf:lem:tc}
\restatelemtc*

\begin{proof}
  Let \(h(t) \coloneqq \varepsilon/\gamma(t)\) and \(C \coloneqq \int_0^1 h(\sigma) d\sigma\), which is finite and positive by assumption. We define a new time variable \(s(t)\) by
  \[ s(t) \coloneqq \frac{1}{C} \int_{0}^{t} h(\sigma) \dd{\sigma}. \]
  Since \(h(t) > 0\) for \(t \in (0, 1)\), \(s(t)\) is a strictly increasing, continuously differentiable bijection from \([0, 1]\) to itself. We define the time-change \(\theta(t)\) as its inverse, \(\theta(t) \coloneqq s^{-1}(t)\). This is also strictly increasing and continuously differentiable on \((0, 1)\), with derivative \(
  \dot{\theta}(t) = \frac{C}{\varepsilon} \gamma(\theta(t))\).
  Substituting this into the definition of \(\hat{c}(t)\), we have
  \[
    \hat{c}(t) = \frac{C}{\varepsilon} \qty(\dot{\gamma}(\theta(t)) \gamma(\theta(t))) - C.
  \]
  By assumption, the function \(\dot{\gamma}(t)\gamma(t)\) has a continuous extension to \([0, 1]\). Since \(\theta(t)\) is also continuous on \([0, 1]\), their composition \(\dot{\gamma}(\theta(t))\gamma(\theta(t))\) is continuous on \([0, 1]\). Therefore, the final expression for \(\hat{c}(t)\) is continuous on \([0, 1]\). This implies that \(\hat{c}(t)\), initially defined only on \((0, 1)\), has well-defined finite limits as \(t \to 0^{+}\) and \(t \to 1^{-}\), and thus admits a continuous extension to the compact interval \([0, 1]\).
\end{proof}

\subsection{Proof of Proposition \ref{prp:loss}}\label{prf:prp:loss}

\begin{restatable}{proposition}{restateprploss}\label{prp:loss}
Assume $t\sim\mathrm{U}(0,1)$, $w_\varphi,w_\eta\in L^{1}([0,1])$ with $w_\varphi,w_\eta>0$, and $\mathbb{E}\|x_0\|_{H}^{2}+\mathbb{E}\|x_1\|_{H}^{2}<\infty$. Define the (weighted) regression objectives
\begin{align}
\mathcal{E}_\varphi(\theta)
&:=\mathbb{E}\!\left[w_\varphi(t)\,\|\varphi_\theta(x_t,t)-\varphi(x_t,t)\|_{H}^{2}\right],\\
\mathcal{E}_\eta(\phi)
&:=\mathbb{E}\!\left[w_\eta(t)\,\|\eta_\phi(x_t,t)-\eta(x_t,t)\|_{H}^{2}\right].
\end{align}
Then, $\mathcal{L}_\varphi(\theta)-\mathcal{E}_\varphi(\theta)$ and $\mathcal{L}_\eta(\phi)-\mathcal{E}_\eta(\phi)$ are finite constants.
\end{restatable}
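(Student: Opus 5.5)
The plan is the standard orthogonal-projection (bias--variance) decomposition in the Hilbert space $L^{2}(\Omega;H)$, carried out conditionally on $t$ and then integrated against $w_\varphi(t)\,\dd t$ (resp.\ $w_\eta(t)\,\dd t$). Fix $t\in[0,1]$. Write the regression target as $Y_t\coloneqq\dot\alpha(t)x_0+\dot\beta(t)x_1$, and let $\mathcal{G}_t\coloneqq\sigma(x_t)$; in the conditional-bridge setting we take $\mathcal{G}_t\coloneqq\sigma(x_0,x_t)$, and the argument is identical. The target field is $\varphi(t,\cdot)$, defined through $\varphi(t,x_t)=\mathbb{E}[Y_t\mid\mathcal{G}_t]$.

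The first step is to check that everything is well defined in $L^{2}(\Omega;H)$. Since $\alpha,\beta\in C^1([0,1])$, the derivatives $\dot\alpha,\dot\beta$ are bounded, so $\mathbb{E}\|Y_t\|_H^2\le 2\|\dot\alpha\|_\infty^2\mathbb{E}\|x_0\|_H^2+2\|\dot\beta\|_\infty^2\mathbb{E}\|x_1\|_H^2\eqqcolon K<\infty$ uniformly in $t$. For $z$, we have $\mathbb{E}\|z\|_H^2=\operatorname{Tr}(C)<\infty$ because $C$ is trace class. The $H$-valued conditional expectation (Bochner) is the orthogonal projection of $L^2(\Omega;H)$ onto the closed subspace $L^2(\Omega,\mathcal{G}_t;H)$. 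This uses only separability of $H$ and no Lebesgue reference measure, which is exactly the point emphasised in the text. Doob--Dynkin then gives a measurable $\varphi(t,\cdot)$ with $\varphi(t,x_t)=\mathbb{E}[Y_t\mid\mathcal{G}_t]$, and similarly for $\eta$. We also need joint measurability in $(t,\omega)$ so that integrating over $t$ makes sense; I would obtain it by treating $(t,x_t)$ as the conditioning variable on the product space $[0,1]\times\Omega$.

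The second step is the expansion. For any model with $\mathbb{E}\|\varphi_\theta(x_t,t)\|_H^2<\infty$,
\begin{equation*}
\|Y_t-\varphi_\theta\|_H^2=\|Y_t-\varphi\|_H^2+\|\varphi-\varphi_\theta\|_H^2+2\langle Y_t-\varphi,\varphi-\varphi_\theta\rangle_H .
\end{equation*}
The cross term has zero expectation. Indeed, $\varphi-\varphi_\theta$ is $\mathcal{G}_t$-measurable and square integrable, so $\mathbb{E}\langle Y_t-\varphi,\,g\rangle_H=\mathbb{E}\langle\mathbb{E}[Y_t\mid\mathcal{G}_t]-\varphi,\,g\rangle_H=0$. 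This is justified by the "pull-out" property $\mathbb{E}[\langle Y,g\rangle\mid\mathcal{G}]=\langle\mathbb{E}[Y\mid\mathcal{G}],g\rangle$. I would prove that property by expanding in an orthonormal basis $\{e_n\}$, applying the scalar result to each coordinate, and using dominated convergence, which is legitimate because $\sum_n|\langle Y,e_n\rangle\langle g,e_n\rangle|\le\|Y\|\|g\|\in L^1$. Multiplying by $w_\varphi(t)$ and integrating over $t$ then gives $\mathcal{L}_\varphi(\theta)=\mathcal{E}_\varphi(\theta)+\mathcal{C}_\varphi$ with
\begin{equation*}
\mathcal{C}_\varphi=\int_0^1 w_\varphi(t)\,\mathbb{E}\|Y_t-\varphi(t,x_t)\|_H^2\,\dd t .
\end{equation*}
This constant does not depend on $\theta$, and $0\le\mathcal{C}_\varphi\le K\|w_\varphi\|_{L^1}<\infty$, since conditional expectation is an $L^2$-contraction. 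The same computation with $Y_t$ replaced by $z$ gives $\mathcal{C}_\eta\le\operatorname{Tr}(C)\|w_\eta\|_{L^1}$.

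The main obstacle is not the algebra but the bookkeeping needed to make the subtraction $\mathcal{L}-\mathcal{E}$ meaningful. First, if $\mathcal{E}_\varphi(\theta)=\infty$, both sides are infinite. I would therefore state the identity for models with $\mathcal{E}_\varphi(\theta)<\infty$ and handle the other case by the convention that both losses are then $+\infty$. Second, the zero-mean cross term must be integrable in $t$. I would justify this via Cauchy--Schwarz, bounding it by $\sqrt{\mathcal{C}_\varphi\,\mathcal{E}_\varphi(\theta)}$, so that Fubini applies on $[0,1]\times\Omega$. The vector-valued conditional expectation and pull-out property are where the infinite-dimensional care actually enters, and I would spend the most detail there.
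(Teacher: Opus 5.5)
Your proposal is correct and follows essentially the same route as the paper. Both expand the squared error around the conditional expectation. Both kill the cross term by the tower property, since the model error is measurable with respect to the conditioning $\sigma$-algebra. Both bound the leftover constant by $\|w\|_{L^1}$ times $\operatorname{Tr}(C)$, or times $2\bigl(\|\dot\alpha\|_\infty^2\mathbb{E}\|x_0\|_H^2+\|\dot\beta\|_\infty^2\mathbb{E}\|x_1\|_H^2\bigr)$.

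The difference is only technical. The paper truncates with the finite-rank projections $\Pi_N$ and passes to the limit by monotone convergence. You instead work directly with orthogonality in $L^2(\Omega;H)$, using the basis expansion only to justify the pull-out property. You also explicitly handle cross-term integrability, the $\mathcal{E}=\infty$ case, and joint measurability in $t$, all of which the paper leaves implicit.
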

\begin{proof}
See \Cref{prf:prp:loss}.
\end{proof}

\begin{proof}
We treat the denoiser and velocity losses separately.

\paragraph{Denoiser loss.}
Let $\{e_n\}_{n=1}^\infty$ be an orthonormal basis of $H$, let $\Pi_N$ denote the orthogonal projection onto $\mathrm{span}\{e_1,\dots,e_N\}$, and define
\[
\mathcal{L}_{\eta,N}(\phi)
:=\mathbb{E}\!\left[w_\eta(t)\,\|\Pi_N(\eta_\phi(x_t,t)-z)\|_H^2\right].
\]
Writing $\eta_\phi(x_t,t)-z=(\eta_\phi(x_t,t)-\eta(x_t,t))+(\eta(x_t,t)-z)$ and expanding the square yields
\[
\mathcal{L}_{\eta,N}(\phi)
=
\mathbb{E}\!\left[w_\eta(t)\,\|\Pi_N(\eta_\phi(x_t,t)-\eta(x_t,t))\|_H^2\right]
+
C_{\eta,N},
\]
since the cross term vanishes by iterated expectation: $\eta_\phi(x_t,t)-\eta(x_t,t)$ is $\sigma(x_t)$-measurable and
$\mathbb{E}[\eta(x_t,t)-z\mid x_t]=0$.
Here
\[
C_{\eta,N}:=\mathbb{E}\!\left[w_\eta(t)\,\|\Pi_N(\eta(x_t,t)-z)\|_H^2\right].
\]
Letting $N\to\infty$ and using monotone convergence gives
\[
\mathcal{L}_\eta(\phi)
=
\mathbb{E}\!\left[w_\eta(t)\,\|\eta_\phi(x_t,t)-\eta(x_t,t)\|_H^2\right]
+
C_\eta
=
\mathcal{E}_\eta(\phi)+C_\eta,
\]
where
\[
C_\eta=\mathbb{E}\!\left[w_\eta(t)\,\|\eta(x_t,t)-z\|_H^2\right]
\le \mathbb{E}[w_\eta(t)]\,\mathbb{E}\|z\|_H^2<\infty,
\]
since $w_\eta\in L^1([0,1])$ and the trace-class noise $z$ has finite second moment in $H$.

\paragraph{Velocity loss.}
An identical argument applies to the velocity objective. Writing
\[
\dot\alpha(t)x_0+\dot\beta(t)x_1-\varphi_\theta(x_t,t)
=
(\varphi(x_t,t)-\varphi_\theta(x_t,t))
+
(\dot\alpha(t)x_0+\dot\beta(t)x_1-\varphi(x_t,t)),
\]
projecting onto $\mathrm{span}\{e_1,\dots,e_N\}$, and expanding the squared norm yields
\[
\mathcal{L}_{\varphi,N}(\theta)
=
\mathbb{E}\!\left[w_\varphi(t)\,\|\Pi_N(\varphi_\theta(x_t,t)-\varphi(x_t,t))\|_H^2\right]
+
C_{\varphi,N},
\]
where the cross term again vanishes since
$\mathbb{E}[\dot\alpha(t)x_0+\dot\beta(t)x_1-\varphi(x_t,t)\mid x_t]=0$.
Passing to the limit $N\to\infty$ gives
\[
\mathcal{L}_\varphi(\theta)=\mathcal{E}_\varphi(\theta)+C_\varphi,
\qquad
C_\varphi=\mathbb{E}\!\left[w_\varphi(t)\,\|\dot\alpha(t)x_0+\dot\beta(t)x_1-\varphi(x_t,t)\|_H^2\right].
\]
Finally, since $\alpha,\beta\in C^1([0,1])$, $\dot\alpha,\dot\beta$ are bounded, and hence
\[
C_\varphi
\le \mathbb{E}\!\left[w_\varphi(t)\,\|\dot\alpha(t)x_0+\dot\beta(t)x_1\|_H^2\right]
\le 2\,\mathbb{E}[w_\varphi(t)]
\big(\|\dot\alpha\|_\infty^2\,\mathbb{E}\|x_0\|_H^2
+
\|\dot\beta\|_\infty^2\,\mathbb{E}\|x_1\|_H^2\big)
<\infty.
\]
This concludes the proof.
\end{proof}

\subsection{Proof of Theorem \ref{thm:w2}}\label{prf:thm:w2}
\restatetheoremw*
\begin{proof}
  Let \(\widetilde{X}_{t}\) be the solution to the CB-SDE when using the approximate velocity \(\widetilde{\varphi}\) and denoiser \(\widetilde{\eta}\) to form the approximate drift \(\widetilde{f}(t, x_{0}, x) \coloneqq \widetilde{\varphi}(t, x_{0}, x) + c(t) \widetilde{\eta}(t, x_{0}, x)\). From \Cref{thm:cbsde}, we know that the law of \(X_{t}\) is equal to \(\mu_{t \mid 0}(\dd{x_{t}}, x_0)\). Hence,  we couple \(\widetilde{X}_{t}\) with \(X_{t}\) via the same \(C\)-Wiener process \(W_{t}\) and analyse the expected squared distance between these processes.

  We consider the TC-CB-SDE (\cref{eqn:tccbsde}), which has a unique solution \(\hat{X}_{t} = X_{\theta(t)}\) on the interval \([0, \theta^{-1}(\overline{t})]\) when driven by the Wiener process \(\hat{W}_{t} = W_{\theta(t)}\). Let \(\hat{\widetilde{X}}_{t} = \widetilde{X}_{\theta(t)}\) be the time-changed approximate counterpart. Applying Ito's lemma (\citealp[Theorem 4.2]{da2014stochastic}) to \(E(t) \coloneqq \mathop{\mathbb{E}}\qty[\norm{ \hat{\widetilde{X}}_{t} - \hat{X}_{t}}^{2}_{H}]\), we have
  \[
    \dv{E(t)}{t} = 2 \mathop{\mathbb{E}}\qty[\ev{ \hat{\widetilde{X}}_{t} - \hat{X}_{t}, \widetilde{f}(\theta(t), x_0, \hat{\widetilde{X}}_{t}) - f(\theta(t), x_0, \hat{X}_{t}) }_{H} \dot{\theta}(t)].
  \]
  We add \(0 = -\widetilde{f}(\theta(t), x_0, \hat{X}_{t}) + \widetilde{f}(\theta(t), x_0, \hat{X}_{t})\) to the second argument of the inner product to split this into two terms:
  \begin{align}
    \dv{E(t)}{t} &= \overbrace{2 \mathop{\mathbb{E}}\qty[\ev{ \hat{\widetilde{X}}_{t} - \hat{X}_{t}, \widetilde{f}(\theta(t), x_0, \hat{\widetilde{X}}_{t}) -\widetilde{f}(\theta(t), x_0, \hat{X}_{t})}_{H} \dot{\theta}(t)]}^{\text{propagation error term}} \notag \\
    &\phantom{=}\, + \underbrace{2 \mathop{\mathbb{E}}\qty[ \ev{ \hat{\widetilde{X}}_{t} - \hat{X}_{t}, \widetilde{f}(\theta(t), x_0, \hat{X}_{t})  - f(\theta(t), x_0, \hat{X}_{t})}_{H} \dot{\theta}(t)]}_{\text{training error term}}. \label{eqn:b}
  \end{align}
  We place a bound on each term using the Cauchy-Schwarz inequality. For the propagation error term, we make use of the fact that \(\widetilde{\varphi}\) and \(\widetilde{\eta}\) are Lipschitz continuous, so that for each \(t \in [0, \theta^{-1}(\overline{t})]\) and \(x_{0} \in H\), the mapping
  \[
    x \mapsto \widetilde{f}(\theta(t), x_{0}, x) \dot{\theta}(t) = \dot{\theta}(t)\bigl(\widetilde{\varphi}(\theta(t), x_{0}, x) + c(\theta(t)) \widetilde{\eta}(\theta(t), x_{0}, x)\bigr)
  \]
  is Lipschitz continuous in \(H\)-norm with Lipschitz constant \(\widetilde{L}(t)=\mathop{\underset{}{\max}}\qty{1, {c}(\theta(t))} \dot{\theta}(t)L(\theta(t))\). Since by \Cref{lem:tc}, both \(c(\theta(t))\dot\theta(t)\) and \(\dot{\theta}(t)\) are continuous on the compact interval \([0, \theta(\overline{t})]\), the uniform bound
  \begin{align}
      \tilde C(t)=\mathop{\underset{s \in [0, \theta({t})]}{\max}}L(\theta(s))\dot{\theta}(s)(1 + \abs{{c}(\theta(s))})
  \end{align}
  is finite, for any $u\in[0,t]$. It follows that
  \begin{equation}
    2 \mathop{\mathbb{E}}\qty[ \ev{ \hat{\widetilde{X}}_{u} - \hat{X}_{u}, \widetilde{f}(\theta(u), x_0, \hat{\widetilde{X}}_{u}) -\widetilde{f}(\theta(u), x_0, \hat{X}_{u})}_{H} \dot{\theta}(u) ]\leq 2 C(t) E(u). \label{eqn:petb}
  \end{equation}
  For the training error term, we have
  \begin{align}
    &2 \mathop{\mathbb{E}}\qty[\ev{ \hat{\widetilde{X}}_{u} - \hat{X}_{u}, \widetilde{f}(\theta(u), x_0, \hat{X}_{u})  - f(\theta(u), x_0, \hat{X}_{u})}_{H} \dot{\theta}(u)] \notag \\
    &\qquad \leq 2 \mathop{\mathbb{E}}\qty[\norm{ \hat{\widetilde{X}}_{u} - \hat{X}_{u}}_{H} \norm{ \qty(\widetilde{f}(\theta(u), x_0, \hat{X}_{u}) - f(\theta(u), x_0, \hat{X}_{u})) \dot{\theta}(u)}_{H}]\notag \\
    &\qquad \leq E(u) + \mathop{\mathbb{E}}\qty[\norm{ \qty(\widetilde{f}(\theta(u), x_0, \hat{X}_{u}) - f(\theta(u), x_0, \hat{X}_{u})) \dot{\theta}(u)}^{2}_{H}] \notag \\
    &\qquad \leq E(u) + \dot{\theta}(u)^{2}\qty( \mathbb{E}\left[\norm{\tilde\varphi(\theta(u), \hat X_u, x_0)-\varphi(\theta(u), \hat X_u, x_0)}^2_H\right]+
    c^2(\theta(u))\mathbb{E}\left[\norm{\tilde\eta(u, \hat X_u, x_0)-\eta(u,\hat X_u, x_0)}^2_H\right]). \label{eqn:tetb}
  \end{align}
  Substituting our bounds on the propagation error term (Equation \ref{eqn:petb}) and training error term (Equation \ref{eqn:tetb}) into \Cref{eqn:b}, we have
  \[
    \dv{E(u)}{u}  \leq (2\tilde C(t) + 1) E(u) + \dot{\theta}^2(u)\qty( \mathbb{E}\left[\norm{\tilde\varphi(\theta(u), \hat X_u, x_0)-\varphi(\theta(u), \hat X_u, x_0)}^2_H\right]+
    c^2(\theta(u))\mathbb{E}\left[\norm{\tilde\eta(u, \hat X_u, x_0)-\eta(u, \hat X_u, x_0)}^2_H\right]).
  \]
  Let
  \begin{align}
      A(t) &= \mathbb{E}\left[\norm{\tilde\varphi(t, X_t, x_0)-\varphi(t, X_t, x_0)}^2_H\right] \quad\text{and}\quad
      B(t) = \mathbb{E}\left[\norm{\tilde\eta(t, X_t, x_0)-\eta(t, X_t, x_0)}^2_H\right],
  \end{align}
  then applying Grönwall's inequality to \(E(t)\) we have:
  \begin{align}
      E(t) = \mathop{\mathbb{E}}\qty[\norm{ {\widetilde{X}}_{\theta(t)} - {X}_{\theta(t)}}^{2}_{H}] &\leq e^{2\tilde C(t)+1}\int_0^t \dot\theta(u)^2(A(\theta(u))+c^2(\theta(u))B(\theta(u))du.
  \end{align}
  Therefore, the Wasserstein-2 distance between $\tilde\mu_{t|0}(\cdot,x_0)$ and $\mu_{t|0}(\cdot,x_0)$ is upper bounded by
  \begin{align}
      \mathcal{W}_2^2(\tilde\mu_{t|0}, \mu_{t|0})\leq \mathop{\mathbb{E}}\qty[\norm{ {\widetilde{X}}_{t} - {X}_{t}}^{2}_{H}]=E(\theta^{-1}(t))\leq e^{2\tilde C(\theta^{-1}(t))+1} \int_0^{\theta^{-1}(t)} \dot\theta^2(u)(A(\theta(u))+c^2(\theta(u))B(\theta(u))du
      ,
  \end{align}
  Changing variables again, we obtain
  \begin{align}
      \mathcal{W}_2^2(\tilde\mu_{t|0}, \mu_{t|0})\leq e^{C(t)}\int_0^t
      \bigl(A(s)+c^{2}(s)B(s)\bigr)\dot\theta(\theta^{-1}(s))ds
      ,
  \end{align}
  where $C(t)=\max_{s\in[0,t]}2 L(s)\dot\theta(s)(1+|c(\theta(s))|) + 1$. 
  This concludes the proof.
\end{proof}

\section{Instantiation of Framework} \label{sec:dp}
We provide a concrete setup of the framework and algorithms that we use to solve PDE-based forward and inverse problems.

Our theory points to a practical tradeoff in choosing the noise covariance \(C\). The noise must be rough enough for the data \(x_{0}, x_{1}\) to lie in the Cameron-Martin space \(H_{C}\) (see \Cref{rem:hc}), but making it too rough creates a harder learning problem, since \(x_{t}\) becomes less informative and training targets are rougher, and can increase the Lipschitz constant \(\widetilde{L}\) of the learned networks which weakens the error bound in \Cref{thm:w2}. We hypothesise a ``sweet spot'' for noise regularity. Regarding the schedule \(\gamma(t)\), 
\begin{equation}
  \lim\limits_{t \to 0} \frac{t}{\gamma(t)} = \lim\limits_{t \to 1} \frac{1-t}{\gamma(t)} = 0,\label{eqn:integrability}
\end{equation}

\subsection{Hilbert Spaces} \label{sec:hs}
Throughout, we work with data that lie in the Hilbert space of square-integrable functions on a compact Euclidean subset: this will be \(L^{2} = L^{2}(D)\) where \(D = [0, 1]\) for functions defined on a unit interval \(D = [0, 1]^{2}\) for functions on the unit square. We equip this with the canonical \(L^{2}\)-inner product:
\begin{equation}
  \ev{f, g}_{L^{2}} \coloneqq \int_{D} f(x) g(x) \dd{x}.
\end{equation}

We work with two distinct settings for source and target data. For  \textit{homogeneous data}, where \(x_{0}\) and \(x_{1}\) represent similar physical quantities and can be naturally modelled on the same function space (e.g. predicting future fluid vorticity from a past one), we define  \(H \coloneqq L^{2}\). 
For \textit{heterogeneous data}, where \(x_{0}\) and \(x_{1}\) are different physical quantities (e.g. permeability and pressure fields), we use a \textit{product space} \(H = L^{2} \times L^{2}\) for stronger inductive bias. We define new variables  \(x'_{0} = (x_{0}, 0)\) and \(x'_{1} = (0, x_{1})\) and interpolate between the laws of \(x_{0}'\) and \(x_{1}'\), where \(0\) represents the zero function on \(D\).  Our theory holds since the product space \(H\) is still a Hilbert space.


\subsection{Noise}
When \(H = L^{2}\), we define noise \(z\) as samples from a Gaussian process \citep{williams2006gaussian} with zero mean and the radial basis function (RBF) kernel \(k\):
\begin{equation}
  z \sim \mathrm{GP}(0, k), \text{ where } k(x, y) \coloneqq \exp(-\frac{1}{2\ell} \norm{x - y}^{2}_{D}).
\end{equation} This is equivalent to sampling \(z\) from a Gaussian measure \(N(0, C)\) on \(H\) where the covariance operator is trace-class and given by
\begin{equation}
  Cf(x) \coloneqq \int_{D} f(y) k(x, y) \dd{y}.
\end{equation}
We vary length scale \(\ell > 0\) to investigate the impact of noise smoothness on  performance.

In the product space setting where \(H = L^{2} \times L^{2}\), we define the noise \(z\) as a pair of independent samples from this process, i.e. \(z = (z_{0}, z_{1})\) where each component \(z_{0}, z_{1} \overset{\text{i.i.d.}}{\sim} \mathrm{GP}(0, k)\). Formally, this is equivalent to sampling from a GP with matrix-valued kernel \(K\):
\begin{equation}
  z \sim \mathrm{GP}(0, K), \text{ where } K((x_{0}, x_{1}), (y_{0}, y_{1})) \coloneqq \mqty[k(x_{0},y_{0}) & 0 \\ 0 & k(x_{1},y_{1})].
\end{equation}
Henceforth, we continue to use notation pertaining to the homogeneous data setting, but all statements are equally valid for heterogeneous data.

\subsection{Tradeoff Between Noise Roughness and Learnability}\label{sec:dps}

Our theory points to a practical tradeoff in choosing the noise covariance \(C\). The noise must be rough enough for the data \(x_{0}, x_{1}\) to lie in the Cameron-Martin space \(H_{C}\) (see \Cref{rem:hc}), but making it too rough creates a harder learning problem since \(x_{t}\) becomes less informative and training targets are rougher, and can increase the Lipschitz constant \(\widetilde{L}\) of the learned networks which weakens the error bound in \Cref{thm:w2}. This hypothesis of a ``sweet spot'' for noise regularity is confirmed in our experiments below.

The fact that the error bound (Equation \ref{eqn:w2}) increases exponentially in \(\widetilde{L}\) suggests the importance of network regularisation: capacity is optimally matched with the complexity of the data and noise. We employ implicit regularisation below, leaving explicit control over network smoothness to future work.

\subsection{Choice of \texorpdfstring{\(\gamma(t)\)}{γ(t)}} Following \citet{albergo2023stochasticinterpolantsunifyingframework}, we define
\begin{equation}
  \gamma(t) \coloneqq \sqrt{bt(1-t)}.
\end{equation}
This satisfies the conditions  required by \Cref{lem:tc} to permit existence of a suitable regularising time change function \(\theta\). We  provide  conditions on  \(\theta\) for this choice of \(\gamma \) such that the time-changed coefficient \(\hat{c}(t) = c(\theta(t)) \dot{\theta}(t)\) (Equation \ref{eqn:chat}) on the denoiser is finite on \([0, 1]\):

  \begin{restatable}{lemma}{restatelemthetaconditions}\label{lem:thetaconditions}
    Let \(\gamma(t) = \sqrt{bt(1-t)}\). A strictly increasing, bijective, continuously differentiable time change function \(\theta(t)\) on \([0, 1]\) is a valid change-of-time ensuring that \(\hat{c}(t)\) is finite on \([0, 1]\) if and only if \(\theta(t)\) satisfies the following conditions.
    \begin{enumerate}
      \item\label{lem:thetaconditions:1} \(\lim\limits_{t \to 1^{-}} \frac{\dot{\theta}(t)}{2(1-t)} < \infty \); and
      \item\label{lem:thetaconditions:2} \(\lim\limits_{t \to 0^{+}} \frac{\dot{\theta}(t)}{2t} < \infty \) if \(\varepsilon \neq \frac{b}{2}\).
    \end{enumerate}
  \end{restatable}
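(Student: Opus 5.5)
Substituting \(\gamma(t)=\sqrt{bt(1-t)}\) into \(c(t)=\dot\gamma(t)-\varepsilon/\gamma(t)\) gives \(\dot\gamma(t)=\frac{b(1-2t)}{2\gamma(t)}\), and hence
\begin{equation*}
  c(t)=\frac{\tfrac{b}{2}(1-2t)-\varepsilon}{\sqrt{b\,t(1-t)}} .
\end{equation*}
The plan is to read off the singular behaviour of this expression at each endpoint and then ask which \(\theta\) make \(\hat c(t)=c(\theta(t))\dot\theta(t)\) bounded. Since \(\theta\) is a strictly increasing \(C^1\) bijection of \([0,1]\), we have \(\theta(0)=0\) and \(\theta(1)=1\), and \(\hat c\) is continuous on \((0,1)\). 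Finiteness on \([0,1]\) therefore reduces to finiteness of the two one-sided limits \(t\to0^+\) and \(t\to1^-\), which I treat separately.

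\textbf{Endpoint \(t\to1^-\).} The numerator tends to \(-\tfrac b2-\varepsilon\), which is strictly negative because \(\varepsilon>0\), so it never vanishes. Hence \(|c(s)|\sim (\tfrac b2+\varepsilon)/\sqrt{b(1-s)}\) as \(s\to1\), and \(\hat c(t)\) is bounded near \(1\) if and only if \(\dot\theta(t)/\sqrt{1-\theta(t)}\) is bounded. To turn this into the stated condition, set \(u(t)=1-\theta(t)\), so that \(\dot u=-\dot\theta\). Boundedness of \(\dot\theta/\sqrt{u}\) is equivalent to boundedness of \(\tfrac{d}{dt}\sqrt{u(t)}=-\dot\theta/(2\sqrt u)\). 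This in turn forces \(\sqrt{u(t)}\lesssim 1-t\), i.e.\ \(u(t)\lesssim(1-t)^2\), and hence \(\dot\theta(t)\lesssim\sqrt u\lesssim 1-t\), which is condition~\ref{lem:thetaconditions:1}. For the converse, if \(\dot\theta(t)\le K(1-t)\), then integrating gives \(u(t)\le \tfrac K2(1-t)^2\). That is an upper bound on \(u\), but bounding \(\dot\theta/\sqrt u\) needs a \emph{lower} bound on \(u\), so the converse needs more work.

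\textbf{Endpoint \(t\to0^+\).} The numerator tends to \(\tfrac b2-\varepsilon\). If \(\varepsilon=\tfrac b2\), the numerator equals \(-bt\), so \(c(s)=-bs/\sqrt{bs(1-s)}\to0\) and \(\hat c\) is bounded for any \(C^1\) \(\theta\); no condition is needed. If \(\varepsilon\neq\tfrac b2\), the same argument as at \(t=1\), now with \(u=\theta\), yields condition~\ref{lem:thetaconditions:2}.

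\textbf{Main obstacle.} The difficult part is the converse direction: showing that a linear decay bound on \(\dot\theta\) implies boundedness of \(\dot\theta/\sqrt{u}\), which needs \(u\) bounded below. I would first try to interpret the limits in the statement as limits that exist (finite, possibly zero). If \(\dot\theta(t)/(1-t)\to\ell\) with \(\ell>0\), then l'Hôpital gives \(u\sim\tfrac\ell2(1-t)^2\), so \(\dot\theta/\sqrt u\to\sqrt{2\ell}\), which is finite. The degenerate case \(\ell=0\) (the \(\exp\)-type schedules in \Cref{tbl:cot}) needs care. There I would use the Cauchy mean value theorem applied to \(\dot\theta^2\) and \(u\), or simply note that a limit of \(\dot\theta/\sqrt u\) can exist there only in the sense guaranteed by a regular-variation argument. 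Failing that, I would state the converse under the assumption that \(\dot\theta(t)/(1-t)\) has a positive limit. This matches the paper's use of the lemma, since \(\theta\) with \(\dot\theta\propto t(1-t)\) or \(\propto 1-t\) falls under that assumption.
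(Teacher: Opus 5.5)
\paragraph{Verdict.} Your reduction is sound and your necessity argument is correct, but the converse for \(\ell=0\), which you left open, cannot be proved as stated. Here is why, and how to repair it.

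\paragraph{What is right.} Your reduction and substitutions (\(\sqrt{1-\theta}\) near \(1\), \(\sqrt{\theta}\) near \(0\)) are the same as the paper's. Your necessity argument via the mean value theorem is correct. If ``finite on \([0,1]\)'' is read as in \Cref{lem:tc} (continuous extension), it also shows that the limit in condition 1 exists: \(\dot\theta/\sqrt{1-\theta}\to L\) forces \(\sqrt{1-\theta}/(1-t)\to L/2\), hence \(\dot\theta/(1-t)\to L^{2}/2\).

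\paragraph{The gap, and why it cannot be closed.} For \(C^1\) time changes the converse fails in general when \(\ell=0\). So no Cauchy-mean-value or regular-variation argument will prove it without an extra hypothesis. Here is a counterexample. Let \(s=1-t\), \(s_n=2^{-n}\), and let \(\kappa\) be a smooth bump with \(0\le\kappa\le1\), \(\kappa(0)=1\), \(\operatorname{supp}\kappa\subset[-\frac12,\frac12]\). Near \(t=1\) set
\[
  \dot\theta(1-s)=s^{19}+\sum_{n\ge n_0}s_n^{2}\,\kappa\bigl((s-s_n)/s_n^{10}\bigr),
\]
and complete \(\theta\) to a \(C^1\) increasing bijection of \([0,1]\) with \(\theta(t)\propto t^{2}\) near \(0\).
\begin{itemize}
\item The bumps are disjoint and \(\dot\theta(1-s)\le s^{19}+4s^{2}\), so \(\dot\theta(t)/(1-t)\to0\) and both conditions hold.
\item However, \(1-\theta(1-s_n)=\int_0^{s_n}\dot\theta(1-r)\dd{r}\le s_n^{20}+\sum_{m\ge n}s_m^{12}\le 3s_n^{12}\), while \(\dot\theta(1-s_n)\ge s_n^{2}\).
\item Hence \(\dot\theta/\sqrt{1-\theta}\ge s_n^{-4}/\sqrt{3}\) at \(t_n=1-s_n\). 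Since \(|c(\theta)|\sqrt{1-\theta}\to(\frac b2+\varepsilon)/\sqrt{b}>0\), this gives \(|\hat c(t_n)|\to\infty\).
\end{itemize}
The narrow spikes of \(\dot\theta\) carry almost no mass, which is exactly how the lower bound on \(1-\theta\) you needed fails.

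\paragraph{The paper's proof has the same hole.} With \(y=\sqrt{1-\theta}\), it equates \(\lim(-\dot y)\) with \(\lim y/(1-t)\), and \(\lim(1-\theta)/(1-t)^{2}\) with \(\lim\dot\theta/(2(1-t))\), by L'H\^opital's rule. That rule only transfers a limit from the derivative quotient to the quotient. Going from condition 1 to finiteness of \(\hat c\) uses the first equality backwards. The necessity half uses the second one backwards; your factorisation \(\dot\theta=(\dot\theta/\sqrt{u})\sqrt{u}\) avoids that problem.

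\paragraph{What holds, and a repair.} Under the stated hypotheses, your reduction gives the true characterisation: \(\hat c\) is bounded near \(1\) if and only if \(\sqrt{1-\theta}\) is Lipschitz there. Condition 1 is necessary but not sufficient for this. Your idea of comparing \(\dot\theta^{2}\) with \(1-\theta\) does give the converse once you add \(\theta\in C^{2}\) near the endpoints, say \(|\ddot\theta|\le M\) on \([1-\delta,1]\).
\begin{itemize}
\item Condition 1 forces \(\dot\theta(1)=0\), hence \(\dot\theta(t)\le M(1-t)\).
\item Take \(h=\dot\theta(t)/M\le 1-t\) in \(0\le 1-\theta(t+h)\le 1-\theta(t)-\dot\theta(t)h+\frac M2h^{2}\). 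This gives \(\dot\theta(t)^{2}\le 2M(1-\theta(t))\), i.e.\ \(\dot\theta/\sqrt{1-\theta}\le\sqrt{2M}\).
\item The mirrored argument, with \(\theta\) in place of \(1-\theta\), handles \(t=0\) under condition 2.
\end{itemize}
Every time change in \Cref{tbl:cot} is \(C^\infty\) on \([0,1]\), reading the \(\exp\)-type ones with the intended negative exponent. So this repaired lemma covers all of them. That includes the \(\exp\)-type schedules with \(\ell=0\), which your positive-limit fallback leaves out.
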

  \begin{proof}
  We have:
  \begin{equation}
    \hat{c}(t) = \frac{b - 2\varepsilon}{2 \sqrt{b \theta(t)(1-\theta(t))}} \dot{\theta}(t) - \sqrt{\frac{b \theta(t)}{1-\theta(t)}} \dot{\theta}(t) \label{eqn:chatme}
  \end{equation}
  By inspection, \(\hat{c}(t)\)  is finite on all \((0, 1)\) since \(\theta(t) \in (0, 1)\) and \(\dot{\theta}(t)\) is continuous. We analyse the limits at the endpoints by considering each case in turn.

  First, when \(\varepsilon = \frac{b}{2}\), the first term  of \Cref{eqn:chatme} vanishes, reducing analysis to the second term. This is finite on \([0, 1)\), so we need only consider the limit  \(t  \to 1^{-}\). This is finite if and only if \(\lim\limits_{t \to 1^{-}} q(t)\) is finite, where \(q(t) \coloneqq \frac{\dot{\theta}(t)}{\sqrt{1-\theta(t)}}\). Using a substitution \(y(t) = \sqrt{1-\theta(t)}\), we have \(q(t) = -2 \dot{y}(t)\) and hence
  \begin{align}
    \lim\limits_{t \to 1^{-}} q(t) < \infty &\iff \lim\limits_{t \to 1^{-}} -\dot{y}(t) = \lim\limits_{t \to 1^{-}} \frac{y(t)}{1 - t} < \infty\\
    &\iff \lim\limits_{t \to 1^{-}} \frac{y^{2}(t)}{(1-t)^{2}}  = \lim\limits_{t \to 1^{-}} \frac{1-\theta(t)}{(1-t)^{2}} = \lim\limits_{t \to 1^{-}}  \frac{\dot{\theta}(t)}{2(1-t)} < \infty.
  \end{align}
  Therefore, when \(\varepsilon = \frac{b}{2}\), \(\hat{c}(t)\) has a finite continuous extension on \([0, 1]\) if and only if condition (\ref{lem:thetaconditions:1}) in \Cref{lem:thetaconditions} holds.

  Now we consider the case \(\varepsilon \neq \frac{b}{2}\). We now additionally require the first term in \Cref{eqn:chatme} to have finite limits at the endpoints. In the limit \(t \to 1^{-}\), the first term is finite if and only if  \(\lim\limits_{t \to 1^{-}} \frac{\dot{\theta}(t)}{\sqrt{1-\theta(t)}} \) is finite, which is the same condition as discussed above. It remains to check the limit \(t \to 0^{+}\). Here, the first term is finite if and only if \(\lim\limits_{t \to 0^{+}} r(t) < \infty\), where \(r(t) \coloneqq \frac{\dot{\theta}(t)}{\sqrt{\theta(t)}}\). Considering a substitution \(u(t) \coloneqq \sqrt{\theta(t)}\), we have \(r(t) = 2 \dot{u}(t)\) and hence
  \begin{align}
    \lim\limits_{t \to 0^{+}}  r(t) < \infty &\iff \lim\limits_{t \to 0^{+}}  \dot{u}(t) = \lim\limits_{t \to 0^{+}}  \frac{u(t)}{t} < \infty \\
    &\iff \lim\limits_{t \to 0^{+}}  \frac{u^{2}(t)}{t^{2}} = \lim\limits_{t \to 0^{+}} \frac{\theta(t)}{t^{2}} = \lim\limits_{t \to 0^{+}} \frac{\dot{\theta}(t)}{2t} < \infty.
  \end{align}
  Hence, when \(\varepsilon \neq \frac{b}{2}\), \(\hat{c}(t)\) has a finite continuous extension on \([0, 1]\) if and only if both conditions (\ref{lem:thetaconditions:1}) and (\ref{lem:thetaconditions:2}) in \Cref{lem:thetaconditions} hold. This concludes the proof.
  \end{proof}

    Intuitively, conditions (\ref{lem:thetaconditions:1}) and (\ref{lem:thetaconditions:2}) require the \textit{rate of time change} to vanish at least linearly at the endpoints. This controlled deceleration is precisely what resolves the singularity.
    
For SDE inference, we set \(\varepsilon = \frac{b}{2}\), which simplifies the original coefficient to \(c(t) = -\sqrt{\frac{bt}{1-t}}\). This resolves the singularity at \(t=0\) leaving only the singularity at  \(t=1\) to be  managed by the time change. Therefore, only condition (\ref{lem:thetaconditions:1}) is required to ensure the time-changed coefficient \(\hat{c}(t)\) is finite on \([0, 1]\).

\subsection{Choice of \texorpdfstring{\(\alpha(t)\)}{α(t)} and \texorpdfstring{\(\beta(t)\)}{β(t)}}
Following work on rectified flow \citep{liu2022flow}, we choose \(\alpha(t)  \coloneqq 1-t\) and \(\beta(t) \coloneqq t\), which makes the signal \(\alpha(t) x_{0} + \beta(t) x_{1}\) a linear interpolation between source and target data. This straight line path has two advantages:
\begin{enumerate}
  \item The instantaneous velocity is \(\varphi(t, x_{t}) = \mathop{\mathbb{E}}\qty[x_{1} - x_{0} \mid x_{0}, x_{t}]\), which simplifies the training task as the network \(\widetilde{\varphi}\) targets the constant vector \(x_{1} - x_{0}\).
  \item The lack of curvature in the trajectory means that the ODE component is easier to solve during inference, by reducing discretisation error of numerical solvers.
\end{enumerate}

\section{Algorithms}\label{sec:algos}
\subsection{Training and Sampling}
\Cref{alg:training,alg:sampling} outline  concrete implementation details we  use for training and sampling.  During training, time \(t\) is sampled uniformly on \([0, 1]\).  During sampling, we solve the time-changed CB-SDE (\ref{eqn:tccbsde}) using an arbitrary SDE solver (or ODE solver if \(\varepsilon = 0\)) with update function \texttt{OneStep}. In our applications, we employ 2nd-order Euler-Maruyama \citep{sauer2011numerical} which we find to empirically outperform other solvers (see \Cref{sec:solver}). On a computer, functions must be discretised, so we introduce a grid \(\mathcal{G} = \{s^{(j)}\}_{j=1}^{J}\) of sensor locations, where each sensor \(s^{(j)}\) is an input point on the domain \(D\). In calculating the losses, the squared \(H\)-norm becomes a summation over the sensors, leading to standard mean-squared error loss.

    \begin{algorithm}[H]
      \caption{Training}\label{alg:training}
      \begin{algorithmic}[1]%
        \REQUIRE Paired training data \(\mathcal{D} \coloneqq \{(x_{0}^{(i)}, x_{1}^{(i)})\}_{i=1}^{I} \sim \mu\), batch size \(B\), discretisation grid \(\mathcal{G} \coloneqq \{s^{(j)}\}_{j=1}^{J}\) of sensor locations  \(s^{(j)} \in D\)
        \STATE Initialise networks \(\widetilde{\varphi}, \widetilde{\eta}\)
        \WHILE{loss not converged}
        \STATE Sample \(\{z^{(i)}\}_{i=1}^{B}\) evaluated at points in \(\mathcal{G}\), where each \(z^{(i)} \overset{\text{i.i.d.}}{\sim} \operatorname{GP}(0, k)\)
        \STATE Sample  \(\{(x_{0}^{(i)}, x_{1}^{(i)})\}_{i=1}^{B}\) from \(\mathcal{D}\)
        \STATE Sample \(\{t^{(i)}\}_{i=1}^{B} \overset{\text{i.i.d.}}{\sim} \operatorname{U}([0, 1])\)
        \STATE Let interpolant \(x_{t}^{(i)} \leftarrow \alpha(t^{(i)}) x_{0}^{(i)} +\beta(t^{(i)})x_{1}^{(i)} + \gamma(t^{(i)}) z^{(i)}\)
        \STATE Let loss \(\mathcal{L}(\widetilde{\varphi}) \leftarrow \frac{1}{B}\sum_{i=1}^{B}\frac{1}{J} \sum_{j=1}^{J} \norm{\big[\widetilde{\varphi}(t^{(i)}, x^{(i)}_{t}) - \big(\dot{\alpha}(t^{(i)}) x_{0}^{(i)} + \dot{\beta}(t^{(i)}) x_{1}^{(i)}\big)\big](s^{(j)})}^{2} \)
        \STATE Let loss \(
          \mathcal{L}(\widetilde{\eta}) \leftarrow \frac{1}{B} \sum_{i=1}^{B} \frac{1}{J}\sum_{j=1}^{J} \norm{\big[\widetilde{\eta}(t^{(i)}, x_{t}^{(i)}) - z^{(i)}\big](s^{(j)})}^{2},
        \)
        \STATE Perform gradient step on \(\mathcal{L}(\widetilde{\varphi})\) and \(\mathcal{L}(\widetilde{\eta})\).
        \ENDWHILE
      \end{algorithmic}
    \end{algorithm}
    \begin{algorithm}[H]
      \setlength{\abovedisplayskip}{2pt}
      \setlength{\belowdisplayskip}{2pt}
      \caption{Sampling}
      \label{alg:sampling}
      \begin{algorithmic}[1]
        \REQUIRE Test dataset \(\mathcal{T} \coloneqq \{x_{0}^{(i)}\}_{i=1}^{I} \sim \mu_{0}\), number of time steps \(T \geq 1\), trained networks \(\widetilde{\varphi}, \widetilde{\eta}\), parameter \(\varepsilon \geq 0\), time change function \(\theta\), any SDE (or ODE if \(\varepsilon = 0\)) solver with update function \texttt{OneStep}
        \STATE Construct time-changed approximate drift \begin{equation}\hat{\widetilde{f}}(t, x_{0}, x) \leftarrow \widetilde{\varphi}(\theta(t), x_{0}, x)\dot{\theta}(t) + \qty(\dot{\gamma}(\theta(t)) - \frac{\varepsilon}{\gamma(\theta(t))}) \widetilde{\eta}(\theta(t), x_{0}, x)\dot{\theta}(t)\end{equation}
        \STATE Let \(\Delta t \leftarrow \frac{1}{T}\)
        \STATE Let \(X_{0}^{(i)} \leftarrow x_{0}^{(i)}\)
        \WHILE{ \(t \neq 1\) }
        \STATE \(t \leftarrow 0\)
        \STATE \(X_{t + \Delta t}^{(i)} \leftarrow \texttt{OneStep}(t, x_{0}^{(i)}, X_{t}^{(i)}, \hat{\widetilde{f}}, \varepsilon, C, \Delta t)\)
        \STATE \(t \leftarrow t + \Delta t\)
        \ENDWHILE
      \OUTPUT \(\{X_{1}^{(i)}\}_{i=1}^{I}\)
      \end{algorithmic}
    \end{algorithm}

The algorithms are stated for the forward problem, i.e. bridging from source \(x_{0} \sim \mu_{0}\) to conditional target \(\mu_{1 \mid 0}(\dd{x_{1}, x_{0}})\), but analogous results for the reverse problem are given by considering approximations \(\widetilde{\varphi}^{\text{rev}}, \widetilde{\eta}^{\text{rev}}\) as in \Cref{sec:backwards} and initialising \(X_{0}^{(i)} \leftarrow x_{1}^{(i)}\) during sampling (note that the process is also solved forward in time).

An alternative to direct sensor-based discretisation is to project data onto an eigenbasis of \(C\) (a Karhunen-Loeve expansion; \citealp{stark1986probability}). This is akin to \citet{phillips2022spectral}, who consider (finite-dimensional) diffusion models in the spectral domain. However, we avoid this as our neural operator architectures are designed to learn their own optimal spectral representations \citep{li2020fourier,rahman2022u}. A fixed pre-projection would create an unnecessary bottleneck and require a costly eigenvalue problem.


\subsection{Numerical Integration}\label{sec:solver}
\Cref{tbl:solver} compares 1st-and 2nd-order Euler-Maruyama \citep{sauer2011numerical} against the 2nd-order Heun-type solver proposed by \citet{karras2022elucidating} for our 2D datasets. 2nd-order Euler-Maruyama demonstrates superior performance. The underperformance of Heun is reflective of its particular specialism for DMs \citep{karras2022elucidating}, which does not necessarily translate into superior performance for SIs.
\begin{table}[H]
  \centering
  \sisetup{detect-weight, mode=text}
  \setlength{\aboverulesep}{0pt}
  \setlength{\belowrulesep}{0pt}
  \setlength{\heavyrulewidth}{1.5pt}
  \setlength{\lightrulewidth}{0.4pt}
  \setlength{\cmidrulewidth}{0.4pt}
  \renewcommand{\arraystretch}{1.1}
  \begin{tabular}{lS[table-format=1.1] S[table-format=2.1] S[table-format=1.1] S[table-format=2.1]}
    \toprule
    \multirow{2}{*}{\bfseries{Solver}} &  \multicolumn{2}{c}{\bfseries{Darcy Flow (\%)}} & \multicolumn{2}{c}{\bfseries{Navier-Stokes (\%)}} \\

    \cmidrule(r){2-3} \cmidrule(r){4-5}
    & {Forward} & {Inverse} & {Forward} & {Inverse}  \\
    \midrule
    1st-order EM + ODE & 2.0 & 2.8 &  1.2 &  \bfseries 4.6 \\
    1st-order EM + SDE  & 2.3 & 3.4 & 1.6 & 6.3  \\
    \midrule
    2nd-order EM + ODE  & \bfseries 1.9 & \bfseries 2.7 &  \bfseries 1.0 &  \bfseries 4.6 \\
    2nd-order EM + SDE  & 2.3 & 3.7 & 1.4 & 6.1 \\
    \midrule
    2nd-order Heun + ODE  & 2.1 & 2.9 &  1.2 &  \bfseries 4.6 \\
    2nd-order Heun + SDE  & 2.3 & 6.3 & 1.5 & 6.2  \\
    \bottomrule\\[-0.5em]
  \end{tabular}
  \caption{Comparison of different solvers for our proposed infinite-dimension SI framework on 2D forward and inverse tasks. Metrics are in relative \(L^{2}\)-error, except for the inverse task of Darcy flow which is the binary error rate. Best values are \textbf{bold}.}\label{tbl:solver}
\end{table}

\section{Experimental Details}\label{sec:exp-details}
\subsection{Datasets}
For completeness, we provide a detailed overview of the datasets we used in our experiments. 
\subsubsection{1D Darcy Flow} \label{app:1d-darcy}
We follow \citet{ingebrand2025basis} and consider a non-linear variant of Darcy's equation in 1D \citep{whitaker1986flow}. Darcy flow describes fluid flow along a porous medium:
\begin{equation}
  -\dv{w}\qty[\qty(0.2 + \psi^{2}(w)) \dv{\psi(w)}{w}] = u(w), \qquad w \in [0, 1], \psi(0) = \psi(1) = 0,\label{eqn:d1d}
\end{equation}
where \(u \sim \operatorname{GP}\) with RBF kernel and length scale \(0.05\) is a \textit{forcing function} and the solution \(\psi \in L^{2}\) is \textit{pressure}. Intuitively,  \(u(w)\) describes the environmental factors of fluid inflow/outflow at distances \(w \in [0, 1]\) along a unit-length pipe, and \(\psi(w)\) is the corresponding pressure along this pipe at \(w\). The quantity \(0.2 + \psi^{2}(w)\) is a pressure-dependent permeability.

The forward problem is as follows: given a specific forcing function \(u\), we aim to predict the (unique) corresponding pressure solution \(\psi\). Similarly, for the inverse problem, we are given a pressure solution \(\psi\) and aim to predict the forcing function \(u\) which produced that solution.

Using the code from \citet{ingebrand2025basis}, we generate a  dataset of paired data with a 9000/1000 train/evaluation split, discretised on 128 evenly-spaced points on \([0, 1]\).

Computationally, we calculate these norms using Riemann summation over the 128 evenly spaced gridpoints on \([0, 1]\).

\subsubsection{2D PDE Problems} \label{sec:2dsetup}
We use a 49000/1000/1000 train/dev/evaluation split of paired functions discretised on \(64\!\times\!64\) evenly-spaced points on the unit square. 
\paragraph{Darcy Flow} We consider static flow through a porous 2D medium, governed by
\begin{equation}
- \grad \cdot \qty(a(w) \grad \psi(w)) = 1, \qquad w \in (0,1)^{2},
\end{equation}
where analogously to \Cref{eqn:d1d}, the scalar-valued pressure field \(\psi(w)\) is zero at the boundaries. The forcing function is identically \(1\), and \(a\nobreak\sim\nobreak{h}_{\sharp}\nobreak\operatorname{N}(0, \qty(- \Delta + 9 I)^{-2})\) is a binary-valued permeability field. Concretely, we sample according to the Gaussian measure \(\operatorname{N}(0, \qty(-\Delta + 9I)^{-2})\) on the Hilbert space \(L^{2}\), where \(\Delta\) is a Laplacian operator with zero Neumann boundary condition \citep{cheng2005heritage} and \(I\) is the identity operator. Then to obtain \(a\), we push these samples through  \(h\nobreak:\nobreak\mathbb{R}\nobreak\to\nobreak\qty{3,12}\) which returns \(12\) for positive inputs and \(3\) otherwise. Hence,  \(a(w)\) is either \(3\) or \(12\) for any \(w \in D\).

The dataset is formed of pairs \((a, \psi)\): the forward task is to predict pressure solution \(\psi\) given permeability \(a\), while the inverse task is to predict permeability given pressure solution. Since permeability and pressure are distinct physical phenomena, we employ the form of interpolants for heterogeneous data (\(H = L^{2} \times L^{2}\)). 

\paragraph{Navier-Stokes} We additionally consider the Navier-Stokes equations \citep{batchelor2000introduction} describing a scalar-valued vorticity field \(\upsilon(w, \tau) = \grad \times v(w, \tau)\) on the 2D unit square, where \(v(w, \tau)\) is vector-valued fluid velocity at point \(w \in D\) and time  \(\tau \in [0, 1]\), governed by:
\begin{align}
\pdv{\tau} \upsilon(w, \tau) + v(w, \tau) \cdot  \grad \upsilon(w, \tau) &= 0.001 \Delta \upsilon(w, \tau) + u(w),&& \qquad w \in (0, 1)^{2}, t \in (0, 1], \\
\grad \cdot v(w, t) &= 0,&& \qquad w \in (0, 1)^{2}, t \in [0, 1],\\
\upsilon(w, 0) &= a(w), &&\qquad w \in (0, 1)^{2},
\end{align}
where \(0.001\) is the viscosity. The forcing function \(u(w)\) is defined as
\begin{equation}u(w_{1}, w_{2}) \coloneqq 0.1 \sin 2\pi (w_{1} + w_{2}) + 0.1 \cos 2 \pi (w_{1} + w_{2}),\end{equation}
and  initial condition \(a(w)\) is sampled from the Gaussian measure \(\operatorname{N}(0, 7^{\frac{3}{2}}(-\Delta + 49 I)^{-\frac{5}{2}})\) with periodic boundary conditions.

The dataset is formed of pairs \((\upsilon_{0}, \upsilon_{1})\): the forward task is to predict final vorticity \(\upsilon_{1} \coloneqq \upsilon(\cdot, 1)\) given initial vorticity \(\upsilon_{0} \coloneqq \upsilon(\cdot, 0)\), while the inverse task is to predict the initial condition given final vorticity. Since both tasks  predict vorticity, we employ the form of interpolants for homogeneous data (\(H = L^{2}\)), so that both target and source reside in a single channel.

Functions are normalised to have zero mean and unit standard deviation across the dataset. Permeability \(a\) is normalised to take binary values \(+1\) and \(-1\).

\subsection{Hyperparameters} \label{sec:2instantiation}
\Cref{tbl:d2hyp} details the hyperparameters we used for experiments on 2D PDEs. The U-NO architecture hyperparameters are from \citet{yao2025guideddiffusionsamplingfunction}:
this architecture balances high spectral capacity with strong regularisation. Notably, each spectral convolution uses 32 Fourier modes, which for \(64\!\times\!64\)-resolution data gives the operator maximum capacity to represent high-frequency details. To regularise this, the spectral convolution weights are represented via a low-rank tensor decomposition. This reduces the number of parameters by nearly an order of magnitude, constraining network capacity in line with  design guidance in \Cref{sec:dps}.

\begin{table}[H]
  \centering
  \sisetup{detect-weight, mode=text}
  \renewcommand{\arraystretch}{1}
  \begin{tabular}{lc}
    \toprule
    \textbf{Hyperparameter} & \textbf{Value} \\
    \midrule
    Batch size & 96 \\
    Learning rate & \(0.0001\) \\
    Learning rate warmup (steps) & \(17000\) \\
    Training steps & \(9.8\) million \\
    EMA half-life (steps) & 4000 \\
    \midrule
    \(\alpha(t)\) in \(x_{t}\) & \(1-t\) \\
    \(\beta(t)\) in \(x_{t}\) & \(t\) \\
    \(\gamma(t)\) in \(x_{t}\) & \(\sqrt{0.01 t(1-t)}\) \\
    \midrule
    Neural operator & U-NO \citep{rahman2022u} \\
    FFT Modes & 32 \\
    Blocks & 4 \\
    Width multipliers & \([1, 2, 4, 4]\) \\
    Attention resolutions & [8] \\
    Positional embeddings & Yes \\
    Rank & \(0.01\) \\
    Dropout & \(0.13\) \\
    \bottomrule\\[-0.5em]
  \end{tabular}
  \caption{Hyperparameters used in the velocity and denoising networks for Darcy flow and Navier-Stokes on the unit square }\label{tbl:d2hyp}
\end{table}

\section{Additional Figures} \label{app:figures}
In \Cref{fig:2devolution}, we present qualitative results for the 2D Darcy flow and Navier–Stokes problems described earlier. For each task, we show the temporal evolution of predictions under ODE and SDE dynamics.
\begin{figure}\label{fig:2devolution}
\begin{center}
\begin{subfigure}{0.9\linewidth}
  \centering
  \includegraphics[width=\linewidth]{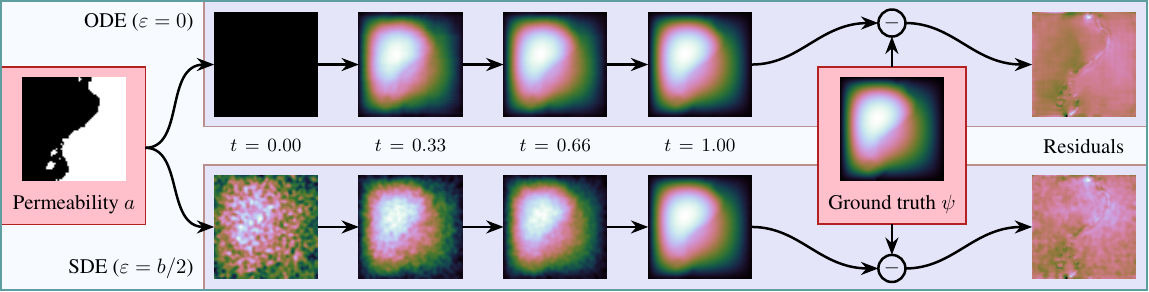}
  \caption{Darcy flow (forward)}\label{fig:2devolution:a}
\end{subfigure}

\vspace{1em}

\begin{subfigure}{0.9\linewidth}
  \centering
  \includegraphics[width=\linewidth]{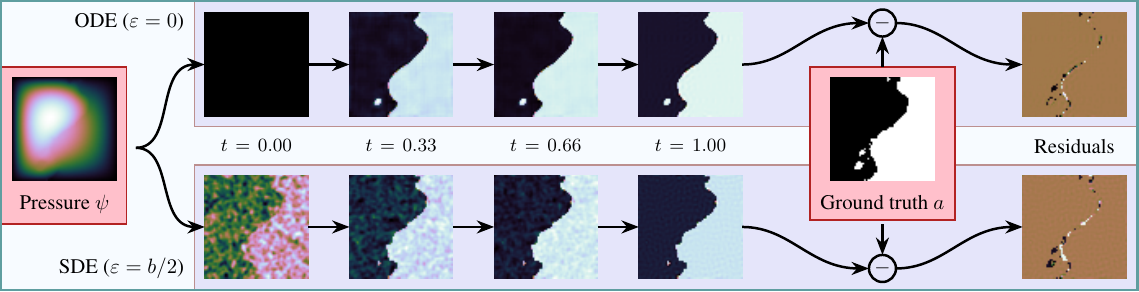}
  \caption{Darcy flow (inverse)}\label{fig:2devolution:b}
\end{subfigure}

\vspace{1em}

\begin{subfigure}{0.9\linewidth}
  \centering
  \includegraphics[width=\linewidth]{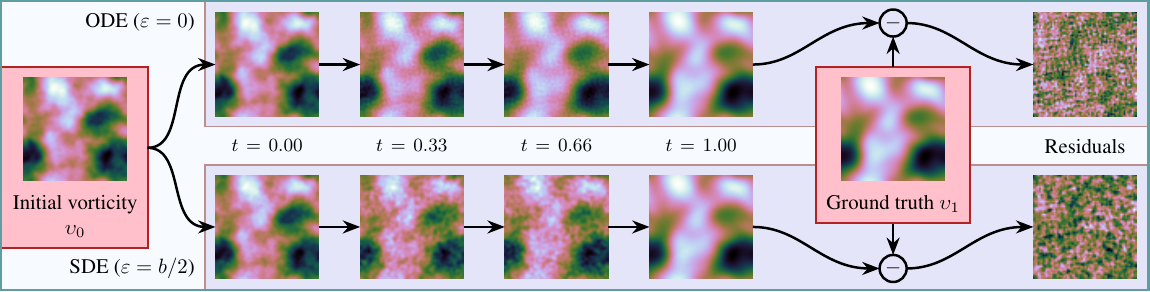}
  \caption{Navier-Stokes (forward)}
\end{subfigure}

\vspace{1em}

\begin{subfigure}{0.9\linewidth}
  \centering
  \includegraphics[width=\linewidth]{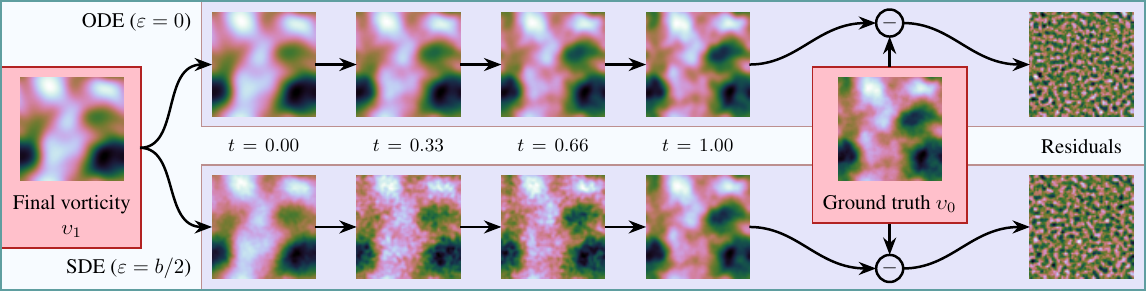}
  \caption{Navier-Stokes (inverse)}
\end{subfigure}
\caption{Evolution of predictions from the ODE (top) and a single sample of the SDE (bottom), for forward and inverse tasks. We randomly select an example from the test set for Darcy flow and Navier-Stokes. We set the RBF length scale \(\ell = 0.02\).} \label{fig:2devolution}
\end{center}
\end{figure}
\end{document}